\documentclass[11pt]{article}

\usepackage{footmisc}
\usepackage[T1]{fontenc}
\usepackage[utf8]{inputenc}
\usepackage{lmodern}
\usepackage{amssymb,amsfonts}
\usepackage{iftex}  
\ifPDFTeX                  
  \usepackage[activate={true,nocompatibility},final]{microtype}
\else                     
  \usepackage[protrusion=true,final]{microtype}
\fi

\usepackage[margin=0.9in]{geometry}

\usepackage{authblk}                    % \author[1]{...}\affil[1]{...} for multi-author

\usepackage{amsmath}
\usepackage{mathtools} 
\usepackage{bm}
\usepackage{nicefrac}
\allowdisplaybreaks         % allow long aligned displays to break across pages

\usepackage{amsthm}
\usepackage{thm-restate}

\usepackage{graphicx}
\graphicspath{{figures/}{./}}                       % find images in figures/
\usepackage{booktabs}                               % \toprule \midrule \bottomrule
\usepackage{multirow,makecell,array}
\usepackage{longtable}                              % multi-page tables (taxonomy appendix)
\usepackage{pdflscape}                              % landscape pages (taxonomy appendix)
\usepackage[font=small,labelfont=bf]{caption}       % restrained, bold-label captions
\usepackage{subcaption}
\usepackage{algorithm}
\usepackage{algpseudocode}                          % algorithmicx pseudocode

\usepackage{enumitem}
\setlist[itemize]{leftmargin=2.2em,itemsep=2pt,topsep=2pt}
\setlist[enumerate]{leftmargin=2.2em,itemsep=2pt,topsep=2pt}

\usepackage{xcolor}
\definecolor{LinkColor}{rgb}{0.10,0.40,0.75}        % internal references
\definecolor{CiteColor}{rgb}{0.70,0.25,0.20}        % citations
\definecolor{UrlColor} {rgb}{0.20,0.50,0.50}        % external URLs
\definecolor{TodoColor}{rgb}{0.80,0.30,0.10}        % draft notes

\usepackage{tikz}
\usetikzlibrary{positioning,calc,arrows.meta,decorations.pathreplacing,patterns,shapes.geometric}

\usepackage[round,sort&compress]{natbib}

\usepackage{url}
\usepackage{hyperref}
\hypersetup{
  colorlinks=true,
  linkcolor=LinkColor,
  citecolor=CiteColor,
  urlcolor=UrlColor,
  breaklinks=true,
  bookmarksnumbered=true,
}
\usepackage{bookmark}                               % robust PDF bookmarks
\usepackage[capitalise,nameinlink,noabbrev,sort&compress]{cleveref}  % \cref / \Cref

\numberwithin{equation}{section}

\newif\ifdraft \draftfalse
\ifdraft
  \usepackage{lineno}\linenumbers
  \newcommand{\todo}[1]{\textcolor{TodoColor}{\textbf{[TODO:}~#1\textbf{]}}}
\else
  \newcommand{\todo}[1]{}        
\fi

\usepackage[scale=0.94]{sourcesanspro}   % sans face for headings/captions only
\definecolor{floodblue}{HTML}{1F4FA8}
\definecolor{harvestgold}{HTML}{B97A0A}
\definecolor{floodtint}{HTML}{F3F7FC}
\definecolor{goldtint}{HTML}{FCF8EE}
\definecolor{inkgray}{HTML}{4D4D4D}

\usepackage{titlesec}
\titleformat*{\section}{\Large\sffamily\bfseries\color{floodblue}}
\titleformat*{\subsection}{\large\sffamily\bfseries\color{floodblue}}
\titleformat*{\subsubsection}{\normalsize\sffamily\bfseries\color{floodblue}}
\titleformat{\paragraph}[runin]{\normalsize\bfseries}{}{0pt}{}
\titlespacing*{\paragraph}{0pt}{1.4ex plus .4ex}{0.9em}

\usepackage{tocloft}

\usepackage{fancyhdr}
\fancyheadoffset{0pt}
\newcommand{\gradientrule}{\begin{tikzpicture}%
  \shade[left color=floodblue, right color=harvestgold]
    (0,0) rectangle (\textwidth, 0.06em);\end{tikzpicture}}
\usepackage[most]{tcolorbox}
\newcommand{\floodboxset}[1]{\tcolorboxenvironment{#1}{enhanced jigsaw, breakable,
  boxrule=0pt, frame hidden, colback=floodtint, arc=1.2mm,
  borderline west={1.5pt}{0pt}{floodblue!70},
  left=7pt, right=7pt, top=4pt, bottom=4pt,
  before skip=11pt, after skip=11pt}}
\newcommand{\goldboxset}[1]{\tcolorboxenvironment{#1}{enhanced jigsaw, breakable,
  boxrule=0pt, frame hidden, colback=goldtint, arc=1.2mm,
  borderline west={1.5pt}{0pt}{harvestgold!75},
  left=7pt, right=7pt, top=4pt, bottom=4pt,
  before skip=11pt, after skip=11pt}}
\AtBeginDocument{%
  \floodboxset{theorem}\floodboxset{lemma}\floodboxset{proposition}%
  \floodboxset{corollary}\floodboxset{fact}\floodboxset{conjecture}%
  \floodboxset{restatable}
  \goldboxset{definition}\goldboxset{remark}\goldboxset{assumption}\goldboxset{example}%
}
\newtcolorbox{statementbox}{enhanced jigsaw, breakable, boxrule=0pt, frame hidden,
  colback=floodtint, arc=1.2mm, borderline west={1.5pt}{0pt}{floodblue!70},
  left=7pt, right=7pt, top=4pt, bottom=4pt, before skip=11pt, after skip=11pt}

\newtcolorbox{poincarebox}{enhanced jigsaw, boxrule=0pt, frame hidden,
  interior style={left color=floodtint, right color=goldtint}, arc=2mm,
  left=9pt, right=9pt, top=5pt, bottom=5pt}

\newcommand{\N}{\mathbb{N}}

\newcommand{\mc}[1]{\mathcal{#1}} 

\newcommand{\defeq}{\coloneqq}      
\DeclarePairedDelimiter{\abs}{\lvert}{\rvert}

\DeclarePairedDelimiter{\set}{\{}{\}}

\DeclarePairedDelimiterX{\inner}[2]{\langle}{\rangle}{#1,#2}   % \inner{u}{v} = <u,v>

\newcommand{\gnd}{U}   
\newcommand{\Amb}{F}   
\newcommand{\Val}{H}  
\newcommand{\Pairs}{\mc{P}} 
\newcommand{\Hcal}{\mc{H}}  
\newcommand{\Hfam}{\Hcal_{\alpha}} 
\newcommand{\Seen}{S} 
\newcommand{\Out}{O}
\newcommand{\Core}{C}   
\newcommand{\Comp}[1]{H_{#1}}    
\newcommand{\dlow}[2]{\underline{d}(#1,#2)}  
\newcommand{\dup}[2]{\overline{d}(#1,#2)}        
\newcommand{\dens}[2]{d(#1,#2)}                  
\newcommand{\trv}{\mathrm{trv}}                
\newcommand{\cov}{c}

\newcommand{\rest}[2]{#1[1\!\dots\! #2]}      

\newcommand{\fiber}[1]{\Pairs_{#1}}          
\newcommand{\setbuild}[2]{\left\{\,#1 : #2\,\right\}}

\theoremstyle{plain}
\newtheorem{theorem}{Theorem}[section]

\newtheorem{lemma}[theorem]{Lemma}

\newtheorem{fact}[theorem]{Fact}

\theoremstyle{definition}
\newtheorem{definition}[theorem]{Definition}

\theoremstyle{remark}
\newtheorem{remark}[theorem]{Remark}

\crefname{theorem}{Theorem}{Theorems}          \Crefname{theorem}{Theorem}{Theorems}
\crefname{proposition}{Proposition}{Propositions}
\Crefname{proposition}{Proposition}{Propositions}
\crefname{lemma}{Lemma}{Lemmas}                \Crefname{lemma}{Lemma}{Lemmas}
\crefname{corollary}{Corollary}{Corollaries}   \Crefname{corollary}{Corollary}{Corollaries}
\crefname{conjecture}{Conjecture}{Conjectures} \Crefname{conjecture}{Conjecture}{Conjectures}
\crefname{fact}{Fact}{Facts}                   \Crefname{fact}{Fact}{Facts}
\crefname{definition}{Definition}{Definitions} \Crefname{definition}{Definition}{Definitions}
\crefname{assumption}{Assumption}{Assumptions} \Crefname{assumption}{Assumption}{Assumptions}
\crefname{example}{Example}{Examples}          \Crefname{example}{Example}{Examples}
\crefname{problem}{Problem}{Problems}          \Crefname{problem}{Problem}{Problems}
\crefname{remark}{Remark}{Remarks}             \Crefname{remark}{Remark}{Remarks}
\crefname{claim}{Claim}{Claims}                \Crefname{claim}{Claim}{Claims}
\crefname{algorithm}{Algorithm}{Algorithms}    \Crefname{algorithm}{Algorithm}{Algorithms}

\newcommand{\FloodWord}{%
\textcolor{cyan!70!blue}{F}\textcolor{cyan!50!blue}{l}\textcolor{cyan!32!blue}{o}\textcolor{cyan!16!blue}{o}\textcolor{blue!85!black}{d}}
\newcommand{\HarvestWord}{%
\textcolor{yellow!75!orange}{H}\textcolor{yellow!58!orange}{a}\textcolor{yellow!40!orange}{r}\textcolor{orange!95!yellow}{v}\textcolor{orange!88!red!90!black}{e}\textcolor{orange!75!red!85!black}{s}\textcolor{orange!60!red!75!black}{t}}
\hypersetup{pdftitle={Flood and Harvest: The Provable Necessity of Trivia for Generating Valuable Mathematics via the Lens of Language Generation in the Limit}}

\begin{document}

% ---- Title page -----
\thispagestyle{empty}
\vspace*{-0.92in}
\begin{center}
  \makebox[\textwidth]{\includegraphics[width=0.702\paperwidth]{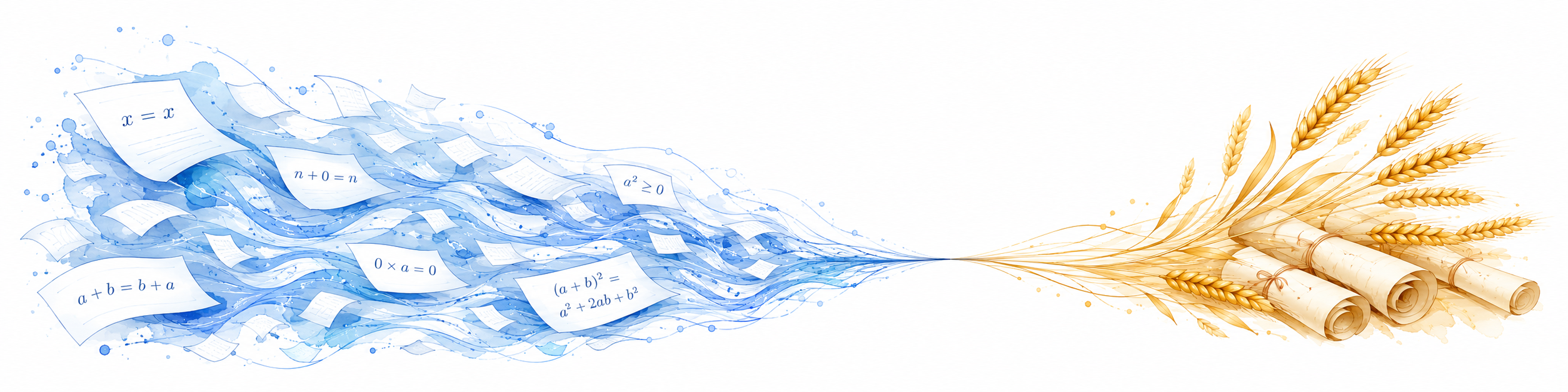}}\\[0.1em]
  {\Huge\sffamily\bfseries \FloodWord{} and \HarvestWord\par}
  \vspace{0.45em}
  {\Large\sffamily\bfseries\color{black!85}
   The Provable Necessity of Trivia for Generating Valuable Mathematics\\[0.2em]
   via the Lens of Language Generation in the Limit\par}
  \vspace{0.45em}
  \gradientrule
  \vspace{0.9em}
  {\normalsize
   \begin{tabular}{@{}l@{\hspace{3em}}l@{\hspace{3em}}l@{}}
   Xiaoyu Li$^{1}$ & Andi Han$^{2}$ & Dai Shi$^{3}$\\[0.35em]
   Zheng Gao$^{1}$ & Jiaojiao Jiang$^{1}$ & Junbin Gao$^{2}$
   \end{tabular}\par}
  \vspace{0.7em}
  {\small
   $^{1}$University of New South Wales \quad
   \texttt{\{xiaoyu.li2,\,zheng.gao1,\,jiaojiao.jiang\}@unsw.edu.au}\\[0.15em]
   $^{2}$University of Sydney \quad
   \texttt{\{andi.han,\,junbin.gao\}@sydney.edu.au}\\[0.15em]
   $^{3}$University of Cambridge \quad
   \texttt{ds2213@cam.ac.uk}\par}
  \vspace{0.45em}
  {\small\itshape\color{inkgray}\today\par}
\end{center}
\vspace{-0.6em}
\begin{abstract}
\small\linespread{1.0}\selectfont
AI systems coupled to proof assistants now generate formal mathematics at scale, and the gap
between what a checker can \emph{verify} and what a mathematician would \emph{value} has
become the binding constraint. We model the generation of valuable mathematics as
\emph{nested} language generation in the limit: a verifiable formal language $\Amb$, accessed
through a membership oracle (the proof checker), contains an unknown valuable language
$\Val\in\Hcal$ revealed only through an adversarial enumeration of a core $\Core\subseteq\Val$
of exact density $\alpha$ (the literature). Every output is \emph{valuable} ($\in\Val$),
\emph{trivial} ($\in\Amb\setminus\Val$), or a \emph{hallucination} ($\notin\Amb$). We settle
four questions. First, the verifier is not taste: the collections admitting generation with
breadth are exactly those of the oracle-free model, characterized fiber-wise by Angluin's
condition. Second, the verifier does buy \emph{sound coverage}, covering all unseen valuable
statements while asserting only valid ones: possible with it, impossible without it; it
relocates unavoidable errors from false to trivial. Third, and centrally, a sharp dichotomy on
the tight family: generators emitting finitely many trivia achieve optimal coverage
$\alpha/2$, while any infinite trivia allowance, even at vanishing rate, jumps the optimum to
$1-\alpha/2$ (both tight, for cores presented as the candidate intersection), and one
generator attains both ends. The transition is in trivia \emph{count}, not rate; the gap
$1-\alpha$ is the unrecorded mass. Fourth, both regimes instantiate in a compression model of
mathematics. A perfect verifier cannot substitute for taste: the unbounded stream of
correct-but-worthless statements is not an engineering accident but a \emph{provable
necessity}, since covering unrecorded valuable mathematics requires an infinite, but
asymptotically negligible, stream of certified trivia. The \emph{flood} of the title is that
stream; the \emph{harvest} is the unrecorded value that only the flood can buy.
\vspace{3mm}
\begin{poincarebox}
\begin{quote}
\itshape
``
What, in fact, is mathematical discovery? It does not consist in making new combinations with mathematical entities that are already known. That can be done by any one, and the combinations that could be so formed would be infinite in number, and the
greater part of them would be absolutely devoid of interest. 
Discovery consists precisely in not constructing useless combinations, but in constructing those that are useful, which are an infinitely small minority. Discovery is discernment, selection.''

\hfill --- Henri Poincaré\,$^\lozenge$

\end{quote}
\end{poincarebox}
\nopagebreak
{\footnotesize\color{inkgray}\hfill $^\lozenge$\,Quoted from \citeauthor{poincare1914science}'s discussion of mathematical discovery in \emph{Science and Method} \citeyearpar{poincare1914science}.\par}
\end{abstract}
\tableofcontents
\newpage
\section{Introduction}
\label{sec:intro}

Machine-generated mathematics has moved from aspiration to practice. Language models coupled
to proof assistants are run as tireless provers, from GPT-f through HyperTree search to
DeepSeek-Prover and Goedel-Prover
\citep{polu2020gptf,lample2022hypertree,deepseek2024prover,lin2025goedel}; olympiad-level
systems now solve hard competition problems
\citep{trinh2024alphageometry,chervonyi2025alphageometry2,hubert2026alphaproof}; AI-assisted
discovery has produced genuine new mathematics
\citep{davies2021advancing,romera2024funsearch,georgiev2025exploration}; large formalization
efforts have put substantial bodies of mathematics into machine-checkable form
\citep{avigad2023formal,aksenov2026compression}; and autonomous systems now generate
conjectures and theory at scale \citep{barkeshli2026structure,tsoukalas2025interestingness}.
The shared substrate is a membership oracle for a formal language, a checker that accepts a
string exactly when it encodes a valid derivation, against which such a system can emit, one
after another, an unbounded stream of statements each of which is, by construction, correct
(see \citet{li2024survey,ju2026ai} for surveys).

Practitioners face a problem the empirical literature documents and laments without
formalizing: the checker certifies \emph{validity}, not \emph{value}. It accepts a derivation;
it does not certify that the theorem is worth proving, a distinction Thurston famously drew
between proof and mathematical understanding \citep{thurston1994proof}. A generator can
therefore flood its output with statements that are formally impeccable and mathematically
worthless, or it can aim instead at the conjectures a mathematician would care about and risk
being wrong. And the flood is alarmingly cheap to manufacture: a short script written for
this paper generated, and machine-verified, the statement
\begin{equation*}
  \frac{\Gamma\!\big(\tfrac12\big)^{2}}{\pi}
  \;+\;
  \big(e^{i\pi} + 2\big)
  \big(\varphi^{2} - \varphi\big)
  \big({-2}\,\zeta(0)\big)
  \;=\;
  \exp\!\Bigg(\sum_{k=1}^{\infty} \frac{(-1)^{k+1}}{k}\Bigg),
\end{equation*}
in which Euler's identity, the golden ratio, the Gamma function, and the Riemann zeta
function all make cameo appearances, and none is needed. The verification is instant; the
mathematical content is $1+1=2$. Nothing in the checker's verdict distinguishes this
confection from a theorem. The gap between what is verifiable and what is valuable is, in
this record, the central bottleneck, and it has no formal account.

In this paper we study this problem theoretically. We ask the question no prior model
addresses. With a perfect verifier in hand, what can a generator provably guarantee about
\emph{value}, and what, if anything, does the verifier buy? We prove exactly what is and is
not achievable. We work in the framework of language generation in the limit
\citep{kleinberg2024language}, extended to a nested pair of languages; the extension is the
source of the new phenomena, so we describe it before stating results.

The title names the two regimes this theory prices. The \emph{flood} is the unbounded stream
of certified trivia that any broad generator must emit; the \emph{harvest} is the unrecorded
valuable mathematics that only such a stream can reach. \citet{tao2025machine} envisions
machine assistance under which ``mathematical explorations become possible at scales that are
not currently feasible'': fields that today ``study one or two equations at a time'' may come
to ``study hundreds of equations at once.'' Our results price that vision. Explored at scale,
mathematics yields its unrecorded value only to generators that flood, and the flood can be
made certified-true and asymptotically negligible in rate, but never finite
(\cref{fig:worlds}).
The boundary this draws for system design is clean: verification decides where exploration
may step, while examples of recorded value decide where it should, and no amount of the
former substitutes for the latter. The remainder of this introduction makes the model
precise, states the four results, and locates them in the literature.

\subsection{New model: nested language generation in the limit}
\label{sec:intro-model}

Fix a countable ground set $\gnd$ of strings with a canonical order. An instance is a pair
$(\Amb,\Val)$ with $\Val\subseteq\Amb\subseteq\gnd$, both infinite: the \emph{formal} language
$\Amb$ is the verifiable ambient (membership in $\Amb$ is decided by an oracle, the proof
checker), and the \emph{valuable} language $\Val$ is the target, which the generator never
sees directly. The adversary draws $(\Amb,\Val)$ from a countable collection, grants the
generator a membership oracle for $\Amb$, and enumerates a core $\Core\subseteq\Val$ of lower
density $\alpha$, the literature, which records only a fraction of what is valuable. Round by
round the generator reads the enumerated prefix $\Seen_t$, queries the oracle, and outputs a
fresh string $a_t\notin\Seen_t$. Each output falls into exactly one of three modes
(\cref{fig:worlds}):
\begin{center}
\begin{tabular}{@{}lll@{}}
\toprule
\textbf{mode} & \textbf{condition} & \textbf{meaning}\\
\midrule
hallucination & $a_t\notin\Amb$ & not even valid (the classical failure)\\
triviality    & $a_t\in\Amb\setminus\Val$ & valid but worthless (the new failure)\\
valuable      & $a_t\in\Val\setminus\Seen_t$ & new and valuable (hence also valid)\\
\bottomrule
\end{tabular}
\end{center}
\begin{figure}[t]
\centering
\resizebox{0.97\textwidth}{!}{% Figure: the nested worlds of machine-generated mathematics
\begin{tikzpicture}[
  >=Stealth, font=\footnotesize,
  dot/.style={circle, fill=gray!60, inner sep=1.1pt},
  star/.style={diamond, fill=olive!80!black, inner sep=1.5pt},
  rec/.style={circle, fill=black, inner sep=1.1pt},
  lab/.style={font=\scriptsize\itshape},
  ex/.style={font=\scriptsize, draw=gray!50, rounded corners=2pt,
             inner sep=2.5pt, fill=white, fill opacity=0.85, text opacity=1},
  extriv/.style={ex, draw=gray!55, text=black!75},
  exval/.style={ex, draw=olive!60!black, text=olive!35!black},
  exrec/.style={ex, draw=black!60},
  exhal/.style={ex, draw=red!50, text=red!60!black},
]
% ================= the worlds =================
\draw[rounded corners=10pt, gray!70]
  (2.6,0.2) rectangle (15.6,9.0);
\node[lab, anchor=north west, gray!80!black] at (2.78,8.92)
  {$U$: all statements};

\draw[blue!55, thick, fill=blue!5]
  (9.7,4.5) ellipse [x radius=5.3, y radius=3.6];
\node[lab, blue!60!black, anchor=north] at (9.7,7.62)
  {$F$: provable, machine-checkable};

\draw[olive!70!black, thick, fill=olive!12]
  (10.9,3.6) ellipse [x radius=3.0, y radius=1.95];
\node[lab, olive!50!black, anchor=south, font=\scriptsize\itshape] at (10.6,4.62)
  {$H$: valuable (unknown)};

\draw[black!70, densely dashed, fill=gray!22]
  (11.9,2.95) ellipse [x radius=1.72, y radius=1.08];
% recorded theorems, named and cited
\node[exrec, font=\tiny] at (11.9,3.62) {FLT \citeyearpar{wiles1995modular}};
\node[exrec, font=\tiny] at (11.9,3.18) {PNT \citeyearpar{hadamard1896distribution}};
\node[exrec, font=\tiny] at (11.9,2.74) {Green--Tao \citeyearpar{green2008primes}};
\node[exrec, font=\tiny] at (11.9,2.30) {Galois theory \citeyearpar{galois1846oeuvres}};
\node[lab, black!75, anchor=west, align=left] at (12.95,0.8)
  {$C$: the literature\\ (density $\alpha$)};
\draw[->, black!60, shorten >=1pt] (13.05,1.15) -- (12.6,2.05);

% ================= the generator =================
\node[draw, rounded corners, align=center, fill=white, inner sep=4pt]
  (gen) at (0.95,4.6) {generator\\ $+$ verifier $\mathbf 1_{F}$};

% ================= inhabitants =================
% --- trivia in F \ H (the flood), with two worked examples ---
\foreach \p in {(5.9,6.4),(7.3,6.9),(5.5,4.4),(7.6,2.2),(6.5,3.2),
                (9.0,7.0),(11.3,6.6),(13.6,4.85),(13.9,4.3),(5.9,2.9),(8.5,1.6),(11.6,1.55)}
  \node[dot] at \p {};
\node[extriv] at (6.6,5.5) {$1+2+\cdots+100 = 5050$};
\node[extriv] at (6.6,2.6) {$2592 = 2^5 \cdot 9^2$};
\node[extriv, font=\tiny] at (11.9,6.35) {$111111111^2 = 12345678987654321$};
\node[extriv, font=\tiny] at (12.75,5.78) {digits 762--767 of $\pi$ read $999999$};
\node[extriv] at (6.6,4.95) {the $1000$th prime is $7919$};
% --- valuable, unrecorded (the harvest) ---
\foreach \p in {(8.9,4.2),(8.7,2.4),(8.6,3.3),(9.9,4.35)}
  \node[star] at \p {};
\node[exval] at (9.45,3.95) {tomorrow's theorems};
\node[exval, densely dashed, align=center, font=\tiny] at (9.3,2.9)
  {this paper?\\ (pending peer review)};
% --- not provable: a falsehood and an independence ---
\node[exhal] at (4.05,1.3) {$2+2=5$};
\node[exhal, align=center] at (13.85,8.3) {CH\\ {\tiny (independent of ZFC)}};
\node[exhal, align=center] at (6.6,8.45) {Whitehead problem\\ {\tiny (independent of ZFC)}};
\node[exhal, align=center, font=\tiny] at (6.0,0.62)
  {Euler's sum-of-powers conjecture\\ (false: $27^5{+}84^5{+}110^5{+}133^5 = 144^5$)};

% ================= arrows from the generator =================
\draw[->, gray!70]        (gen.east) to[bend left=14]  (5.8,6.3);
\draw[->, gray!70]        (gen.east) to[bend right=10] (5.42,4.32);
\draw[->, olive!70!black] (gen.east) to[bend right=6]  (8.78,4.1);
% a hallucination attempt, stopped at the boundary of F
\draw[->, red!70, dashed] (gen.east) to[bend left=24] (4.55,6.85);
\node[red!70, font=\scriptsize] at (4.55,7.1) {$\times$};
\node[lab, red!60, anchor=south, align=center] at (3.95,7.35)
  {hallucinations\\ stopped here};

% ================= zone callouts (outside the frame) =================
\draw[->, gray!80] (15.78,5.7) -- (14.05,4.5);
\node[lab, align=left, gray!35!black, anchor=west] at (15.75,6.1)
  {the \emph{flood}:\\ valid but\\ worthless ($F \setminus H$)};
\draw[->, olive!60!black] (8.4,-0.42) -- (9.9,1.85);
\node[lab, align=center, olive!45!black, anchor=north] at (6.6,-0.25)
  {the \emph{harvest}: unrecorded value $H \setminus C$, mass $1-\alpha$};
\end{tikzpicture}}
\caption{The nested worlds of machine-generated mathematics, with inhabitants.
The verifier guards the boundary of $F$ and nothing more (\cref{thm:taste}):
within $F$ it cannot tell a checkable banality from a theorem worth recording.
Outside $F$ live the falsehoods, from the crude to the seductive (Euler's
sum-of-powers conjecture stood for two centuries before its counterexample),
and, more subtly, the independent statements: validity is always relative to a
fixed formal world, and neither the continuum hypothesis nor the Whitehead
problem is decided by ZFC. The literature $C$ records a density-$\alpha$ fraction
of the valuable language $H$; covering the rest, the harvest, provably requires
emitting infinitely many statements from the flood $F \setminus H$
(\cref{cor:dichotomy}). Placement of examples is illustrative; the identity
$2592 = 2^5 \cdot 9^2$ is \citet{dudeney1917amusements}'s ``printer's error'',
charming, checkable, and still worthless, and this paper itself waits at the edge of $C$, pending peer review.
Color code, here and throughout: blue for
the formal world, gold for the valuable, red for the false or adversarial, gray for
the trivial.}
\label{fig:worlds}
\vspace{-4mm}
\end{figure}

We measure coverage by the lower density $\cov=\dlow{\Out\cap\Val}{\Val}$ of the valuable
statements the run eventually produces, in the limit-set sense of
\citet{kleinberg2025density}. The verifier eliminates one failure mode (no output need ever be
a hallucination), and the question becomes how the remaining two, triviality and missed value,
trade against each other.

\subsection{Contributions}
\label{sec:intro-contrib}

We settle four questions; the headline is the third.

\begin{enumerate}
  \item \textbf{Verification is not taste} (\cref{thm:taste}). With a membership oracle for
    $\Amb$, the collections admitting generation with exact (resp.\ approximate) breadth
    (outputs eventually covering all unseen valuable statements) are \emph{exactly} those
    admitting it in the oracle-free model, characterized fiber-wise by Angluin's tell-tale
    condition (resp.\ its weak form). The verifier's entire informational content, for the
    in-the-limit breadth goal, is identification of the ambient $\Amb$; within a fiber of
    candidates sharing one $\Amb$ it discriminates nothing. Taste must be learned from
    examples, never from a checker.

  \item \textbf{What the verifier does buy} (\cref{thm:flood}). The oracle is not useless: it
    characterizes \emph{sound coverage}: covering every unseen valuable statement while
    asserting only valid ones. With the verifier, the exhaustive generator
    $G_n=\Amb\setminus\Seen_n$ achieves sound coverage for every countable collection; without
    it, there is a collection on which no generator can. The verifier thus relocates the
    unavoidable errors of broad generation from false ($\notin\Amb$) to merely trivial
    ($\in\Amb\setminus\Val$); it does not reduce their number, and it cannot locate value.

  \item \textbf{The trivia dichotomy} (\cref{lem:race}, \cref{thm:caps}, \cref{thm:harvest},
    \cref{cor:dichotomy}). This is the headline. The KW $\alpha/2$ frontier is exactly the
    finite-trivia-count slice of a count dichotomy: on the tight family $\Hfam$ with its
    canonical core presentation (the revealed core $\Core\subseteq K$ is the candidate
    intersection), infinite allowance, even at vanishing rate, yields $1-\alpha/2$, tight,
    with a matching \emph{necessity} theorem (any generator exceeding $\alpha/2$ must emit
    infinitely many trivia). The tight family has revealed core of exact density $\alpha$, and
    the phase transition separates two populations of generators:
  \begin{center}
  \begin{tabular}{@{}lcc@{}}
  \toprule
  \textbf{trivia allowance} & \textbf{optimal coverage at truth $K'$} & \textbf{tight?}\\
  \midrule
  finitely many   & $\alpha/2$     & yes (\cref{thm:caps,thm:harvest})\\
  infinitely many & $1-\alpha/2$ \ {\footnotesize(core $=$ candidate intersection, $\Core\subseteq K$)} & yes (\cref{thm:caps,thm:harvest})\\
  \bottomrule
  \end{tabular}
  \end{center}
A single sound generator $G^*$ attains both ends at every realized truth, with zero trivia
when the record is incomplete (\cref{thm:harvest}). The transition is in the trivia
\emph{count}, not its rate: any infinite allowance suffices even at trivia count $\lfloor\sqrt
N\rfloor$ in the first $N$ rounds, i.e.\ rate $O(N^{-1/2})$. The gap $1-\alpha$ between the
two optima is exactly the unrecorded mass $\Amb$-valuable but outside the core. Thus an
unbounded stream of correct-but-worthless theorems is \emph{necessary} for any generator
covering more than $\alpha/2$, and \emph{sufficient} at asymptotically negligible rate. Beyond
the tight family, the sweep is universal: it gives $\cov\ge 1-\dup{\Core}{\Val}$ for every
countable collection.

  \item \textbf{Instantiation in a compression model of mathematics} (\cref{thm:abelian},
    \cref{prop:free}, \cref{prop:precision}). We realize the dichotomy inside the
    macro-dictionary model of \citet{aksenov2026compression}. Structured abelian refinement
    chains are identifiable with \emph{singleton} tell-tales: in the structured regime,
    valuable generation needs no trivia at all (\cref{thm:abelian}). Free-monoid name-cut
    dictionary pairs realize the tight family at every rational $\alpha$, and an unbounded-cut
    construction realizes the dichotomy's extreme $0$-vs-$1$ point: finite-trivia coverage $0$
    against sweep coverage $1$ (\cref{prop:free}). The exhaustive generator's precision is
    polynomially small in radius, while the sweep's precision tends to $1$
    (\cref{prop:precision}). We use this model as substrate and motivation, not as a source of
    borrowed theorems.
\end{enumerate}

\subsection{Technical overview}
\label{sec:intro-technique}

The four results draw on three mechanisms.

\emph{Fixed-advice relativization} carries \cref{thm:taste}. Within a fiber, the candidates
sharing one ambient $\Amb$, the oracle is one fixed function $\mathbf 1_{\Amb}$, so a
relativized generator's adaptive query tree, of finite depth, resolves to an ordinary
set-valued function of the enumerated prefix. The oracle-free characterizations of
\citet{kalavasis2024characterizations} then apply verbatim, and a one-round identification
stage locates the correct fiber. We say plainly that this transfer is light by design: its
easiness \emph{is} the message, that verification is information-theoretically orthogonal to
taste. The technical weight lives in the next two mechanisms.

\emph{A sandwich-exclusion lemma, deployed two-sidedly,} carries the separation
\cref{thm:flood}. The lemma itself is a three-line containment argument: on a nested Gold
chain $\Val_j\uparrow\Val_\infty$ with fattened ambients, no single output set can sound-cover
both a chain member and the union against the same seen-set. The two-sidedness lives not in
the lemma but in how the adaptive phase construction deploys it: forcing any oracle-free
generator to fail on one or the other infinitely often, so that soundness must be denied
\emph{above} and \emph{below} the same transcript.

\emph{A race lemma and a sweep pointer} carry the dichotomy. The lower bounds
(\cref{lem:race}, \cref{thm:caps}) come from a racing adversary that enumerates the core from
the bottom, so that any generator and the adversary read the \emph{same} transcript under two
different truths, capping coverage at $1/2$ along the raced core and at $1-\alpha/2$
overall---a same-transcript two-readings cap. The upper bounds (\cref{thm:harvest}) come from
interleaving a sparse sweep schedule into the race: because density is a property of the limit
set, not of any per-prefix deadline, a vanishing-rate sweep of the candidate intersection
eventually claims every never-enumerated valuable string, recovering $1-\alpha/2$. The
matching constructions of \cref{thm:abelian} and \cref{prop:free} verify that the abstract
interface the lower bound requires is met by genuine compressible-language families.

\subsection{Related work}
\label{sec:intro-related}

\paragraph{Generation in the limit.}
The model originates with \citet{kleinberg2024language}: every countable collection is
generable in the limit, in contrast to Gold--Angluin identification, but breadth is
sacrificed. The breadth resource was then mapped:
\citet{kalavasis2025limits,kalavasis2024characterizations} characterize exact and approximate
breadth by Angluin's tell-tale condition and its weak form, the characterizations we
relativize in \cref{thm:taste}. Their pivotal positive result, that \emph{positive and
negative} examples of the target restore breadth for every collection \citep[Thm.\
3.13]{kalavasis2025limits}, marks precisely the contrast we exploit: true negative examples
collapse the Angluin boundary, whereas our verifier supplies negatives only \emph{outside}
$\Amb$ and none in the grey zone $\Amb\setminus\Val$, so it cannot move the boundary at all.
The oracle results of the line concern different oracles: \citet{charikar2025facets} show
membership queries to \emph{collection members} are insufficient, and \citet{bai2026noise}
show finite same-level feedback is worth nothing; our oracle queries an ambient
\emph{superset}, a different construction with a different verdict.

\paragraph{Density.}
\Citet{kleinberg2025density} introduce the limit-set density we use and achieve a universal
positive density. \Citet{kleinberg2026partial} sharpen this: full enumeration permits tight
lower density $1/2$, and partial enumeration of a core of density $\alpha$ permits the tight
bound $\alpha/2$. We show that $\alpha/2$ is exactly the \emph{finite-trivia} regime of our
larger dichotomy; their model reveals a fraction of the same target, whereas our nesting sits
between a \emph{target} and an \emph{ambient}, not between a revealed and an unrevealed part
of one language. \Citet{kleinberg2026banach} establish a Banach-density dichotomy by
Cantor--Bendixson rank, which we cite as a caution for embedded instantiations.

\paragraph{The errors-buy-coverage moral, elsewhere.}
\Citet{ganju2026timely} reach a kindred slogan, that sparse hallucination beats mode collapse,
in a \emph{timeliness} model with no nested pair, no verifier, and no distinction between
trivial and false. \Citet{anastasopoulos2026safe} study the opposite geometric task, avoiding
a harmful sublanguage rather than hitting a valuable one.

\paragraph{Verifier-assisted decoding.}
\Citet{botta2025verifier} also place a verifier beside a generator, but ask a different
question: a process verifier supplies prefix-completability for constrained decoding, and the
results are computational query complexity, with no unknown target and no breadth or density.
We share a word, not a model.

\paragraph{Other variants.}
The line includes generation through a learning-theoretic lens \citep{li2025generation},
union-closedness failures \citep{hanneke2025union}, representative generation
\citep{peale2025representative}, noisy examples \citep{raman2025noisy,li2026quantifying},
Pareto-optimal non-uniform generation \citep{charikar2025pareto}, contamination
\citep{mehrotra2025contamination}, privacy \citep{mehrotra2026private}, metric spaces
\citep{li2026metric}, agnostic generation and identification \citep{hogsgaard2026agnostic},
and complexity barriers \citep{arenas2025complexity}. None carries a verifiable ambient or the
triviality mode.

\paragraph{Mathematics as structured strings.}
Our instantiation is built on the compression model of \citet{aksenov2026compression} and the
structural program of \citet{barkeshli2026structure}; we use the model, not its theorems.
Related empirical and conceptual work on the structure and interestingness of mathematical
corpora
\citep{wernhard2025grammar,colton2000interestingness,herrmann2026interestingness,tsoukalas2025interestingness}
motivates the value question but proves no generation theorems. The empirical AI4Math line,
its benchmarks and its autoformalization and proof-search pipelines
\citep{zheng2022minif2f,yang2023leandojo,jiang2023draft}, supplies exactly the phenomena our
model formalizes.

\vspace{2mm}
We defer the discussion of classical inductive inference and language identification in the
limit, the broader line of work on language generation in the limit, and practical
developments in machine-generated mathematics to \cref{app:related}.

\section{The Model}
\label{sec:prelim}

We pose the model before stating any result, because every answer below changes with it. The
objects are languages (subsets of a fixed countable universe), the protocol is adversarial
enumeration in the limit, and the question is how much of a hidden \emph{valuable} language a
generator can cover without ever leaving a fixed \emph{formal} one.

\subsection{Ground set, generation in the limit, and breadth}
\label{sec:prelim-km}

Fix a countable \emph{ground set} $\gnd$ of statements with a fixed canonical total order
$\prec$, written $x_1 \prec x_2 \prec \cdots$. For an infinite $L \subseteq \gnd$ we write
$\rest{L}{M}$ for the set of its first $M$ elements under $\prec$ (so $\rest{L}{M}$ is always
an initial segment of $L$ in $\prec$-order, and indexing is $1$-based: $\rest{L}{1} =
\{\min_\prec L\}$). A \emph{language} is an infinite subset of $\gnd$; all languages in this
paper are infinite unless stated otherwise.

\paragraph{The Kleinberg--Mullainathan protocol.}
We use the generation-in-the-limit protocol of \citet{kleinberg2024language}. An adversary
fixes an unknown target language $\Val$ and a \emph{Kleinberg--Mullainathan (KM) enumeration}
of it: an infinite stream $w_1, w_2, \dots$ whose set of values is exactly $\Val$ (every
element of $\Val$ appears, repetitions allowed). At round $t$ the generator has seen the
prefix and we write $\Seen_t \defeq \{w_1, \dots, w_t\}$ for the \emph{seen-set}. The
generator must commit to an output before round $t+1$. Two output conventions appear in the
literature, and we keep both, in disjoint roles.

\paragraph{Set-valued generators (breadth; KMV conventions).}
A \emph{set-valued generator} $G$ maps the seen-set to a set $G_n(\Seen_n) \subseteq \gnd$;
this is the convention of \citet{kalavasis2024characterizations} (hereafter KMV), used for the
\emph{characterization} results of \cref{sec:res-taste,sec:res-flood}. We say $G$ achieves
\emph{exact breadth in the limit} on $\Val$ if there is $n^*$ with $G_n = \Val \setminus
\Seen_n$ for all $n \ge n^*$ (it outputs all and only the unseen valuable statements), and
\emph{approximate breadth in the limit} if eventually $G_n \subseteq \Val$ and $\abs{\Val
\setminus G_n} < \infty$. No computability is assumed of $G$: it is an arbitrary function of
the prefix (KMV Remark~2.3).

\paragraph{Element-based generators and the limit set (KW conventions).}
An \emph{element-based generator} maps the seen-set to a single new statement $a_t =
G(\Seen_t) \in \gnd$ with $a_t \notin \Seen_t$; this is the convention of
\citet{kleinberg2026partial} (hereafter KW), used for the \emph{density} results of
\cref{sec:res-dichotomy}. The object of study over an infinite run is the \emph{limit set} of
all outputs,
\begin{equation}\label{eq:limitset}
  \Out \defeq \{\, a_t : t \ge 1 \,\},
\end{equation}
the set of \emph{all} statements the generator ever emits across the entire run (KW
Definition~1.1). We emphasize that $\Out$ is a property of the whole infinite run, not of any
finite stage. To measure how much of a language a set covers we use prefix densities in the
canonical order: for $L' \subseteq L$,
\begin{equation}\label{eq:density}
  \dlow{L'}{L} \defeq \liminf_{N \to \infty} \frac{\abs{L' \cap \rest{L}{N}}}{N},
  \qquad
  \dup{L'}{L} \defeq \limsup_{N \to \infty} \frac{\abs{L' \cap \rest{L}{N}}}{N},
\end{equation}
writing $\dens{L'}{L}$ for the common value when the limit exists. Generators in this mode are
deterministic.

\paragraph{Angluin's conditions.}
The combinatorial quantity that controls breadth is Angluin's tell-tale condition
\citep{angluin1980inductive}. A countable collection $\Hcal$ of languages satisfies
\emph{Angluin's condition} if every $L \in \Hcal$ has a finite \emph{tell-tale} $T_L \subseteq
L$ such that no $L' \in \Hcal$ with $T_L \subseteq L'$ is a proper subset of $L$. It satisfies
the \emph{weak Angluin condition} if the tell-tale separates $L$ from every $L' \in \Hcal$
with $\abs{L \setminus L'} = \infty$ rather than from every proper subset. These are exactly
the conditions under which a Gold-style ascending chain $L_1 \subsetneq L_2 \subsetneq \cdots$
with union $\bigcup_j L_j$ in the collection \emph{fails} to be identifiable, since every
finite subset of the union lies in some proper sub-language of the chain
\citep{gold1967language}.

\subsection{Partial enumeration and the revealed core}
\label{sec:prelim-partial}

The density results take place in KW's \emph{partial} model \citep{kleinberg2026partial},
which weakens the protocol in exactly the way the mathematical reading demands: the adversary
need not enumerate all of $\Val$. Instead the adversary enumerates an infinite \emph{revealed
core} $\Core \subseteq \Val$ (the ``literature'': the valuable statements actually written
down) and may leave the rest of $\Val$ forever unseen. We parametrize the core by its lower
density in the target,
\begin{equation}\label{eq:alpha}
  \alpha \defeq \dlow{\Core}{\Val} ,
\end{equation}
and speak of an instance $(\Val, \Core, \text{enumeration})$ as \emph{legal} if the
enumeration is a repetition-free listing whose value-set is $\Core$. The relevant restated
results are the following.

We restate both results verbatim, version-locked to the public preprint, so the later argument
can quote them with no paraphrase. In their statement, \emph{generation in the limit with
partial enumeration} means that ``for some time $t^*$, its outputs $a_t$ satisfy $a_t \in K -
S_t$ for all $t \ge t^*$'' \citep[\S1]{kleinberg2026partial}, where $K$ is the true language
and $S_t$ the enumerated prefix.

\begin{fact}[Partial-enumeration generation; {\citep[Theorem~1.6 in arXiv version]{kleinberg2026partial}}]\label{fact:kw16}
There is an algorithm $\mathcal A$ that achieves generation in the limit with partial
enumeration with the following guarantee: ``for any adversarial enumeration $E$ of an infinite
subset $C$ of one of the languages $K \in \mathcal X$ such that $C$ has lower density at least
$\alpha > 0$ in $K$, the set of output strings $O(E, \mathcal A)$ $[\dots]$ has a lower
density in $K$ that is at least $\alpha/2$.'' Moreover, ``$\alpha/2$ also serves as an upper
bound on the lower density achievable by any algorithm.'' In our notation: for every countable
collection $\mathcal X$ there is an element-based generator with $\dlow{\Out}{\Val} \ge
\tfrac12\,\dlow{\Core}{\Val}$ on every legal instance, and from some round $t^*$ on it outputs
only elements of $\Val$.
\end{fact}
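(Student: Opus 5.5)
Since this is a restatement of a published result, the cleanest route is to cite \citet{kleinberg2026partial} verbatim. If I had to rebuild the argument, I would do it as follows.

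\textbf{Lower bound: construction.} Fix an enumeration $L_1,L_2,\dots$ of $\mathcal X$. At round $t$, call $L_i$ \emph{consistent} if $\Seen_t\subseteq L_i$. Call it \emph{critical} if it is consistent and contained in every consistent $L_j$ with $j<i$. This is the Kleinberg--Mullainathan critical-language device. Only containment in the true $K$ matters, and $\Core\subseteq K$ gives $\Seen_t\subseteq K$ at every round, so the KM argument transfers unchanged to partial enumeration. If $K=L_z$, then after finitely many rounds every $L_j$ with $j<z$ that does not contain $K$ has been refuted. From then on, $L_z$ and every critical language of larger index is a subset of $K$. The generator outputs the $\prec$-least element of $L_{n_t}\setminus(\Seen_t\cup\Out_{<t})$, where $n_t$ is the largest critical index not exceeding a slowly growing bound ($n_t\le t$, say). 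Eventually every output lies in $K\setminus\Seen_t$, which is generation in the limit.

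\textbf{Lower bound: density.} The analysis is a race between a bottom-up pointer and the adversary. I would first treat the idealised case where the critical language eventually stabilises at $K$ itself. Then each round advances the pointer by at least one fresh element of $\rest{K}{N}$. The adversary can "steal" a string (enumerate it before the pointer reaches it) only by spending one of its own rounds, and only core strings can be stolen. Counting over the prefix $\rest{K}{N}$, of whose elements at least $(\alpha-o(1))N$ are core, gives
\[
\abs{\Out\cap\rest{K}{N}}\ \ge\ \bigl(\tfrac{\alpha}{2}-o(1)\bigr)N .
\]
Roughly, the pointer and the adversary each claim about half of the core mass they contend for.

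\textbf{Main obstacle.} The hard step is when the critical language is a proper subset $L\subsetneq K$ whose own density in $K$ may be small. In that case pointer advances inside $L$ may not translate into density in $K$. My first attempt would be a doubling schedule that keeps raising $n_t$ to the highest-index critical language. This would ensure that, over each window, the language being swept contains a fraction of $\rest{K}{N}$ comparable to the core's share, because $\Core\subseteq\Seen_\infty$ forces the surviving critical languages to contain the core. The analysis would be carried out on the limit set, not per round.

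\textbf{Upper bound.} I would use a racing adversary on a two-language family $K\subsetneq K'$ with core $\Core\subseteq K$ of density $\alpha$, enumerated from the bottom in lockstep with time. Each generator output either wastes mass in $K'\setminus K$, risking failure of generation under truth $K$, or competes for the same core strings the adversary is about to reveal. The same transcript read under both truths then caps the lower density at $\alpha/2$.
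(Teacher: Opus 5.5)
\textbf{Verdict.} Citing \citet{kleinberg2026partial} is exactly what the paper does. It gives no proof of this Fact, only the quotation and a translation into its own notation. The closing clause (from some round $t^*$ on, only elements of $\Val$) is simply KW's definition of generation in the limit with partial enumeration, and \cref{thm:harvest}(iii) uses the Fact only as this black box. The reconstruction you offer, however, has two genuine gaps and two fixable imprecisions.

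\textbf{The construction step fails.} The Kleinberg--Mullainathan critical-language argument does not transfer unchanged to partial enumeration. KM need every $L_j$ with $j<z$ and $K\not\subseteq L_j$ to be refuted eventually, and that requires the stream to exhibit an element of $K\setminus L_j$. When only $\Core$ is enumerated, every $L_j\supseteq\Core$ stays consistent forever. So $L_z$ may never become critical, and the critical languages need not lie inside $K$.

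A concrete counterexample: take $\mathcal X=\{L_1,L_2\}$ with $L_1=2\N$, $L_2=3\N$, $\Core=6\N$, and truth $L_2$. The core has lower density $1/3$ in $L_1$ and $1/2$ in $L_2$, so both readings are legal. The only critical language at every round is $L_1$, and your generator emits infinitely many evens outside $3\N$.

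Because every candidate under which the stream is legal sees the same transcript, outputs must eventually lie in the intersection of those candidates. This is why KW's generator is built on an identified intersection $\mathcal I^{(t)}$ that is eventually inside $K$ (\cref{fact:kw25}), not on a critical language.

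\textbf{The density step is unresolved.} You name the obstacle but do not resolve it, and the proposed fix points the wrong way. Critical languages form a descending chain, so the highest-index one is the thinnest. At a finite round it need only contain $\Seen_t$, not $\Core$. The fact that the languages surviving forever contain $\Core$ says nothing about the set actually being swept at any finite time. Your idealized case of sweeping $K$ itself also never arises, since $K$ is not identifiable from $\Core$.

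The missing idea is a sandwich. Race as in \cref{lem:race}(R1) on any fixed $L$ with $\Core\subseteq L\subseteq K$. Then $L\cap\rest{K}{N}=\rest{L}{M_N}$ with $M_N\ge\abs{\Core\cap\rest{K}{N}}$. Hence the limit set contains at least $(M_N-1)/2$ elements of $\rest{K}{N}$, which gives lower density at least $\alpha/2$. An intersection of candidates containing $\Core$ is such an $L$. The real work is controlling its finite-time approximations in the limit-set sense.

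\textbf{The upper bound needs two fixes.} Your sketch is the same-transcript mechanism of \cref{thm:caps}(b), but two details matter.
\begin{itemize}
\item The core must be all of $K$, with $K$ of density $\alpha$ in $K'$. If $K\setminus\Core$ had positive lower density in $K'$, a generator staying inside $K$ could sweep it with no trivia and beat $\alpha/2$.
\item The adversary must be the adaptive racing adversary of \cref{def:racingadv}. It skips what the generator has claimed and recycles those elements at sparse rounds to stay a complete enumeration. Against the non-adaptive stream that lists $K[t]$ at round $t$, the greedy generator outputs $K[t+1]$ and covers all of $K$ except its least element.
\end{itemize}
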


\begin{fact}[Chain validity in the partial model; {\citep[Lemma~2.5(1) in arXiv version]{kleinberg2026partial}}]\label{fact:kw25}
For the identified-intersection sequence $\mathcal I^{(t)}$ that KW's generator maintains,
``there is a finite time $T$ such that after which all the identified intersection $\mathcal
I^{(t)}$ are valid,'' where valid means $\mathcal I^{(t)} \subset K$. The statement and its
proof are in the partial-enumeration model. In our notation: there is a finite round $T_0$
with $\mathcal I^{(t)} \subseteq \Val$ for all $t \ge T_0$.
\end{fact}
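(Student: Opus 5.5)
Since this is a restatement of \citep[Lemma~2.5(1)]{kleinberg2026partial}, the plan is to reproduce the short argument behind it in our notation rather than invent a new one. Fix the enumeration $L_1, L_2, \dots$ of the countable collection $\mathcal X$ that KW's generator uses, and let $z$ be the least index with $L_z = K$ (our $\Val$). The identified intersection $\mathcal I^{(t)}$ is the intersection of languages $L_i$ that are \emph{consistent} at round $t$, meaning $\Seen_t \subseteq L_i$, with indices drawn from a window that grows with $t$. The whole proof reduces to one containment: once $L_z$ is among the intersected languages, $\mathcal I^{(t)} \subseteq L_z = K$. Validity at round $t$ then follows at once.

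\textbf{Step 1: the truth is always consistent.} In the partial model every enumerated string lies in $\Core \subseteq K$. So $\Seen_t \subseteq K = L_z$ for every $t$, and $L_z$ is never discarded for inconsistency. This is the only point where partial enumeration enters, and it costs nothing: we need only that the core sits inside the truth, not that the enumeration is exhaustive.

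\textbf{Step 2: the truth eventually enters the window.} Take $T_0 \ge z$ large enough that the algorithm's candidate window at round $t$ contains index $z$ for all $t \ge T_0$. For a window of the form $\{1,\dots,t\}$, $T_0 = z$ suffices. If KW restrict the intersection to a \emph{critical} prefix of consistent languages (those with index at most the current critical index $i_t$), we must instead show $i_t \ge z$ eventually.

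\textbf{Step 3: conclude.} For $t \ge T_0$, $\mathcal I^{(t)} = \bigcap_{i \in W_t,\ \Seen_t \subseteq L_i} L_i \subseteq L_z = K$, where $W_t$ is the window. This is the claimed $T_0$.

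The main obstacle is Step 2 in the critical-index variant. Languages $L_i$ with $i < z$ that contain all of $\Core$ but are not supersets of $K$ remain consistent forever, so they cannot be eliminated by the data. I would first check how KW define the intersection range: whether it is all consistent languages up to a growing cutoff, or up to the largest index whose language is contained in every earlier consistent one. In the first case Step 2 is immediate. In the second, I would argue by the KM critical-language argument: after a finite time every $L_i$ with $i<z$ that fails to contain $\Core$ has been eliminated, so the set of surviving consistent indices below $z$ stabilizes. The cutoff is then monotone in $t$ from that time on and eventually reaches $z$. Only inclusion of $L_z$ in the intersection is needed, not its criticality.
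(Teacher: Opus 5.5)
The paper gives no proof of this statement; it imports KW's Lemma~2.5(1) verbatim, so your argument has to stand on its own, and it does not. Steps~1 and~3 are sound: the truth is never refuted, and any intersection over a family containing $L_z$ lies in $K=\Val$. But the whole content of the lemma sits in Step~2, and you leave that step conditional on a definition of $\mathcal I^{(t)}$ that you never fix.

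The reading that makes Step~2 free, $W_t=\{1,\dots,t\}$, makes the lemma trivial. It also cannot serve as a source of new outputs. Take $L_1=K=2\N$ and $L_{j+1}=\{0,2,\dots,2j\}\cup(2\N+1)$, and enumerate $K$ in increasing order. At every round $t\ge 2$ the consistent language $L_t$ cuts the windowed intersection down to exactly $\Seen_t$. So the cutoff must be data-dependent.

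For the data-dependent variant you do treat, your argument is wrong: stabilization of the surviving indices below $z$ does not push the cutoff up to $z$. Take $\mathcal X=\{L_1,L_2\}$ with $L_1=4\N\cup(2\N+1)$, $L_2=K=2\N$, and $\Core=4\N$. Then $L_1$ is consistent forever, and $L_2\not\subseteq L_1$ is never critical. The critical index is therefore $1$ at every round, and $L_1\not\subseteq K$. This is exactly how the KM critical-language argument breaks under partial enumeration, so partial enumeration does not ``cost nothing''. Invoking that argument here proves a false statement.

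The missing idea is to tie the cutoff to the \emph{infinitude} of the running intersection, together with the fact that $\Core$ is infinite, which your argument never uses. For instance, list the consistent $L_i$ with $i\le t$ in index order, and let $\mathcal I^{(t)}$ be the intersection of the longest initial run whose intersection is infinite. This is well defined, since shorter runs have larger intersections and every language is infinite.

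With that cutoff, Step~2 goes through as follows:
\begin{enumerate}
  \item Only finitely many $L_i$ with $i\le z$ fail to contain $\Core$. Each is refuted once some element of $\Core\setminus L_i$ is enumerated, which happens at a finite round because the enumeration lists all of $\Core$.
  \item After the last such refutation, and for $t\ge z$, the consistent languages of index $\le z$ all contain $\Core$. They form the initial run ending at $L_z$, and their intersection contains $\Core$, so it is infinite.
  \item Hence the selected run extends through $L_z$, and $\mathcal I^{(t)}\subseteq L_z=K$.
\end{enumerate}
Whatever KW's exact rule is, this is the verification Step~2 actually requires: the infinite core must eventually certify the cutoff past index $z$.
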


\subsection{The nested model: validity versus value}
\label{sec:prelim-nested}

We now state our contribution to the model. Mathematical generation distinguishes two notions
that the single-language model conflates: a statement can be \emph{valid} (formally provable,
machine-checkable) without being \emph{valuable} (worth proving). We formalize this by
nesting.

\begin{definition}[Pair model]\label{def:pairmodel}
A \emph{pair model} is a countable collection $\Pairs = \{(\Amb_i, \Val_i)\}_i$ of pairs of
infinite languages with $\Val_i \subseteq \Amb_i \subseteq \gnd$. We call $\Amb_i$ the
\emph{formal} language (its membership oracle is a proof checker) and $\Val_i$ the
\emph{valuable} language (the unknown target). The \emph{fiber} over a set $F$ is
\begin{equation}\label{eq:fiber}
  \fiber{F} \defeq \setbuild{\Val_i}{\Amb_i = F},
\end{equation}
the collection of candidate targets sharing the formal world $F$ \emph{as a set}. The
adversary picks an index $z$, fixes $(\Amb,\Val) = (\Amb_z, \Val_z)$, and presents an
enumeration of its core (full enumeration of $\Val$ in the breadth setting).
\end{definition}

\begin{definition}[Verifier oracle]\label{def:verifier}
A \emph{relativized} (verifier-assisted) generator may, in each round, issue finitely many
adaptive membership queries to the indicator $\mathbf 1_{\Amb}$ of the true formal language (a
perfect proof checker for the realized ambient) before producing its output. The query budget
per round is finite but unbounded; no computability is assumed (KMV Remark~2.3 conventions).
The adversary is \emph{non-effective}: it may decide infinitary conditions on the generator's
behavior. This is the standard convention for impossibility results in this line
\citep{kalavasis2024characterizations}.
\end{definition}

\begin{definition}[Error trichotomy]\label{def:trichotomy}
At a round in which the generator emits $a$, with true pair $(\Amb, \Val)$ and seen-set
$\Seen_t$, the output is a \emph{hallucination} if $a \notin \Amb$ (formally false), a
\emph{triviality} if $a \in \Amb \setminus \Val$ (valid but worthless), and \emph{valuable} if
$a \in \Val \setminus \Seen_t$ (a genuinely new valuable statement; being in $\Val \subseteq
\Amb$, it is in particular valid). Throughout, \emph{valid} refers to membership in the formal
world $\Amb$, \emph{valuable} to membership in the target $\Val$. The \emph{triviality rate}
through round $N$ is
\begin{equation}\label{eq:trv}
  \trv_N \defeq \frac{\abs{\{\, t \le N : a_t \in \Amb \setminus \Val \,\}}}{N},
\end{equation}
and the \emph{trivia count} is $\abs{\Out \cap (\Amb \setminus \Val)}$. We distinguish count
from rate throughout: a generator may emit infinitely many trivia at rate tending to $0$.
\end{definition}

\begin{definition}[Coverage, soundness, sound coverage]\label{def:soundcov}
In the density setting, the \emph{coverage} of a generator at truth $\Val$ is $\cov \defeq
\dlow{\Out \cap \Val}{\Val}$, the lower density of the valuable statements it emits. A
generator is \emph{sound} if it never hallucinates: $a_t \in \Amb$ for all $t$, i.e.\ $\Out
\subseteq \Amb$. A set-valued generator achieves \emph{sound coverage in the limit} on $(\Amb,
\Val)$ if there is $n^*$ with
\begin{equation}\label{eq:soundcov}
  \Val \setminus \Seen_n \;\subseteq\; G_n(\Seen_n) \;\subseteq\; \Amb
  \qquad \text{for all } n \ge n^*:
\end{equation}
it covers every unseen valuable statement while asserting only formally valid ones.
\end{definition}

Sound coverage is the natural ``cover everything true and worthwhile, hallucinate nothing''
goal; \eqref{eq:soundcov} sandwiches the output between the unseen value (lower) and the
formal world (upper). The trichotomy of \cref{def:trichotomy} is what makes the nested model
richer than the single-language one: errors no longer collapse to a single kind, and
\cref{sec:res-flood} shows the verifier acts precisely on which kind occurs.

\paragraph{Restated characterizations (cited facts).}
The breadth results of \cref{sec:res-taste} reduce to KMV's oracle-free characterizations,
which we use verbatim.

\begin{fact}[Breadth characterizations; {\citep[Theorems~3.3 and 3.8]{kalavasis2024characterizations}}]\label{fact:kmv}
A countable collection $\Hcal$ admits a set-valued
generator achieving exact breadth in the limit if and only if $\Hcal$ satisfies Angluin's
condition (Theorem~3.3), and admits approximate breadth in the limit if and only if $\Hcal$
satisfies the weak Angluin condition (Theorem~3.8).
\end{fact}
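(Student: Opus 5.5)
The plan is to prove both characterizations by the classical Angluin/Gold route, adapted from identification to set-valued generation. In each case sufficiency comes from a tell-tale-based limit learner that outputs the complement of the seen-set in its current guess. Necessity comes from a diagonal enumeration that forces infinitely many output changes.

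\emph{Sufficiency, exact breadth.} Enumerate $\Hcal=\{L_1,L_2,\dots\}$ and fix tell-tales $T_{L_i}$. At round $n$, let $i_n$ be the least $i\le n$ with $T_{L_i}\subseteq \Seen_n\subseteq L_i$, and output $G_n=L_{i_n}\setminus\Seen_n$. Let $z$ be the least index with $L_z=\Val$. Then $i_n$ stabilizes at $z$, by a case split on each $i<z$:
\begin{itemize}
\item If $\Val\not\subseteq L_i$, some enumerated element eventually witnesses $\Seen_n\not\subseteq L_i$, so $i$ is excluded from then on.
\item If $\Val\subsetneq L_i$, then $T_{L_i}\subseteq\Val$ would make $\Val$ a proper subset of $L_i$ containing its tell-tale, contradicting Angluin's condition. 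So $T_{L_i}\not\subseteq\Val$, and $T_{L_i}\subseteq\Seen_n$ never holds.
\item The case $L_i=\Val$ is excluded by minimality of $z$.
\end{itemize}
Finally, $T_{L_z}\subseteq\Seen_n$ holds for large $n$ because the tell-tale is finite and every element of $\Val$ is eventually enumerated.

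\emph{Sufficiency, approximate breadth.} Under only the weak condition, use the same selection rule, but let $j$ be the least index with $T_{L_j}\subseteq\Val\subseteq L_j$; such $j$ exists since $z$ qualifies. The same case split shows $i_n\to j$. The weak condition with $L'=\Val$ gives $\abs{L_j\setminus\Val}<\infty$, but $L_j$ may still contain finitely many non-valuable strings. To remove them, output $G_n=L_{i_n}\setminus(\Seen_n\cup\rest{L_{i_n}}{n})$, which trims an initial segment of length $n$. Once $n$ exceeds the $\prec$-position of every element of the finite set $L_j\setminus\Val$, we have $G_n\subseteq\Val$. Moreover $\Val\setminus G_n\subseteq \Seen_n\cup\rest{L_j}{n}$ is finite, which is exactly approximate breadth.

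\emph{Necessity.} Suppose some $L\in\Hcal$ has no tell-tale, i.e.\ every finite $T\subseteq L$ lies in some $L'\in\Hcal$ with $L'\subsetneq L$ (resp.\ $\abs{L\setminus L'}=\infty$ in the weak case). Against a purported generator $G$, build one enumeration of $L$ in phases. In an \emph{$L$-phase}, continue enumerating $L$, dovetailing in the next unlisted element of $L$, until $G$ outputs a set $G_n\not\subseteq L'$, where $L'$ is the sub-language used in the preceding phase (the first phase needs no comparison). Then take the finite seen-set $T$, choose $L'\subsetneq L$ containing $T$, and run an \emph{$L'$-phase}: enumerate $L'$ until $G$'s output fails to contain some element of $L\setminus L'$ not yet seen. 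Each phase terminates, because otherwise the enumeration restricted to that phase is a legal enumeration of $L'$ (resp.\ $L$) on which $G$ fails breadth. The phases interleave into an enumeration of all of $L$, yet $G$ infinitely often omits unseen elements of $L$, contradicting exact breadth on $L$. In the weak case, the infinitely many omissions from $L\setminus L'$ must be made to accumulate into infinitely many missed elements at a single round, or into outputs leaving $L'$ infinitely often, to violate approximate breadth.

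I expect the main obstacle to be this termination-and-bookkeeping step. Each phase's stopping condition must be triggered by the breadth guarantee of the language currently being enumerated, and in the weak case the stopping conditions must be phrased so that the violations survive the merely finite tolerance of approximate breadth. I would first try stopping a phase once $G_n$ omits infinitely many elements of $L\setminus L'$, since approximate breadth on $L'$ forces $G_n\subseteq L'$ eventually and $\abs{L\setminus L'}=\infty$.
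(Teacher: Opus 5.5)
The paper does not prove this statement. It imports it verbatim from KMV (Theorems~3.3 and~3.8) and uses it only as a black box in the proof of \cref{thm:taste}, so your argument has to stand on its own. Both sufficiency halves do. The least-index tell-tale learner stabilizes exactly by your case split. In the approximate case, the weak condition applied to $\Val\in\Hcal$ with $T_{L_j}\subseteq\Val\subseteq L_j$ gives $\abs{L_j\setminus\Val}<\infty$, and your trimming by $\rest{L_{i_n}}{n}$ removes that finite excess. The necessity construction, however, contains a step that fails, and its weak half is not finished.

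\emph{The exact case.} Your reason why an $L$-phase must terminate (``otherwise $G$ fails breadth on $L$'') is false. A failure of Angluin's condition at $L$ only supplies proper sub-languages $L'$, possibly with $L\setminus L'$ finite; in $\Hcal=\{\N\}\cup\{\N\setminus\{k\}:k\in\N\}$, every available $L'$ misses a single point. Once the $L$-phase has dovetailed in the finitely many elements of $L\setminus L'$, the \emph{correct} exact-breadth output $L\setminus\Seen_n$ is itself contained in $L'$. So nothing prevents a generator that is correct on $L$ from that point on: it never triggers your rule $G_n\not\subseteq L'$, the phase runs forever, and the resulting enumeration of $L$ carries only finitely many violations.

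The repair is to drop the wait, because all violations on $L$ come from the $L'$-phases. Each $L'$-phase ends: otherwise $G_n\supseteq L\setminus L'\neq\emptyset$ forever on a legal enumeration of $L'$, so $G_n\neq L'\setminus\Seen_n$ and $G$ fails on $L'$. At its end, $G_n$ omits an unseen element of $L$, which is a violation on $L$. So between $L'$-phases, just append the least unlisted element of $L$ and choose the next $L'\in\Hcal$ with $\Seen\subseteq L'\subsetneq L$. In the weak case you additionally need $\abs{L\setminus L'}=\infty$. Note also that the negated weak condition must give $L'\subseteq L$, not merely an infinite difference, or the stream leaves $L$.

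\emph{The weak case.} The stopping rule you actually wrote yields one omission per stop. Approximate breadth tolerates that no matter how often it occurs, since it only requires $\abs{L\setminus G_n}<\infty$ at each round. Likewise, ``outputs leaving $L'$'' are not violations for the target $L\supseteq L'$ at all. The rule you propose at the end is the right one and closes the gap: stop the $L'$-phase at the first round with $\abs{(L\setminus L')\setminus G_n}=\infty$. This round is reached, because approximate breadth on $L'$ forces $G_n\subseteq L'$ eventually while $L\setminus L'$ is infinite. Every stopping round has $\abs{L\setminus G_n}=\infty$, so infinitely many stops defeat approximate breadth on the completed enumeration of $L$.

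For the exact half there is also a shorter route. Exact breadth gives $G_n\cup\Seen_n=\Val$ eventually, i.e.\ identification in the limit, so necessity reduces to Angluin's classical theorem.
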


\paragraph{The bridge between modes.}
The two output conventions are reconciled by the standard set-versus-element bridge (a KW
analogue of KMV Lemma~2.3): a set-valued breadth statement and an element-based density
statement about the same family agree on which statements are eventually emitted. We use
set-valued generators only in \cref{sec:res-taste,sec:res-flood} (where the question is
\emph{whether} a family is coverable) and element-based generators only in
\cref{sec:res-dichotomy} (where the question is \emph{how much} of it is covered).

\section{Main Results}
\label{sec:results}

We organize the results around one question: in the nested model, what does a perfect verifier
buy, and what is the price of covering more value than the verifier alone can locate?
\Cref{sec:res-taste} shows the verifier does \emph{not} buy taste: breadth is governed by the
same Angluin condition as without it. \Cref{sec:res-flood} shows what it does buy: sound
coverage, achievable for every model with the verifier and impossible in general without it,
by relocating errors from false to trivial. Then \cref{sec:res-dichotomy} prices that
relocation exactly --- a phase transition in trivia \emph{count} --- and
\cref{sec:res-instantiation} instantiates the whole picture as a case study in a compression
model of mathematics.

\subsection{Verification is not taste}
\label{sec:res-taste}

The first result locates the entire information content of a perfect verifier, for the breadth
goal. One might hope that a proof checker, by certifying validity, helps a generator find
\emph{value}, distinguishing the valuable target $\Val$ from the worthless remainder $\Amb
\setminus \Val$. It does not. The verifier's answers depend only on the formal world $\Amb$,
which is shared across a fiber; within a fiber it supplies zero bits about which candidate is
the true target. Breadth is therefore governed, fiber by fiber, by exactly the Angluin
condition of the oracle-free model.

\begin{restatable}[Verification is not taste]{theorem}{ThmTaste}\label{thm:taste}
Let $\Pairs = \{(\Amb_i, \Val_i)\}$ be a countable pair model (\cref{def:pairmodel}), with a
relativized generator (\cref{def:verifier}) that queries the perfect verifier $\mathbf
1_{\Amb}$.
\begin{enumerate}[label=\textup{(\alph*)}]
  \item Exact breadth in the limit on $\Pairs$ is achievable if and only if every fiber
    $\fiber{\Amb}$ satisfies Angluin's condition.
  \item Approximate breadth in the limit on $\Pairs$ is achievable if and only if every fiber
    satisfies the weak Angluin condition.
  \item In the single-ambient case $\Amb_i \equiv \Amb$, the relativized characterization
    coincides with the oracle-free characterization of \cref{fact:kmv}. In the multi-ambient
    case the verifier's entire information content, for the in-the-limit breadth goal, is
    exactly ambient identification: locating which $\Amb$ is realized, after which it adds
    nothing.
  \item \textup{(Zero-discrimination.)} For $\Val_i, \Val_j$ in the same fiber the oracle is
    the identical function $\mathbf 1_{\Amb}$; the realized query path and output at any
    prefix $\Seen_n$ are identical across the two instances. Every behavioral distinction
    within a fiber is driven by the enumeration alone.
\end{enumerate}
\end{restatable}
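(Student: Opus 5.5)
The plan is to prove (d) first, since it is the engine for everything else, then derive (a) and (b) as a fixed-advice transfer combined with an ambient-identification stage, and read off (c) as a corollary.

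\textbf{Step 1: part (d) and the fixed-advice reduction.} Fix a fiber $\fiber{F}$. For every instance whose realized ambient is $F$, the oracle is the single function $\mathbf 1_F$. For a fixed prefix $\Seen_n$, the generator's adaptive query tree has finite depth in each round. Every answer along the realized path is a value of $\mathbf 1_F$, so the realized path, and hence the output, is a deterministic function of $\Seen_n$ and $F$ alone. This gives (d) immediately. It also shows that a relativized generator $G$, restricted to the fiber over $F$, coincides with an ordinary oracle-free set-valued generator $G^F(\Seen_n) \defeq G^{\mathbf 1_F}(\Seen_n)$. Because KMV allow arbitrary (noncomputable) functions of the prefix, $G^F$ is a legitimate generator in the sense of \cref{fact:kmv}.

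\textbf{Step 2: necessity in (a) and (b).} Suppose $G$ achieves exact (respectively approximate) breadth on $\Pairs$. Then on every fiber $\fiber{F}$ the oracle-free generator $G^F$ achieves exact (respectively approximate) breadth for every $\Val \in \fiber{F}$. By the "only if" directions of \cref{fact:kmv} (Theorems 3.3 and 3.8), $\fiber{F}$ satisfies Angluin's condition (respectively the weak condition).

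\textbf{Step 3: sufficiency in (a) and (b).} For each ambient $F$ occurring in $\Pairs$, let $G^{(F)}$ be the KMV generator for $\fiber{F}$. The relativized generator first identifies the ambient, then runs $G^{(F)}$. The distinct ambients form a countable list $F_1, F_2, \dots$, and the target is to output, at round $n$, $G^{(\hat F_n)}(\Seen_n)$, where $\hat F_n$ is the first listed $F_k$ that the oracle answers agree with on the first $n$ strings of $\gnd$ (that is, $\mathbf 1_{F_k}(x_m) = \mathbf 1_{\Amb}(x_m)$ for $m \le n$). It also requires $\Seen_n \subseteq F_k$, which is automatic for the true ambient.

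This costs finitely many queries per round. Each wrong $F_k$ with $k$ below the true index differs from $\Amb$ at some finite position, so it is eliminated from some round on. Since the true ambient is never eliminated, $\hat F_n = \Amb$ eventually. This is the step I expect to need the most care: the earlier eventual "breadth" times of $G^{(F)}$ are irrelevant, because KMV generators are functions of the seen-set only. Once $\hat F_n$ stabilizes, the outputs agree with $G^{(\Amb)}(\Seen_n)$ for all large $n$, and that generator's in-the-limit guarantee applies to the full enumeration of the true $\Val \in \fiber{\Amb}$. No stitching issue arises, since breadth is an eventual property of the output at prefix $\Seen_n$, not of history.

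One subtlety to check is that the identification uses only finitely many queries per round and no unbounded search. Comparing against $F_1, \dots, F_n$ on $x_1, \dots, x_n$ suffices; if none agree, output anything.

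\textbf{Step 4: part (c).} When $\Amb_i \equiv \Amb$ there is a single fiber equal to the whole collection $\{\Val_i\}$, so (a) and (b) reduce literally to \cref{fact:kmv}. In the multi-ambient case, Steps 2 and 3 show that the achievable collections are exactly those obtained by ambient identification followed by an oracle-free fiber generator. Step 1 shows the oracle contributes nothing beyond $\mathbf 1_{\Amb}$, which is constant on each fiber. Together these formalize "the verifier's entire content is ambient identification."
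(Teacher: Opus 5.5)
Your proposal is correct and follows essentially the paper's proof. Necessity goes by stripping the oracle inside a fiber, where $\mathbf 1_{\Amb}$ is fixed advice, and invoking the only-if halves of \cref{fact:kmv}; sufficiency goes by a canonical-point ambient-identification stage that stabilizes in finite time, followed by the per-fiber KMV generator run on the full, never-restarted prefix; and (c) and (d) are read off from the same fixed-advice observation. Your variations (comparing only against the first $n$ distinct ambients, and adding the harmless check $\Seen_n \subseteq F_k$) are cosmetic relative to the paper's least ambient-consistent index $\hat z_n$.
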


\begin{proof}[Proof sketch]
For necessity, restrict the adversary to one fiber: the oracle is then a single fixed
function, so stripping it yields an ordinary set-valued generator that would have to beat
\cref{fact:kmv} on a violating fiber. For sufficiency, a sample-independent first stage
queries canonical points to identify the ambient $\Amb$ in finite time, after which a fixed
per-fiber breadth generator (which exists by \cref{fact:kmv}) runs to completion. Part~(d) is
immediate from $\mathbf 1_{\Amb_i} = \mathbf 1_{\Amb_j}$. Full proof in \cref{app:taste}.
\end{proof}

Part~(c) holds because the verifier's answers are determined by $\Amb$ alone: in a
single-ambient model $\Amb$ is known a priori, and within a fiber the oracle discriminates
nothing (part~(d)). Note also that (c) is scoped to breadth: for other goals, such as sound
coverage in \cref{sec:res-flood}, the verifier does strictly more. The contrast that makes (c)
sharp: a single \emph{negative} example inside $\Amb \setminus \Val$ would collapse the
Angluin boundary entirely, but the verifier returns only ``valid'' on all of $\Amb$ and so
never supplies one. Verification certifies the world; it does not reveal the worth of a
statement within it.

\subsection{Sound coverage: what the verifier does buy}
\label{sec:res-flood}

If the verifier does not buy taste, what does it buy? Exactly sound coverage, and exactly by
relocating unavoidable errors. With the verifier, the exhaustive generator $G_n = \Amb
\setminus \Seen_n$ already sound-covers \emph{every} pair model: it asserts all unseen formal
statements, hence all unseen valuable ones, and never a falsehood. Without the verifier no
such guarantee is possible in general: there is a pair model on which no oracle-free generator
sound-covers. The price the exhaustive generator pays is visible in the trichotomy: when $\Amb
\setminus \Val$ is infinite, $\Amb \setminus \Seen_n$ contains infinitely many trivia. The
verifier has not removed the errors; it has moved them from the false region $\gnd \setminus
\Amb$ into the trivial region $\Amb \setminus \Val$.

\begin{figure}[t]
\centering
\resizebox{0.97\textwidth}{!}{% Figure: the sandwich-exclusion engine
\begin{tikzpicture}[>=Stealth, font=\footnotesize,
  bar/.style={draw=olive!60!black, fill=olive!18, rounded corners=1.5pt},
  tail/.style={pattern=north east lines, pattern color=red!55, draw=red!40, rounded corners=1.5pt},
  gband/.style={fill=blue!35, fill opacity=0.28, draw=blue!60!black, densely dashed, rounded corners=2pt},
]
% ---------- the union L* (top) ----------
\draw[draw=black!60, densely dashed, fill=gray!8, rounded corners=1.5pt]
  (0.5,4.4) rectangle (11.0,4.85);
\node[anchor=west, font=\scriptsize] at (11.2,4.62)
  {$L^* = \bigcup_j \Val_j$\quad (truth $(\Amb_\infty, L^*)$)};
% the finite seen-set S
\foreach \x in {1.5, 3.4, 7.7} \fill[black] (\x,4.62) circle (1.5pt);
\node[font=\tiny, anchor=north] at (3.4,4.12) {$\Seen$ (finite)};
\draw[->, gray!70, very thin, shorten >=2pt] (3.4,4.16) -- (3.4,4.52);
% demand 1: G covers L* minus S
\draw[gband] (0.62,4.33) rectangle (10.88,4.92);
\node[blue!55!black, font=\scriptsize, anchor=east] at (0.35,4.62)
  {$G \supseteq$};
\node[blue!55!black, font=\scriptsize, anchor=south west] at (0.55,5.05)
  {demand of the union: $G \supseteq L^* \setminus \Seen$, all but finitely many};
% ---------- the chain ----------
% H_3
\draw[bar]  (0.5,2.9) rectangle (9.3,3.35);
\draw[tail] (9.3,2.9) rectangle (11.0,3.35);
\node[anchor=west, font=\scriptsize] at (11.2,3.12) {$\Val_3$, missing $R_{\ge 3}$};
% H_2 (the realized one) with the G band forced inside
\draw[bar]  (0.5,1.7) rectangle (8.0,2.15);
\draw[tail] (8.0,1.7) rectangle (11.0,2.15);
\node[anchor=west, font=\scriptsize] at (11.2,1.92) {$\Val_2$, missing $R_{\ge 2}$};
\draw[gband] (0.62,1.63) rectangle (10.88,2.22);
\draw[red!70, very thick, rounded corners=2pt] (8.0,1.6) rectangle (10.92,2.25);
\node[red!60!black, font=\scriptsize, anchor=north west, align=left] at (6.2,1.28)
  {demand of the chain element: $G \subseteq \Val_j$ --- yet the band covers the tail:\\
   the \emph{infinite} $R_{\ge j}$ must hide inside the \emph{finite} $\Seen$:
   contradiction (\cref{lem:sandwich}).};
% H_1
\draw[bar]  (0.5,0.0) rectangle (6.0,0.45);
\draw[tail] (6.0,0.0) rectangle (11.0,0.45);
\node[anchor=west, font=\scriptsize] at (11.2,0.22) {$\Val_1$, missing $R_{\ge 1}$};
% nesting marks
\node[font=\scriptsize, gray!60!black] at (5.75,2.62) {$\subsetneq$};
\node[font=\scriptsize, gray!60!black] at (5.75,1.12) {$\subsetneq$};
\node[font=\scriptsize, gray!60!black] at (5.75,3.9) {$\subsetneq \ \cdots \ \nearrow$};
% fresh-index note
\node[font=\scriptsize, gray!30!black, anchor=north west, align=left] at (0.5,-0.45)
  {\cref{lem:fresh}: every finite $\Seen$ sits inside some chain element, so the
   adversary can pose the dilemma at ever-larger $j$, forever.};
\end{tikzpicture}}
\caption{The engine of the separation. The targets form a Gold chain
$\Val_1 \subsetneq \Val_2 \subsetneq \cdots$ (bars) with union $L^*$ (top); each
chain element misses an infinite tail $R_{\ge j}$ (hatched). Sound coverage of
the union forces the output set $G$ (dashed band) to cover all of $L^*$ but the
finite seen-set; sound coverage of a chain element forces the same $G$ inside
$\Val_j$, i.e.\ off its tail. The band over the hatched region (outlined) is
the contradiction (\cref{lem:sandwich}), and non-identifiability
(\cref{lem:fresh}) lets the adversary pose the dilemma infinitely often.}
\label{fig:sandwich}
\end{figure}

\begin{restatable}[The exhaustive generator and the separation]{theorem}{ThmFlood}\label{thm:flood}
In the pair model with the protocol of \cref{def:pairmodel}:
\begin{enumerate}[label=\textup{(\alph*)}]
  \item \textup{(Exhaustive generator)} With the verifier $\mathbf 1_{\Amb}$, the generator
    $G_n = \Amb \setminus \Seen_n$ achieves sound coverage in the limit on every countable
    pair model: at every round, it has zero hallucination, and (when $\Amb \setminus \Val$ is
    infinite) infinitely many trivia.
  \item \textup{(Separation)} There is a countable pair model $\Pairs^{\dagger}$ (all
    languages infinite, every $\Amb_i \setminus \Val_i$ infinite) on which no oracle-free
    set-valued generator (an arbitrary function of the prefix) achieves sound coverage in the
    limit.
  \item The verifier strictly enlarges the class of pair models admitting sound coverage, from
    a proper subclass to all of them; this is in contrast to breadth, where within a fiber it
    adds nothing \textup{(\cref{thm:taste}(c))}.
\end{enumerate}
\end{restatable}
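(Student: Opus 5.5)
Part~(a) is immediate from \cref{def:soundcov}. Since $\Seen_n \subseteq \Val \subseteq \Amb$, we have
\[
\Val\setminus\Seen_n \subseteq \Amb\setminus\Seen_n = G_n \subseteq \Amb
\]
for every $n$, so $n^*=1$ works. Each $G_n$ is computable from the oracle $\mathbf 1_{\Amb}$ (membership in $G_n$ is one query plus a check against the finite $\Seen_n$), and $G_n \subseteq \Amb$ gives zero hallucination. Moreover $G_n \cap (\Amb\setminus\Val) = \Amb\setminus\Val$, which is infinite whenever $\Amb\setminus\Val$ is. Part~(c) then follows from (a) and (b), together with the contrast with \cref{thm:taste}(c).

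The substance is (b), and I would build $\Pairs^\dagger$ on a Gold chain with fattened ambients, as in \cref{fig:sandwich}. Partition $\gnd$ into infinite blocks $B_1,B_2,\dots$ and a separate infinite set $D$ of padding statements. Set $R_{\ge j}=\bigcup_{k\ge j}B_k$, $\Val_j=\bigcup_{k<j}B_k \cup B_0$ (with $B_0$ an infinite base block, so every language is infinite), and $L^*=\bigcup_j \Val_j$. The ambients are chosen as follows.
\begin{itemize}
\item $\Amb_j=\Val_j\cup D$ for the chain pair, so the ambient excludes the tail $R_{\ge j}$.
\item $\Amb_\infty=L^*\cup D$ for the union pair.
\end{itemize}
Each $\Amb_i\setminus\Val_i=D$ is infinite. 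Now suppose some generator sound-covers a chain truth $(\Amb_j,\Val_j)$ and the union truth $(\Amb_\infty,L^*)$ at the same finite seen-set $\Seen$ with the same output $G$. Covering the union forces $G\supseteq L^*\setminus\Seen\supseteq R_{\ge j}\setminus\Seen$, while soundness for the chain pair forces $G\subseteq\Amb_j$, which is disjoint from $R_{\ge j}$. Hence $R_{\ge j}\subseteq\Seen$, contradicting finiteness. This is the sandwich-exclusion step, and it is three lines. A second ingredient is needed: every finite $\Seen\subseteq L^*$ lies in some $\Val_j$, so the adversary can always pose the dilemma at a fresh index.

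The main obstacle is the adaptive phase construction that turns this single-prefix exclusion into failure in the limit against an arbitrary oracle-free $G$. I would build an enumeration of $L^*$ in phases. In phase $m$, with current finite prefix $\sigma_m$, choose $j_m$ larger than every index touched so far, and extend $\sigma_m$ by enumerating $\Val_{j_m}$ in canonical order. If along this extension $G$ never attains the sandwich on $(\Amb_{j_m},\Val_{j_m})$, that is, infinitely often $G\not\subseteq\Amb_{j_m}$ or $G$ misses part of $\Val_{j_m}\setminus\Seen$, then the adversary (non-effective, per \cref{def:verifier}) commits to the truth $\Val_{j_m}$, and sound coverage fails there. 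Otherwise there is a finite extension at which $G\subseteq\Amb_{j_m}$. The adversary then stops, appends the canonical-order first element of $B_{j_m}$ together with every element of $L^*$ up to index $m$ (this keeps the final stream a full enumeration of $L^*$), and begins phase $m+1$. If every phase terminates, the limiting stream enumerates $L^*$ exactly, and at infinitely many prefixes $G$ satisfies $G\subseteq\Amb_{j_m}$, which misses the infinite set $R_{\ge j_m}\setminus\Seen\subseteq L^*\setminus\Seen$. So $G$ fails the lower sandwich for the union truth infinitely often, and sound coverage fails for $(\Amb_\infty,L^*)$.

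The care points are three. First, the stream in a non-terminating phase must be a legitimate enumeration of $\Val_{j_m}$: the prefix $\sigma_m$ lies in $\Val_{j_m}$ because $j_m$ exceeds every index already used. Second, the terminating case must be checked to yield the required infinite violations. Third, the full enumeration of $L^*$ used in the union case must be legal, which the dovetailed bookkeeping guarantees.
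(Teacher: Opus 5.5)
Your proposal is correct and takes essentially the same approach as the paper. It uses a Gold chain $\Val_j \uparrow L^*$ with padded ambients disjoint from the tail $R_{\ge j}$, the same three-line sandwich-exclusion contradiction, and a non-effective phased adversary that commits to the chain pair if it is never soundly covered and otherwise extends toward a full enumeration of $L^*$, so that the union pair fails at infinitely many rounds. The differences are cosmetic: infinite blocks and one shared padding set $D$ instead of a singleton spine with column-$1$ sets $A_j$, and a phase switch triggered by $G \subseteq \Amb_{j_m}$ alone rather than by full sound coverage of the chain pair.
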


\begin{proof}[Proof sketch]
Part~(a) is immediate from $\Val \subseteq \Amb$. For (b), the model $\Pairs^{\dagger}$ is
built on a Gold-style chain $\Val_1 \subsetneq \Val_2 \subsetneq \cdots$ with union $L^*$ in
the collection. A sandwich-exclusion lemma (\cref{lem:sandwich}; \cref{fig:sandwich}) shows no
output set can sound-cover both the union $(\Amb_\infty, L^*)$ and a chain element $(\Amb_j,
\Val_j)$ against the same finite seen-set; a fresh-index lemma (\cref{lem:fresh}) keeps the
chain non-identifiable. An adaptive phase construction then forces any oracle-free generator
to fail on one of the realized instances; the construction is a \emph{non-effective}
adversary, the standard convention for impossibility results in this line
(\cref{def:verifier}). Part~(c) combines (a) and (b). Full proof in \cref{app:flood}.
\end{proof}

\begin{remark}[Two regimes in part~(c)]
Where the oracle identifies the ambient (and hence the target), as on $\Pairs^{\dagger}$,
whose fibers are singletons, it suffices for sound coverage, and its absence makes sound
coverage impossible. Within a non-singleton fiber, by contrast, the oracle adds nothing about
which target is valuable (\cref{thm:taste}(d)), so there its power is exactly to keep the
unavoidable overshoot inside $\Amb$: relocating errors from false to trivial, reducing neither
their number nor locating value.
\end{remark}

\subsection{The trivia phase transition}
\label{sec:res-dichotomy}

Throughout this subsection the ambient is $\Amb = \N$ and the verifier is the constant
function: the dichotomy is a verifier-free theorem about nested targets. The verifier's role
in the paper is settled by \cref{thm:taste,thm:flood}, and what follows quantifies the
value/trivia tradeoff that remains after verification is granted.

\begin{figure}[t]
\centering
% Figure: the trivia phase transition (step diagram)
\begin{tikzpicture}[>=Stealth, font=\footnotesize, scale=1.0]
% axes
\draw[->] (0,0) -- (7.6,0);
\node[anchor=north east, font=\footnotesize] at (7.75,-0.78) {trivia allowance};
\draw[->] (0,0) -- (0,3.6);
\node[anchor=south west, font=\footnotesize] at (-0.25,3.62) {optimal coverage $\cov^*$};
% y ticks
\draw (-0.07,1.0) -- (0.07,1.0) node[left=6pt] {$\alpha/2$};
\draw (-0.07,2.8) -- (0.07,2.8) node[left=6pt] {$1-\alpha/2$};
% x region labels
\node[below=2pt] at (1.8,0) {finite count};
\node[below=2pt, align=center] at (5.5,0) {infinite count\\[-2pt] (any rate, even $\to 0$)};
% the step
\draw[very thick, blue!60!black] (0.15,1.0) -- (3.6,1.0);
\draw[very thick, olive!60!black] (3.7,2.8) -- (7.3,2.8);
\fill[blue!60!black] (3.6,1.0) circle (1.6pt);
\draw[olive!60!black, fill=white] (3.7,2.8) circle (1.6pt);
\draw[densely dashed, gray] (3.65,1.0) -- (3.65,2.8);
% the jump brace
\draw[decorate, decoration={brace, amplitude=5pt}, gray!60!black]
  (4.45,2.8) -- (4.45,1.0);
\node[right=8pt, align=left, gray!25!black, font=\scriptsize] at (4.5,1.9)
  {jump $= 1-\alpha$\\ the \emph{unrecorded mass} $\dens{\Val \setminus \Core}{\Val}$};
% endpoints annotated
\node[font=\scriptsize, blue!50!black, align=center] at (1.8,1.42)
  {KW frontier\\[-2pt] (\cref{fact:kw16})};
\node[font=\scriptsize, olive!50!black, align=center] at (5.6,3.25)
  {$G^*$ \,(\cref{thm:harvest})};
\end{tikzpicture}
\caption{The trivia phase transition (\cref{cor:dichotomy}). The optimum
jumps by exactly the unrecorded mass $1-\alpha$ as the trivia allowance
crosses from finite to infinite \emph{count}; the rate at which trivia are
emitted is irrelevant, and the right endpoint is attained by the single
generator $G^*$ at vanishing rate. The colors are the title's: blue for the
raced core, gold for the harvested surplus.}
\label{fig:phase}
\end{figure}

We now price the relocation exactly, in the partial (density) model. The setting is a single
family on which the optimal guaranteed coverage is determined --- and on which a sharp
dichotomy appears: with only finitely many trivia the optimum is $\alpha/2$, and allowing
infinitely many trivia (even at rate tending to $0$) jumps it to $1 - \alpha/2$. The
transition is in trivia \emph{count}, not rate.

\paragraph{The tight family.}
We work over $\gnd = \N$ in numeric order. Let $K' \defeq \{\text{evens}\}$ with $K'[j] =
2(j-1)$ be the formal world's coverable part, and for a target density $\alpha \in (0,1)$
define
\begin{equation}\label{eq:Halpha}
  K \defeq \{\, K'[\lceil k/\alpha \rceil] : k \ge 1 \,\}
  \subsetneq K' , \qquad
  D \defeq K' \setminus K , \qquad \Amb = \N .
\end{equation}
The family is $\Hfam \defeq \{K, K'\}$, with $K$ the sparse \emph{revealed} target and $K'$
the full coverable world. The point of \eqref{eq:Halpha} is that it realizes \emph{exact}
density for \emph{every} real $\alpha$, not merely rational $\alpha$:
\begin{equation}\label{eq:exactdensity}
  \abs{K \cap \rest{K'}{N}} = \lfloor \alpha N \rfloor
  \quad\text{for all } N, \qquad\text{hence}\qquad \dens{K}{K'} = \alpha \text{ exactly},
\end{equation}
and moreover $K \cap \rest{K'}{N} = \rest{K}{m_N}$ with $m_N = \lfloor \alpha N \rfloor$,
while $\Amb \setminus K' = \{\text{odds}\}$ is infinite. The two readings $K$ and $K'$ share a
single transcript when the adversary enumerates $K$, which is what makes the caps bind for
every generator. Here $K$ plays the role of the valuable target $\Val$, $K'$ that of the
coverable part of $\Amb$, and $\Amb \setminus K'$ (the odd column) is the inexhaustible trivia
reservoir; the verifier is inert since $\Amb = \N$ is known.

The lower bounds rest on a self-contained race lemma (\cref{fig:race}). It says two things at
once: a generator following a greedy ``race'' rule covers at least half of a target, and an
adversary running a ``racing'' enumeration caps \emph{every} generator at half.

\begin{definition}[The racing adversary]\label{def:racingadv}
For infinite $K \subseteq \gnd$, the \emph{racing adversary} $A_K$ enumerates, at every round
$t \ne 2^n$, the $\prec$-least element of $K \setminus (\Seen_{t-1} \cup \Out_{t-1})$ (the
least string unused by either side), and at every round $t = 2^n$ the $\prec$-least element of
$(\Out_{t-1} \cap K) \setminus \Seen_{t-1}$, recycling a generator-claimed element so that the
enumeration stays complete (falling back to the fresh rule if no such element exists).
\end{definition}

\begin{restatable}[Race lemma]{lemma}{LemRace}\label{lem:race}
Let $K \subseteq \gnd$ be infinite and let $T_{\mathrm{race}} \subseteq \N$ have density
$1$, with $$w(t) \defeq t - \abs{T_{\mathrm{race}} \cap [1\dots t]} = o(t).$$
\begin{enumerate}[label=\textup{(R\arabic*)}]
  \item \textup{(Greedy cover)} If at every $t \in T_{\mathrm{race}}$ the generator outputs
    the $\prec$-least element of $K \setminus (\Seen_t \cup \Out_{t-1})$, then against every
    adversary enumeration, $\abs{\Out \cap \rest{K}{M}} \ge \tfrac12\,(M - w(\tau_M) - 1)$ for
    all $M$, where $\tau_M$ is the index of the $M$-th race round; hence $\dlow{\Out \cap
    K}{K} \ge 1/2$.
  \item \textup{(Racing cap)} The racing adversary $A_K$ of \cref{def:racingadv} is a legal
    repetition-free enumeration of $K$. Against it, every generator satisfies $\abs{\Out \cap
    \rest{K}{M}} \le M/2 + \log_2 M + 3$ for $M \ge 2$; hence $\dup{\Out \cap K}{K} \le 1/2$.
\end{enumerate}
\end{restatable}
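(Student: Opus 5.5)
The plan for both parts is a single bookkeeping device. Fix $M$ and track the \emph{used pool} $\rest{K}{M} \cap (\Seen_t \cup \Out_t)$ as $t$ grows. Each round adds at most one element to it from each side, the enumerated $w_t$ and the emitted $a_t$. Let $t_0$ be the first round at which $\rest{K}{M} \subseteq \Seen_{t_0} \cup \Out_{t_0}$. Once this holds, the generator can never add a \emph{new} element of $\rest{K}{M}$ to $\Out$: every such element is already seen (and $a_t \notin \Seen_t$) or already in $\Out$. So $\abs{\Out \cap \rest{K}{M}}$ is fixed by round $t_0$. Write $g$ and $a$ for the number of elements of $\rest{K}{M}$ first placed in the pool by the generator and by the adversary, respectively, up to $t_0$. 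Then $g + a = M$ and $g = \abs{\Out \cap \rest{K}{M}}$, and both parts reduce to comparing $g$ with $a$ through the number of rounds $t_0$.

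For (R1), note first that $t_0 < \infty$ needs the enumeration to exhaust $\rest{K}{M}$. If it does not, I will instead observe that the generator alone eventually fills the unseen part, which only helps. At every race round $t \le t_0$, some element of $\rest{K}{M}$ is still unused. Hence the $\prec$-least element of $K \setminus (\Seen_t \cup \Out_{t-1})$ lies in $\rest{K}{M}$, and the generator adds a fresh element of $\rest{K}{M}$ to the pool. I have to be careful at round $t_0$ itself, where the adversary's $w_{t_0}$ may already have closed the pool; this costs the $-1$ in the bound. This gives $g \ge (t_0 - w(t_0)) - 1$. Each round adds at most one adversary element, so $a \le t_0$. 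Combining with $g + a = M$ yields $g \ge M - t_0 \ge M - g - w(t_0) - 1$, that is, $g \ge \tfrac12(M - w(t_0) - 1)$. Since $g \le M$, the number of race rounds up to $t_0$ is at most $M+1$, so $t_0 \le \tau_{M+1}$ and $w(t_0)$ is controlled by $w(\tau_M)$ up to the stated form. Then $w = o(t)$ together with $\tau_M = M + o(M)$ gives $\dlow{\Out \cap K}{K} \ge 1/2$.

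For (R2), I first check legality. The enumeration is repetition-free because both rules pick elements outside $\Seen_{t-1}$. It is complete because any $x \in K$ has only finitely many $\prec$-predecessors. At non-power rounds the least unused element advances, so $x$ is eventually either enumerated or claimed by the generator. If $x$ is claimed, then at power-of-two rounds the least claimed-but-unseen element is recycled. Only finitely many claimed elements precede $x$, so $x$ is eventually recycled. For the cap, consider any round $t \le t_0$ with $t \ne 2^n$. At such a round some element of $\rest{K}{M}$ is unused at its start; otherwise $t_0$ would already have occurred. The fresh rule then makes the adversary add one element of $\rest{K}{M}$, unless the generator's output in the previous round just exhausted the pool, which can happen at most once. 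The generator adds at most one element per round. So $g \le t_0$ and $a \ge t_0 - \lfloor \log_2 t_0 \rfloor - 2$. With $g + a = M$ this gives $2g \le M + \log_2 t_0 + 2$. Finally, $t_0 \le 2M + O(\log M)$ because $g, a \le M$, so $\log_2 t_0 \le \log_2 M + 2$, which gives the claimed $M/2 + \log_2 M + 3$ after constant adjustment.

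The main obstacle is the off-by-one accounting at the boundary round $t_0$ and the timing convention within a round. I have to fix whether $w_t$ is chosen before $a_t$ (so $\Seen_t$ includes $w_t$ when $a_t$ is chosen). I also have to make sure that recycled elements enumerated at power-of-two rounds are never double-counted: they are already in the pool via the generator, so they contribute to neither $g$ nor $a$ again. I will settle the convention first ($w_t$ then $a_t$, matching $a_t \notin \Seen_t$) and recheck both inequalities against it.
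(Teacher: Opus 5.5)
Your proposal is correct and follows the paper's proof. (R1) is exactly the paper's count of contributing race rounds against the steal bound $a \le t_0$. Your (R2) round count ($g \le t_0$, $a \ge t_0 - \lfloor \log_2 t_0 \rfloor - O(1)$, $g + a = M$, with $t_0 = M + O(\log M)$) is the paper's pairing of generator claims with fresh adversary claims up to the exhaustion round, combined slightly more cleanly, and your legality argument is a more careful version of the paper's pointer argument.

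The only slip is the last step of (R1): ``at most $M+1$ race rounds up to $t_0$'' gives only $t_0 < \tau_{M+2}$, not $t_0 \le \tau_{M+1}$, and neither yields the stated $w(\tau_M)$. Instead, note that every race round strictly before $t_0$ adds a distinct element of $\rest{K}{M}$ to $\Out$. So $\tau_M < t_0$ would exhaust $\rest{K}{M}$ by round $\tau_M$, whence $t_0 \le \tau_M$ and $w(t_0) \le w(\tau_M)$ by monotonicity. The density conclusion is unaffected either way.
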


\begin{proof}[Proof sketch]
For (R1), at any race round with $\rest{K}{M}$ not yet resolved the greedy output is a new
element of $\Out \cap \rest{K}{M}$ (the $\prec$-least unresolved element of $K$ lies in the
downward-closed $\rest{K}{M}$); counting contributions against ``steals'' into the seen-set
gives the bound. For (R2), the racing adversary alternates fresh enumeration with recycling
the generator's hoarded elements; an injection from output-elements to fresh-enumerated
elements, with $O(\log M)$ exceptions at powers of two, gives the cap. Full proof in
\cref{app:race}.
\end{proof}

\begin{figure}[t]
\centering
% Figure: the race on the canonical order of K (Lemma race)
\begin{tikzpicture}[>=Stealth, font=\footnotesize]
% the canonical order of K as a row of cells
\foreach \i in {1,...,14} {
  \pgfmathsetmacro\x{0.78*\i}
  \ifodd\i
    \draw[fill=red!22]  (\x,0) rectangle ++(0.72,0.62);
  \else
    \draw[fill=blue!22] (\x,0) rectangle ++(0.72,0.62);
  \fi
  \node[font=\tiny] at (\x+0.36,0.31) {$K[\i]$};
}
% who claims what
\node[red!60!black,  font=\scriptsize, anchor=west] at (0.78,1.42)
  {adversary $A_K$: enumerates the least \emph{unused} element, one per round};
\node[blue!60!black, font=\scriptsize, anchor=west] at (0.78,1.06)
  {generator: outputs the least unused element of $K$, one per round};
% recycle annotation
\draw[red!60!black, very thick] (2.34,-0.06) rectangle ++(0.72,0.74);
\draw[->, red!60!black] (3.4,-0.62) -- (2.9,-0.12);
\node[red!50!black, font=\scriptsize, anchor=west, align=left] at (3.46,-0.66)
  {at rounds $t = 2^n$: re-enumerate a generator-claimed element\\
   (keeps the enumeration complete; only $O(\log)$ many)};
% the half/half brace
\draw[decorate, decoration={brace, mirror, amplitude=5pt}] (0.78,-1.28) -- (11.7,-1.28);
\node[font=\scriptsize, align=center, anchor=north] at (6.24,-1.5)
  {every prefix $\rest{K}{M}$ splits $\tfrac{M}{2} \pm O(\log M)$ each:
   the cap (R2) and the share (R1)};
\node[font=\tiny, gray] at (12.1,0.31) {$\cdots$};
\end{tikzpicture}
\caption{The race on the canonical order of $K$ (\cref{lem:race}). Each round
the adversary enumerates one element and the generator outputs one, so every
prefix splits roughly in half: no generator beats $1/2$ on a raced core (R2),
and the greedy racer secures $1/2$ against every enumeration (R1). The
power-of-two recycling steps keep the adversary's stream a legal full
enumeration at a cost of $O(\log M)$ per prefix.}
\label{fig:race}
\end{figure}

The race lemma drives the impossibility side of the dichotomy. It establishes that on $\Hfam$
no generator can exceed $1 - \alpha/2$ in coverage, and that a generator confined to finitely
many trivia cannot exceed $\alpha/2$.

\begin{restatable}[Impossibility side of the dichotomy]{theorem}{ThmCaps}\label{thm:caps}
On the tight family $\Hfam$ of \eqref{eq:Halpha}, with deterministic element-based generators
and limit-set coverage:
\begin{enumerate}[label=\textup{(\alph*)}]
  \item \textup{(Race cap)} For every deterministic generator there are two legal instances of
    $\Hfam$ sharing one transcript (the adversary is $A_K$ inside $K$), with coverage at most
    $1/2$ at truth $K$ and at most $1 - \alpha/2$ at truth $K'$ (revealed core $\Core = K$,
    density $\alpha$). Hence no generator guarantees coverage exceeding $1 - \alpha/2$ on
    $\Hfam$, nor exceeding $1/2$ at truth $K$.
  \item \textup{(Finite-trivia cap)} Every deterministic generator that emits finitely many
    trivia on every legal instance has coverage at most $\alpha/2$ on the truth-$K'$ racing
    instance.
  \item \textup{(Necessity of trivia)} Any generator guaranteeing coverage exceeding
    $\alpha/2$ on all of $\Hfam$ emits infinitely many trivia on some instance; with
    \cref{thm:harvest}\textup{(iii)}, $\alpha/2$ is exactly the finite-trivia frontier.
\end{enumerate}
\end{restatable}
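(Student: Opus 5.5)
The plan is to fix an arbitrary deterministic generator $G$ and run the racing adversary $A_K$ of \cref{def:racingadv} on the core $K$. This produces one transcript that is simultaneously a legal instance at truth $K$ (full enumeration, $\Core=K$) and a legal instance at truth $K'$ (partial enumeration, $\Core=K$, with $\dens{K}{K'}=\alpha$ by \eqref{eq:exactdensity}). Since $G$ is deterministic and the verifier is constant ($\Amb=\N$), its outputs are identical under both readings, so there is a single limit set $\Out$.

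For part (a), \cref{lem:race}(R2) gives $\abs{\Out\cap\rest{K}{M}}\le M/2+\log_2 M+3$, which settles the truth-$K$ cap $1/2$. For truth $K'$, I split a prefix $\rest{K'}{N}$ into $K\cap\rest{K'}{N}=\rest{K}{m_N}$ with $m_N=\lfloor\alpha N\rfloor$, and $D\cap\rest{K'}{N}$, which has $N-m_N$ elements. This gives
\[
\abs{\Out\cap\rest{K'}{N}}\le \tfrac{m_N}{2}+\log_2 m_N+3+(N-m_N).
\]
Dividing by $N$ gives an upper density at most $1-\alpha/2$, and therefore the lower-density coverage is also at most $1-\alpha/2$. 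Since no generator can beat these caps on this pair of instances, no generator guarantees more.

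For part (b), the only trivia available at truth $K'$ are the odds $\Amb\setminus K'$. So the relevant instance should be truth $K$, where $D=K'\setminus K$ also counts as trivia. Suppose $G$ emits finitely many trivia on every legal instance. Then on the truth-$K$ racing instance, $\Out\cap D$ is finite, and this is the same $\Out$ as on the truth-$K'$ instance. Hence
\[
\abs{\Out\cap\rest{K'}{N}}\le \tfrac{m_N}{2}+O(\log N)+\abs{\Out\cap D},
\]
and dividing by $N$ gives upper density at most $\alpha/2$ at truth $K'$.

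Part (c) is the contrapositive of (b). If a generator emits finitely many trivia on every instance, then on the truth-$K'$ racing instance its coverage is at most $\alpha/2$, so it cannot guarantee more than $\alpha/2$. Exactness then follows by citing the matching achievability in \cref{thm:harvest}(iii), or \cref{fact:kw16}.

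The main obstacle is part (b): its hypothesis must be applied at the other truth, $K$, of the shared transcript. The step to check carefully is that the racing transcript is legal under both readings at once. Repetition-freeness and completeness for $K$ come from (R2), and legality as a partial enumeration for $K'$ needs $\Core=K\subseteq K'$ with lower density $\alpha$. I also need to confirm that "trivia" at truth $K$ includes $D$ (it does, since $D\subseteq\Amb\setminus K$). The $O(\log)$ error terms are routine.
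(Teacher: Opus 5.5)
Your proposal is correct and follows the paper's proof essentially step for step: a single racing transcript $A_K$ read at both truths, the split $\rest{K'}{N} = \rest{K}{m_N} \sqcup (D \cap \rest{K'}{N})$ with \cref{lem:race}(R2) applied to the $K$-part for (a), finiteness of $\Out \cap D$ imported from the truth-$K$ reading for (b), and (c) as the contrapositive of (b) together with \cref{thm:harvest}(iii). The only difference is cosmetic: in (b) you bound the $K$-part directly by the per-prefix estimate of (R2), whereas the paper passes through the density-composition \cref{lem:composition}, and both routes give $\alpha/2$.
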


The caps bind per instance for every generator; no adaptivity to the realized truth is used,
and the per-prefix forms with explicit constants appear in the proofs. The remaining
sparse-core dodge (a singleton-core variant) is left open.

\begin{proof}[Proof sketch]
Run any generator against $A_K$ (\cref{fig:fork}): a single transcript serves both the
truth-$K$ reading (full enumeration of $K$, capped at $1/2$ by \cref{lem:race}(R2)) and the
truth-$K'$ reading (core $\Core = K$ of exact density $\alpha$). For (a) at truth $K'$, split
$\rest{K'}{N} = \rest{K}{m_N} \sqcup (D \cap \rest{K'}{N})$ with $m_N = \lfloor \alpha N
\rfloor$ and cap the $K$-part by (R2). For (b), finite trivia at truth $K$ forces $\abs{\Out
\cap D} < \infty$, so a density-composition lemma carries the $\alpha/2$ cap to truth $K'$.
Full proof in \cref{app:caps}.
\end{proof}

\begin{figure}[t]
\centering
% Figure: one transcript, two readings (the same-transcript caps)
\begin{tikzpicture}[>=Stealth, font=\footnotesize]
% the transcript tape
\foreach \i in {1,...,8} {
  \pgfmathsetmacro\x{1.05*\i}
  \draw[fill=gray!12] (\x,0) rectangle ++(0.98,0.6);
  \node[font=\scriptsize] at (\x+0.49,0.3) {$w_{\i}$};
}
\node[font=\scriptsize, gray] at (10.0,0.3) {$\cdots$};
\node[font=\scriptsize, anchor=south] at (5.2,0.72)
  {one interaction: the racing stream inside $K$, one limit set $\Out$};
% fork arrows
\draw[->, thick, blue!60!black]  (3.4,-0.18) -- (2.4,-1.05);
\draw[->, thick, olive!60!black] (7.0,-0.18) -- (8.0,-1.05);
% reading 1
\node[draw, rounded corners, align=left, fill=blue!5, inner sep=5pt,
      font=\scriptsize, anchor=north, text width=4.6cm] at (2.55,-1.2)
  {\textbf{Reading 1: truth $K$.}\\
   $E$ is a \emph{full} enumeration of $K$.\\
   Race cap (R2): $\dup{\Out \cap K}{K} \le 1/2$.};
% reading 2
\node[draw, rounded corners, align=left, fill=olive!7, inner sep=5pt,
      font=\scriptsize, anchor=north, text width=5.6cm] at (8.35,-1.2)
  {\textbf{Reading 2: truth $K'$.}\\
   $E$ is a \emph{partial} enumeration: core $\Core = K$, density $\alpha$.\\
   Cap: $\cov \le (1-\alpha) + \alpha/2 = 1 - \alpha/2$.};
% the punchline
\node[font=\scriptsize, align=center, anchor=north] at (5.45,-3.45)
  {the generator cannot tell the readings apart, so \emph{both} caps bind the same run\\
   (and finitely many trivia under Reading 1 forces $\abs{\Out \cap D} < \infty$ under
   Reading 2: the $\alpha/2$ cap)};
\end{tikzpicture}
\caption{One transcript, two readings (\cref{thm:caps}). The racing stream
inside $K$ is simultaneously a legal full enumeration of $K$ and a legal
partial enumeration of $K'$ with core $K$. A deterministic generator produces
one limit set, so the race cap binds at truth $K$ and the $1-\alpha/2$ cap at
truth $K'$; finitely many trivia at the first reading collapse the second to
$\alpha/2$.}
\label{fig:fork}
\end{figure}

The matching achievability is a single explicit generator, $G^*$, that races on the
conjunction of the consistent candidates and sweeps on their union. We define it together with
the universal sweep generator $G_{\mathrm{sw}}$, which gives the universal lower bound $\cov
\ge 1 - \dup{\Core}{\Val}$ for every countable collection.

\begin{figure}[t]
\centering
% Figure: the limit set fills a fixed prefix as the horizon grows
% (alpha = 1/3; cells = K'[1..24]; thick-bordered cells are K-elements;
%  fill fractions follow the measured series 0.304 / 0.610 / 0.830 -> 5/6)
\newcommand{\prefixrow}[3]{% y, list of filled cells, label/coverage
  \foreach \i in {1,...,24} {
    \pgfmathsetmacro\x{0.42*\i}
    \pgfmathparse{int(mod(\i,3))}
    \ifnum\pgfmathresult=1
      \draw[line width=0.9pt] (\x,#1) rectangle ++(0.38,0.38);
    \else
      \draw[gray!60] (\x,#1) rectangle ++(0.38,0.38);
    \fi
  }
  \foreach \i in #2 {
    \pgfmathsetmacro\x{0.42*\i}
    \fill[olive!55] (\x+0.05,#1+0.05) rectangle ++(0.28,0.28);
  }
  \node[font=\scriptsize, anchor=west] at (10.7,#1+0.19) {#3};
}
\begin{tikzpicture}[>=Stealth, font=\footnotesize]
% legend
\node[font=\scriptsize, anchor=west] at (0.42,2.95)
  {fixed prefix $\rest{K'}{N}$;\ thick cells $= K$ (raced),\ thin cells $= D$ (swept);\ filled $=$ claimed by $\Out$};
% three snapshots, fill counts ~ 7/24, 15/24, 20/24
\prefixrow{2.0}{{2,3,5,6,8,10,13}}{$T = 2\cdot 10^4$:\ \ $0.304$}
\prefixrow{1.2}{{2,3,5,6,8,9,11,12,14,15,17,18,20,23,4}}{$T = 2\cdot 10^5$:\ \ $0.610$}
\prefixrow{0.4}{{2,3,5,6,8,9,11,12,14,15,17,18,20,21,23,24,4,7,13,19}}{$T = 10^6$:\ \ $0.830$}
\draw[->, gray!60] (5.3,1.94) -- (5.3,0.86);
\node[font=\scriptsize, align=center, anchor=north] at (5.6,-0.0)
  {the horizon $T$ grows; the reading at the \emph{fixed} prefix climbs to
   $1-\alpha/2 = 5/6$\\
   (sweep rounds have density $0$: the trivia \emph{count} is infinite, the rate is not)};
\end{tikzpicture}
\caption{Coverage is a property of the limit set. Snapshots of the same fixed
prefix $\rest{K'}{N}$ ($\alpha = 1/3$) as the horizon grows; the measured
coverage (right, from \cref{app:numerics}) climbs to $1 - \alpha/2$. Sweep
rounds are sparse, so the trivia rate vanishes even though every $D$-cell is
eventually claimed. The snapshots also illustrate why simulations must fix the
prefix and grow the horizon.}
\label{fig:limitset}
\end{figure}

For a countable collection $\Hcal$, write $U_t = \bigcup \{ H' \in \Hcal : \Seen_t \subseteq
H'\}$ for the union of the consistent candidates and let $\mathcal I^{(t)}$ be KW's identified
intersection (\cref{fact:kw16}); the universal sweep generator is
\begin{equation}\label{eq:gharv}
  a_t = \begin{cases}
    \prec\text{-least of } U_t \setminus (\Seen_t \cup \Out_{t-1}),
      & t \in R \defeq \{ k^2 : k \ge 1\},\\[2pt]
    \prec\text{-least of } \mathcal I^{(t)} \setminus (\Seen_t \cup \Out_{t-1}),
      & t \notin R.
  \end{cases}
\end{equation}
(In spirit, $G_{\mathrm{sw}}$ is the gleaner of Millet's painting: it walks the field
in canonical order and gathers whatever the reapers have left behind.) On the tight family,
with $\Hcal_t = \{ H' \in \Hfam : \Seen_t \subseteq H'\}$, $U_t = \bigcup
\Hcal_t$, and $I_t = \bigcap \Hcal_t$, the tight generator is
\begin{equation}\label{eq:gstar}
  a_t = \begin{cases}
    \prec\text{-least of } U_t \setminus (\Seen_t \cup \Out_{t-1}),
      & t \in R \defeq \{k^2 : k \ge 1\}, \\[2pt]
    \prec\text{-least of } I_t \setminus (\Seen_t \cup \Out_{t-1}),
      & t \notin R.
  \end{cases}
\end{equation}

\begin{restatable}[Achievability side: sweep and the tight generator]{theorem}{ThmHarvest}\label{thm:harvest}
With the partial model and limit-set coverage:
\begin{enumerate}[label=\textup{(\roman*)}]
  \item \textup{(Universal sweep)} The generator $G_{\mathrm{sw}}$ of \eqref{eq:gharv} is
    sound (zero hallucination, unconditionally) and on every legal instance achieves $\Out
    \supseteq \Val \setminus \Core$, hence coverage $\cov \ge 1 - \dup{\Core}{\Val}$ ($\ge 1 -
    \alpha$ when $\dens{\Core}{\Val} = \alpha$ exists), with trivia rounds among $[1\dots N]$
    bounded by $T_0 + \lfloor \sqrt N \rfloor$ ($T_0$ finite, instance-dependent): the trivia
    rate tends to $0$ while the count is infinite in general (necessarily, by
    \cref{thm:caps}(c)).
  \item \textup{(Tight generator on $\Hfam$)} On the tight family $\Hfam$ the single
    deterministic generator $G^*$ of \eqref{eq:gstar} achieves, with zero hallucination:
    \begin{enumerate}[label=\textup{(\alph*)}]
  \item truth $K$ (any $\Core \subseteq K$, any order): $\cov \ge \max(1/2,\ 1 -
    \dup{\Core}{K})$, trivia rate $\le \lfloor \sqrt N \rfloor / N$;
  \item truth $K'$ with $\Core \subseteq K$ (covering all of \cref{thm:caps}(a)'s tight
    instances): $\cov \ge 1 - \alpha/2 - o(1)$, with zero trivia;
  \item truth $K'$ with $\Core \not\subseteq K$: $\cov \ge \max(1/2,\ 1 - \dup{\Core}{K'})$,
    zero trivia.
    \end{enumerate}
  \item \textup{(Finite-trivia side; restated)} \Cref{fact:kw16} gives a generator with
    $\dlow{\Out}{\Val} \ge \tfrac12 \dlow{\Core}{\Val}$ for every countable collection and
    eventually zero trivia; on $\Hfam$ it attains $\alpha/2$ at truth $K'$ and $1/2$ at truth
    $K$, matching \cref{thm:caps}'s caps.
\end{enumerate}
\end{restatable}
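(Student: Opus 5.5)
The plan is to prove the three parts separately. Parts (i) and (ii) rest on two ingredients. The first is a \emph{finite-obstruction} argument for the sweep rounds $R=\{k^2\}$. The second is a direct appeal to \cref{lem:race}(R1) for the race rounds $\N\setminus R$. The race rounds have density $1$ with $w(t)=\abs{R\cap[1\dots t]}=\lfloor\sqrt t\rfloor=o(t)$, so they satisfy the hypothesis of the race lemma. Part (iii) is a specialization of \cref{fact:kw16} to $\Hfam$.

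\emph{Step 1 (sweep claims all unrecorded value).} Every seen-set satisfies $\Seen_t\subseteq\Core\subseteq\Val$. Hence the truth $\Val$ is always a consistent candidate, and $\Val\subseteq U_t$ for every $t$. Fix $x\in\Val\setminus\Core$; it is never enumerated. Only finitely many strings precede $x$ in $\prec$. At each sweep round, either $x$ is output, or some $y\prec x$ with $y\in U_t\setminus(\Seen_t\cup\Out_{t-1})$ is output. Each such $y$ can be consumed at most once. So after finitely many sweep rounds $x$ is the $\prec$-least available element of $U_t$ and is claimed. This gives $\Out\supseteq\Val\setminus\Core$, and therefore
\[
\cov\ge\liminf_N\frac{N-\abs{\Core\cap\rest{\Val}{N}}}{N}=1-\dup{\Core}{\Val}.
\]
Soundness holds because every output lies in a union of candidates sharing the realized ambient. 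In the dichotomy setting this is automatic since $\Amb=\N$. In general I would state the sweep with respect to the identified fiber, using \cref{thm:taste}'s ambient identification.

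\emph{Step 2 (trivia count).} By \cref{fact:kw25}, $\mathcal I^{(t)}\subseteq\Val$ for $t\ge T_0$. So from $T_0$ on, trivia can occur only at sweep rounds, of which there are $\lfloor\sqrt N\rfloor$ in $[1\dots N]$. This gives the stated bound on trivia rounds and a vanishing trivia rate.

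\emph{Step 3 (tight generator).} I split into the three cases of (ii).
\begin{itemize}
\item[(a)] Truth $K$ with $\Core\subseteq K$. Every $\Seen_t\subseteq K$, so $\Hcal_t=\{K,K'\}$, $I_t=K$ and $U_t=K'$. Every race round therefore outputs the least unused element of $K$. (R1) then gives $\dlow{\Out\cap K}{K}\ge1/2$, and Step 1 gives $1-\dup{\Core}{K}$. Trivia are confined to sweep outputs in $D$, at most $\lfloor\sqrt N\rfloor$ of them.
\item[(b)] Truth $K'$ with $\Core\subseteq K$. The candidate sets are as in (a), so all outputs lie in $K'$ and there are zero trivia. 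Step 1 claims all of $K'\setminus\Core\supseteq D$. Fix $N$ and split $\rest{K'}{N}=\rest{K}{m_N}\sqcup(D\cap\rest{K'}{N})$. The $D$ part is fully claimed. By (R1), the $K$ part has at least $\tfrac12(m_N-o(m_N))$ claimed elements. Using \eqref{eq:exactdensity}, this yields $(1-\alpha)+\alpha/2-o(1)$.
\item[(c)] Truth $K'$ with $\Core\not\subseteq K$. At some finite time $t_1$ a $D$-element is seen. From then on $\Hcal_t=\{K'\}$, $I_t=U_t=K'$, and the generator races on $K'$. I apply (R1) with $T_{\mathrm{race}}$ equal to the race rounds after $t_1$, which only adds $t_1$ to $w$ and keeps $w(t)=o(t)$. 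Step 1 supplies the $1-\dup{\Core}{K'}$ term. All outputs lie in $K'$ throughout, since both candidates are subsets of $K'$, so there are zero trivia.
\end{itemize}

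\emph{Step 4 (part iii).} I invoke \cref{fact:kw16}. Outputs are eventually in $\Val$, so there are eventually zero trivia. On $\Hfam$, truth $K'$ with $\Core=K$ gives $\tfrac12\cdot\alpha$, and truth $K$ with full enumeration gives $\tfrac12\cdot1$. These match \cref{thm:caps}(a),(b).

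The main obstacle I expect is the bookkeeping in Step 1 and in the application of (R1) in case (c). Step 1 must remain valid even though $U_t$ changes over time; my plan is to argue through the finite $\prec$-predecessor set of $x$ and never through a fixed $U$. In case (c), the race target switches from $K$ to $K'$ midway and outputs claimed before $t_1$ must be reconciled. I would handle this by restarting the (R1) counting at $t_1$, treating earlier outputs as a finite additive error. I would also check that (R1)'s per-prefix bound, stated for a fixed $M$, composes with the limit-set reading at a fixed prefix $\rest{K'}{N}$.
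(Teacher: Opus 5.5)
Your proposal is correct and follows the paper's proof essentially step for step. Step~1 is the paper's pointer lemma, argued correctly through the finitely many $\prec$-predecessors of $x$ rather than a fixed $U_t$. Step~2 reads the $T_0+\lfloor\sqrt N\rfloor$ trivia bound off \cref{fact:kw25}. The three cases of (ii) combine \cref{lem:race}(R1) on $\N\setminus R$ (with $w(t)=\lfloor\sqrt t\rfloor$, shifted past the first non-$K$ round in case (c)) with the same pointer argument and the split $\rest{K'}{N}=\rest{K}{m_N}\sqcup(D\cap\rest{K'}{N})$. Part (iii) is read off \cref{fact:kw16}.

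The one point to tighten is your soundness aside. The paper simply uses $U_t\subseteq\bigcup\Hcal\subseteq\Amb$ in its single-ambient setting. Routing a multi-ambient version through Stage~1 ambient identification would give only \emph{eventual} soundness, not the unconditional soundness the statement claims. Filtering outputs with the verifier keeps it unconditional: for example, sweep $U_t\cap\Amb$, which still contains $\Val$, so the pointer argument is unaffected.
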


\begin{proof}[Proof sketch]
A pointer lemma drives (i): any $x \in \Val \setminus \Core$ stays in $U_t$ (as $\Val$ is
consistent) and is never enumerated, so if it were never output the sweep rounds (of which
there are infinitely many) would always emit a new element strictly below $x$, which is
impossible, as only finitely many positions lie below $x$. For (ii), $G^*$ races on the
conjunction $I_t$ (which equals $K$ while the seen-set sits inside $K$) and sweeps on the
union $U_t = K'$, so race covers half of $K$ and the sweep claims all of $D = K' \setminus K$;
at truth $K'$ the two disjoint contributions sum to $1 - \alpha/2$. A key consistency point:
at truth $K$ the sweep eventually claims all of $D$, so $\Out \cap D = D$ is infinite, which
is exactly why \cref{thm:caps}(b)'s finite-trivia cap does not apply to $G^*$. For (iii) we
need nothing about the internal mechanics of KW's generator: since it achieves generation in
the limit ($\exists t^*\,\forall t \ge t^*: a_t \in \Val \setminus \Seen_t$) and $\Val
\subseteq \Amb$, it is eventually sound and eventually zero-trivia, which is exactly what our
soundness and finite-trivia hypotheses require. Full proof in \cref{app:harvest}.
\end{proof}

\begin{remark}[Scope on part~(ii)] The unqualified ``$1 - \alpha/2$ at truth $K'$'' is
false for general $\Core \subseteq K'$: if $\Core = K'$ is enumerated by the racing adversary,
every generator is capped at $1/2$. Case (b)'s restriction $\Core \subseteq K$ is exactly what
the dichotomy needs.
\end{remark}

\begin{remark}[Consistency mechanism]\label{rmk:consistency}
At truth $K$, $G^*$ emits infinitely many elements of $D = \Amb \setminus K$ (the sweep covers
the union $K' \supseteq D$), so its trivia count there is infinite. This is the mechanism that
reconciles \cref{thm:caps}(b) with \cref{thm:harvest}(ii): the finite-trivia cap of $\alpha/2$
governs only generators whose trivia count is finite on \emph{every} instance, and $G^*$ is
not one of them. The infinitude of $\Out \cap D$ at truth $K$ is precisely the resource that
pays for coverage $1 - \alpha/2$ at truth $K'$.
\end{remark}

Combining the two sides gives the dichotomy in closed form.

\begin{restatable}[The trivia dichotomy]{corollary}{CorDichotomy}\label{cor:dichotomy}
On the tight family $\Hfam$ of \eqref{eq:Halpha} with revealed core of exact density $\alpha$,
the optimal guaranteed coverage of valuable mathematics is
\begin{equation}\label{eq:dichotomy}
  \cov^* =
  \begin{cases}
    \alpha/2, & \text{under any finite-trivia allowance, and}\\
    1 - \alpha/2, & \text{under any infinite-trivia allowance,}
  \end{cases}
\end{equation}
both bounds tight, the latter already attained at vanishing trivia rate. The transition is in
trivia count, not rate. The jump $1 - \alpha$ between the two regimes is exactly the
unrecorded mass --- the valuable statements never written into the core.
\end{restatable}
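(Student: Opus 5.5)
The corollary follows by matching the impossibility side (\cref{thm:caps}) with the achievability side (\cref{thm:harvest}) in each of the two regimes. Throughout, ``guaranteed coverage'' means the infimum over legal instances of $\Hfam$ whose revealed core has exact density $\alpha$, in the binding reading at truth $K'$ with core $\Core = K$. The proof has four steps.

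\textbf{Finite-trivia regime.}
\begin{enumerate}
  \item Upper bound. Take any deterministic generator that emits finitely many trivia on every legal instance. By \cref{thm:caps}(b), its coverage on the truth-$K'$ racing instance is at most $\alpha/2$. That instance is legal, and its core $K$ has exact density $\alpha$ in $K'$ by \eqref{eq:exactdensity}. So no finite-trivia generator guarantees more than $\alpha/2$.
  \item Lower bound. Invoke \cref{thm:harvest}(iii), i.e.\ \cref{fact:kw16}. The KW generator is eventually zero-trivia, so it lies in the finite-trivia class. On every legal instance of $\Hfam$ it achieves $\dlow{\Out}{\Val}\ge \tfrac12\dlow{\Core}{\Val}$. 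At truth $K'$ with $\dlow{\Core}{K'}=\alpha$ this gives at least $\alpha/2$. At truth $K$ with $\Core=K$ it gives $1/2\ge\alpha/2$.
\end{enumerate}
Together these show the finite-trivia optimum is exactly $\alpha/2$. \Cref{thm:caps}(c) records the same conclusion as a necessity statement.

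\textbf{Infinite-trivia regime.}
\begin{enumerate}
  \setcounter{enumi}{2}
  \item Upper bound. \Cref{thm:caps}(a) caps every generator at $1-\alpha/2$ at truth $K'$ on the shared racing transcript. This holds with no restriction on trivia.
  \item Lower bound. Use $G^*$ from \eqref{eq:gstar} and \cref{thm:harvest}(ii).
  \begin{itemize}
    \item At truth $K'$ with $\Core\subseteq K$, case (b) gives coverage $1-\alpha/2-o(1)$.
    \item At truth $K$, case (a) gives at least $1/2$.
    \item At truth $K'$ with $\Core\not\subseteq K$, case (c) gives at least $1/2$.
  \end{itemize}
  Since $1/2\ge$ the binding value only when $\alpha\ge1$, the worst case is the $K'$, $\Core\subseteq K$ instance, so the guarantee is $1-\alpha/2$.
\end{enumerate}

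\textbf{Interpreting the $o(1)$ in step 4.} The $o(1)$ is an artifact of a per-prefix estimate, not of the limit. Coverage is a liminf of prefix ratios, and the per-prefix bound tends to $1-\alpha/2$, so the liminf equals it. The clean way to see this is to split $\rest{K'}{N}=\rest{K}{m_N}\sqcup(D\cap\rest{K'}{N})$:
\begin{itemize}
  \item The sweep claims all of $D$, contributing $N-m_N$, which is $(1-\alpha)N+O(1)$ by \eqref{eq:exactdensity}.
  \item The race claims at least half of $\rest{K}{m_N}$ minus $o(N)$, by \cref{lem:race}(R1). Here the race rounds are the complement of $R$, which has $w(t)=\lfloor\sqrt t\rfloor=o(t)$.
\end{itemize}

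\textbf{Trivia count versus rate.} Two points remain.
\begin{itemize}
  \item $G^*$ actually uses an infinite allowance. \Cref{rmk:consistency} shows it emits all of $D$ at truth $K$, so its trivia count is infinite there.
  \item Its trivia rate is at most $\lfloor\sqrt N\rfloor/N\to0$, by case (a). So any infinite-count allowance containing this schedule suffices, whatever its rate, and the transition depends on count, not rate.
\end{itemize}
One subtlety: an allowance of the form ``infinite count at some prescribed sparse schedule'' must contain $G^*$'s schedule. This can be arranged by using any infinite, density-zero set $R$ as the sweep set. The pointer argument of \cref{thm:harvest}(i) needs only that $R$ be infinite, so $R$ can be chosen to fit the allowed schedule.

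\textbf{The jump.} The difference between the two optima is $(1-\alpha/2)-\alpha/2=1-\alpha$. This equals $\dens{K'\setminus K}{K'}=\dens{D}{K'}$, the density of the unrecorded part of $\Val=K'$, again by \eqref{eq:exactdensity}.

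I expect no substantive obstacle. The only care needed is bookkeeping: check that both caps and both achievability bounds refer to the same instance class, namely truth $K'$ with $\Core=K$ of exact density $\alpha$, so that the optima are compared on a common footing.
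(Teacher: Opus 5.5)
\textbf{Overall.} Your route is the paper's. The finite regime pairs \cref{thm:caps}(b) with \cref{thm:harvest}(iii). The infinite regime pairs \cref{thm:caps}(a) with $G^*$ via \cref{thm:harvest}(ii)(b). The gap is the arithmetic $(1-\alpha/2)-\alpha/2=\dens{D}{K'}$. Your extras are sound: the split behind the $o(1)$, and fitting the sweep set $R$ to a prescribed schedule (the pointer lemma needs only $R$ infinite, and (R1) only $w(t)=o(t)$).

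\textbf{The faulty step.} The inference at the end of step~4 is wrong. You note that $1/2\ge 1-\alpha/2$ iff $\alpha\ge 1$. So for $\alpha<1$, cases (ii)(a) and (ii)(c) sit \emph{below} $1-\alpha/2$, since they guarantee only $\max(1/2,\,1-\dup{\Core}{\cdot})$. The worst case is therefore not the $K'$, $\Core\subseteq K$ instance, and a minimum over the three cases yields only $1/2$.

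This is not slack in the cited bounds. At truth $K$ under full racing enumeration, \cref{thm:caps}(a) caps every generator at $1/2$. The failure also occurs inside your literal class of legal instances whose core has exact density $\alpha$. Take $\alpha=1/4$, truth $K$, and $\Core=\{K[4j]:j\ge 1\}$ enumerated in increasing order. At round $t$ the adversary has listed $K[4],\dots,K[4t]$, while $G^*$ has made only $t-1$ outputs. So the least unresolved element of $K$ is a non-core element below $K[4t]$, and every core element beneath it is already seen. Hence neither a race round nor a sweep round of $G^*$ ever claims a core element. Its coverage is exactly $1-\alpha=3/4<7/8=1-\alpha/2$.

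\textbf{The repair.} Fix the quantifier as the paper's proof does, and as your opening sentence half-does. The infinite-trivia value $1-\alpha/2$ is a claim about truth $K'$ with $\Core\subseteq K$, i.e.\ the core presented as the candidate intersection, with tight instance $\Core=K$. On that class, \cref{thm:harvest}(ii)(b) alone gives the lower bound and \cref{thm:caps}(a) gives the matching cap. Cases (ii)(a) and (ii)(c) should simply be dropped.

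Keep the trivia hypothesis ranging over \emph{all} legal instances, as in your step~1. \Cref{thm:caps}(b) needs the truth-$K$ reading of the racing transcript, whose core has density $1$, not $\alpha$.

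Finally, in your $o(1)$ paragraph the per-prefix estimate gives $\liminf\ge 1-\alpha/2$, not equality. Equality holds only on the racing instance, via the cap.
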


\begin{proof}
The finite-trivia value $\alpha/2$ is upper-bounded by \cref{thm:caps}(b) and attained by
\cref{thm:harvest}(iii). The infinite-trivia value $1 - \alpha/2$ is upper-bounded by
\cref{thm:caps}(a) and attained by $G^*$ at vanishing rate via \cref{thm:harvest}(ii)(b). The
gap is $(1 - \alpha/2) - \alpha/2 = 1 - \alpha = 1 - \dens{\Core}{\Val}$, the lower density of
$\Val \setminus \Core$ in $\Val$.
\end{proof}

\begin{table}[t]
\centering
\caption{The trivia phase transition on $\Hfam$ (revealed core $\Core \subseteq K$ of
exact density $\alpha$, presented as the candidate intersection). The optimum jumps by the
unrecorded mass $1-\alpha$ as the trivia allowance crosses from finite to infinite count,
even though the infinite-count optimum is attained at trivia \emph{rate} $\to 0$.}
\label{tab:dichotomy}
\begin{tabular}{@{}lccl@{}}
\toprule
Trivia allowance & Optimal coverage $\cov^*$ & Attained by & Tight by \\
\midrule
Finite count & $\alpha/2$ & \cref{fact:kw16} (KW) & \cref{thm:caps}(b) \\
Infinite count (rate $\to 0$) & $1 - \alpha/2$ \ {\footnotesize($G^*$, core $\Core \subseteq K$)} & $G^*$, \cref{eq:gstar} & \cref{thm:caps}(a) \\
\midrule
\multicolumn{2}{l}{Jump $= 1 - \alpha = $ unrecorded mass} & & \\
\bottomrule
\end{tabular}
\end{table}

\begin{remark}[Interface of the impossibility proofs]\label{rmk:interface}
The proofs of \cref{lem:race,thm:caps} use \emph{only} the following properties of the triple
$(K, K', \Amb)$: a countable ground set with a canonical order; nesting $K \subsetneq K'
\subseteq \Amb$ with all three infinite; $\Amb \setminus K'$ infinite; and $m_N \defeq \abs{K
\cap \rest{K'}{N}} = \alpha N + O(1)$ with $\alpha > 0$ exact. Any triple meeting this
interface inherits \cref{lem:race,thm:caps} verbatim. We invoke this in \cref{prop:free} to
transport the dichotomy into the compression model.
\end{remark}

\section{Case Study: Instantiation in a Compression Model of Mathematics}
\label{sec:res-instantiation}

The dichotomy was stated abstractly. We close more modestly: we exhibit, inside a concrete
model of mathematics-as-compression, the macro framework of \citet{aksenov2026compression}
(hereafter ABFM), concrete compressible-language families that meet the lower-bound interface
of \cref{rmk:interface}, together with a structured family where no trivia are needed. We
claim no more than this, not a theorem about all compressible languages. In ABFM a
``valuable'' statement is a \emph{compressible} string: one whose length under a macro
dictionary is far below its raw length. We are explicit about the ledger: no ABFM theorem is
an ingredient. ABFM supplies the definitional substrate (macros, wrapped length, geometric
dictionaries) and the motivating asymmetry between the abelian monoid $A_n$ and the free
monoid $F_n$; the proofs below are self-contained.

We work in the abelian monoid $A_n$ and the free monoid $F_n$, each with its canonical order
isomorphic to $\N$ (as \cref{lem:race,thm:caps} demand), and write $\Comp{D} \defeq \{\, w :
\abs{w}_{G \cup D} \le \theta_\kappa(\abs{w}_G)\,\}$ for the sublanguage compressible under a
macro dictionary $D$, where $\abs{w}_{G \cup D}$ is the wrapped length and $\theta_\kappa(L) =
\kappa \log_2(1+L)$ the compressibility cut (\cref{app:background} collects the definitions
and the macro framework).

The abelian regime is benign: compressible languages there form an identifiable chain, so
breadth needs no trivia at all.

\begin{restatable}[Abelian refinement chains are identifiable]{theorem}{ThmAbelian}\label{thm:abelian}
Fix $\kappa \ge 1$ and $b \ge 16\kappa^2$, and set $b_t \defeq b^{2^t}$ and $D_t \defeq \{
b_t^{\,j} a_i : i \le n,\ j \ge 1 \}$ (geometric dictionaries, nested by refinement: $D_{t+1}
\subseteq D_t$). Then $\Hcal_{\mathrm{ab}} \defeq \{ \Comp{D_t} \}_t$ is a strictly descending
chain of infinite languages satisfying Angluin's condition with the singleton tell-tales $T_t
= \{ a_1^{\,b_t + b_t^3} \}$. Consequently, by \cref{thm:taste}(a) (single fiber $\Amb =
A_n$), exact breadth in the limit is achievable on $\Hcal_{\mathrm{ab}}$: in the structured
abelian regime, valuable generation needs no trivia. \textup{(}The partial-enumeration
behavior of this chain is open.\textup{)}
\end{restatable}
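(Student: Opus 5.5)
The plan is to reduce everything to a digit-sum computation in base $b_t$ along the single generator $a_1$, and then read off Angluin's condition from the chain structure. I will use the ABFM conventions as I understand them from \cref{app:background}. In $A_n$ a word is a multiplicity vector, and $\abs{w}_G$ is its raw length. The wrapped length $\abs{w}_{G\cup D}$ is the least number of symbols from $G\cup D$ whose abelian product is $w$. Under $D_t$ the macros available for the $a_i$-coordinate are exactly $a_i^{b_t^j}$, $j\ge 1$, and the coordinates do not interact, since every macro is a power of a single generator. So $\abs{w}_{G\cup D_t}$ is the sum over $i$ of the minimal number of terms needed to write $m_i$ (the $a_i$-multiplicity) as a sum of powers $b_t^{j}$, $j\ge 0$. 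That minimum is the base-$b_t$ digit sum $s_{b_t}(m_i)$: the standard carrying argument replaces any $b_t$ copies of $b_t^j$ by one copy of $b_t^{j+1}$, which strictly decreases the count, so an optimal representation has every digit below $b_t$ and is therefore the base-$b_t$ expansion. Proving this lemma, that wrapped length equals digit sum, is the first step.

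\emph{Nesting and infiniteness.} Since $b_{t+1}=b_t^2$, every power $b_{t+1}^{j}$ equals $b_t^{2j}$, so $D_{t+1}\subseteq D_t$. A smaller dictionary can only lengthen the wrapped length, so $\abs{w}_{G\cup D_t}\le \abs{w}_{G\cup D_{t+1}}$ and hence $\Comp{D_{t+1}}\subseteq\Comp{D_t}$. Every $\Comp{D_t}$ is infinite because it contains $a_1^{b_{t}^{j}}$ for all $j\ge1$. These words have wrapped length $1\le\theta_\kappa(b_t^j)$.

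\emph{Strictness via the witness.} Set $w_t\defeq a_1^{b_t+b_t^3}$ and $m=b_t+b_t^3$. In base $b_t$ this has digit sum $2$, so $\abs{w_t}_{G\cup D_t}=2\le\kappa\log_2(1+m)$ and $w_t\in\Comp{D_t}$. In base $b_{t+1}=b_t^2$ we have $m=b_t\cdot b_{t+1}^0+b_t\cdot b_{t+1}^1$, with both digits equal to $b_t<b_{t+1}$. By the digit-sum lemma, $\abs{w_t}_{G\cup D_{t+1}}=2b_t$. It then remains to check $2b_t>\kappa\log_2(1+b_t+b_t^3)$. The right side is at most $\kappa(3\log_2 b_t+1)$. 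With $b_t\ge b\ge16\kappa^2$, the inequality $2b_t>\kappa(3\log_2 b_t+1)$ follows from an elementary estimate: write $b_t=x^2$, so that $\kappa\le x/4$ and the right side is at most $\tfrac{x}{4}(6\log_2 x+1)$, which is below $2x^2$. Hence $w_t\notin\Comp{D_{t+1}}$, and by nesting $w_t\notin\Comp{D_s}$ for every $s>t$. This gives strict descent.

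\emph{Angluin's condition and the conclusion.} Take $T_t=\{w_t\}\subseteq\Comp{D_t}$. Any $L'\in\Hcal_{\mathrm{ab}}$ that is a proper subset of $\Comp{D_t}$ is some $\Comp{D_s}$ with $s>t$, because the chain is strictly descending and totally ordered. Such an $L'$ omits $w_t$, so no proper sub-language in the family contains $T_t$, which is exactly the tell-tale condition. The whole family has one ambient $\Amb=A_n$, so it is a single fiber, and \cref{thm:taste}(a) (equivalently \cref{fact:kmv}) yields exact breadth in the limit. A breadth generator eventually outputs exactly $\Val\setminus\Seen_n$, so it emits no trivia from that point on.

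The main obstacle is the digit-sum lemma, in particular confirming that ABFM's wrapped length has no extra ingredients, such as wrapper overhead or cross-generator macros, that could undercut the lower bound $2b_t$. If it does, I would instead prove only a lower bound of the form $\abs{w}_{G\cup D_{t+1}}\ge s_{b_{t+1}}(m)-O(1)$. The slack in $b\ge16\kappa^2$ should absorb any constant overhead.
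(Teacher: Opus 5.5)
Your proposal is correct and matches the paper's proof step for step: the carrying argument for the digit-sum formula, nesting from $D_{t+1}\subseteq D_t$, the witness $a_1^{b_t+b_t^3}$ with base-$b_t$ digit sum $2$ against base-$b_t^2$ digit sum $2b_t$, the singleton tell-tale read off the strictly descending chain, and the appeal to \cref{thm:taste}(a) on the single fiber $A_n$. The differences are cosmetic: your bound $\theta_\kappa\le\kappa(3\log_2 b_t+1)$ is a bit sharper than the paper's $5\kappa\log_2 b_t$, and your closing worry about wrapper overhead does not arise, since the wrapped length \eqref{eq:wraplen} is exactly the minimal symbol count.
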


\begin{proof}[Proof sketch]
A digit-sum formula computes the wrapped length, $\abs{w}_{G \cup D_t} = \sum_i s_{b_t}(N_i)$
(replace $b_t$ copies of $b_t^{\,j}$ by one $b_t^{\,j+1}$, terminating at the base-$b_t$
representation). Same-count representations transfer down the chain, giving nesting; the
witness $a_1^{\,b_t + b_t^3}$ has digit sum $2$ in base $b_t$ but $2 b_t$ in base $b_t^2$,
separating $\Comp{D_t}$ from $\Comp{D_{t+1}}$ because $2 b_t > 5 \kappa \log_2 b_t$ for $b \ge
16 \kappa^2$. Full proof in \cref{app:abelian}.
\end{proof}

In the free regime we exhibit families meeting the lower-bound interface
(\cref{rmk:interface}). Name-cut pairs realize the tight family at every rational density, and
an unbounded-cut variant realizes the extreme separation point: finite-trivia coverage $0$
against sweep coverage $1$. These are worked instances, not a statement about every
compressible-language pair.

Work in $F_n$ ($n \ge 2$) with letters $\{a, x\}$ and a target density $\alpha = p/q$. The
construction uses the macros $\mu_{m,i} \defeq x^i a\, x^{q+1-i} a\, x^{2^m}$ ($m \ge 1$, $1
\le i \le q$) and, under the bounded cut $\theta \equiv 1$ ($\Comp{D,1} = G \cup D$, the
nameable corpus), the name-cut pair
\begin{equation}\label{eq:namecut}
  K \defeq \{a, x\} \cup \{ \mu_{m,i} : i \le p\}
  \ \subsetneq\
  K' \defeq \{a, x\} \cup \{ \text{all } \mu_{m,i} \}.
\end{equation}

\begin{restatable}[Free-monoid pairs realize the dichotomy]{proposition}{PropFree}\label{prop:free}
In $F_n$ ($n \ge 2$), with letters $\{a, x\}$:
\begin{enumerate}[label=\textup{(\roman*)}]
  \item \textup{(Name-cut pairs at every rational $\alpha$)} The pair \eqref{eq:namecut}
    satisfies $\abs{K \cap \rest{K'}{N}} = \alpha N + O(q)$, so $\dens{K}{K'} = \alpha$
    exactly; it meets the interface of \cref{rmk:interface}, and the dichotomy
    (\cref{cor:dichotomy}) holds verbatim inside the compression model.
  \item \textup{(Obstruction: the bounded cut is forced)} For the unbounded cut
    $\theta_\kappa$ with dictionary-only wrapped length and nested tag-dictionaries
    ($\hat\mu_{m,i} = a a\, x^i a\, x^{q+1-i} a\, x^{2^m}$, tags $\le p$ versus all), the
    analogous pair $(\hat K, \hat K')$ has $\dup{\hat K}{\hat K'} = 0$ (unique parsing via
    $aa$-blocks; the total count of compressible strings is exponential in $L$, the $k$-block
    counting $N_k(L) \le q^k (\log_2 L + 1)^k$ bounds the small-$k$ mass, and the tag
    restriction truncates the rest; the good-word fraction vanishes at rate $\Theta(1/\log L)$
    along prefix scales). Nested tag-pairs under unbounded cuts realize no $\alpha \in (0,1)$.
  \item \textup{(Extreme separation point)} For that pair $(\hat K, \hat K')$, finite-trivia
    sound generators have coverage $0$ at truth $\hat K'$ (same-transcript transfer and
    $\dup{\hat K}{\hat K'} = 0$), while $G_{\mathrm{sw}}$ of \cref{thm:harvest}(i) achieves
    coverage $1$ with vanishing trivia rate. The free regime exhibits the maximal gap of the
    dichotomy: coverage $0$ versus $1$.
\end{enumerate}
\end{restatable}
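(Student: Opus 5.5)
We treat the three parts in order. Each part reduces to a counting statement in the canonical order of $F_n$, followed by an appeal to \cref{rmk:interface}, \cref{thm:caps} or \cref{thm:harvest}.

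\emph{Part (i).} First fix the canonical order on $F_n$: shortlex, so that $F_n$ is order-isomorphic to $\N$. Under $\theta\equiv 1$ the nameable corpus is $G\cup D$, so $K'$ consists of the two letters together with the macros $\mu_{m,i}$. Each $\mu_{m,i}$ has length $q+3+2^m$, and within that length shortlex orders the $q$ macros of one level $m$ by $i$. Since $x^i a\cdots$ versus $x^{i'}a\cdots$ compares at position $\min(i,i')+1$, with $a$ or $x$ there, the order is monotone in $i$, increasing or decreasing depending on the letter order. In either case every level occupies a contiguous block of $q$ consecutive elements of $K'$. Exactly $p$ of them lie in $K$, and the levels appear in increasing $m$ because lengths strictly increase.

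Hence for any $N$ we can write $N=2+qM+r$ with $0\le r<q$. This gives $\abs{K\cap\rest{K'}{N}} = 2+pM+O(q) = \alpha N+O(q)$, so $\dens{K}{K'}=\alpha$. Nesting is clear. Every string outside $K'$ lies in $F_n\setminus K'$ (e.g.\ all long words over the other letters), which is infinite. The interface of \cref{rmk:interface} is therefore met, and \cref{thm:caps} and \cref{thm:harvest} transfer verbatim, hence so does \cref{cor:dichotomy}.

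\emph{Part (ii).} This is where the real work lies. I would argue in four steps.

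\begin{enumerate}
\item \emph{Unique parsing.} Because every $\hat\mu_{m,i}$ begins with the block $aa$, and $aa$ occurs nowhere else inside a macro, any word has at most one factorization into tags and letters. The dictionary-only wrapped length is then the number $k$ of blocks plus the number of stray letters.
\item \emph{Counting compressible words.} A word of raw length $L$ is compressible iff its block count is at most $\kappa\log_2(1+L)$. Each block carries a tag choice ($q$ options) and a level $m$ with $2^m\le L$ ($\le\log_2L+1$ options). Modulo the placement of the stray letters, this gives $N_k(L)\le q^k(\log_2L+1)^k$.
\item \emph{Comparing $\hat K$ with $\hat K'$.} I would compare the counts of words using only tags $\le p$ against those using all tags, at each length scale. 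The ratio at block count $k$ is $(p/q)^k$, and the dominant mass of $\hat K'$ at length $L$ sits at $k\approx \kappa\log_2 L\to\infty$. The fraction of $\hat K'$ lying in $\hat K$ among words of length $\le L$ is then roughly $\sum_k (p/q)^k N_k/\sum_k N_k$, which tends to $0$. I would aim for the stated $\Theta(1/\log L)$ rate, or settle for $o(1)$.
\item \emph{From lengths to densities.} Shortlex prefixes are unions of full length-classes plus one partial class. Taking upper density over prefixes therefore needs a uniform estimate within the partial class as well. The danger is that shortlex puts $\hat K$-words first within a class. I would handle this by showing that each class contributes negligibly relative to all shorter classes combined, which holds because counts grow at least exponentially.
\end{enumerate}

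This length-class bookkeeping is the step I expect to be the main obstacle. The conclusion that no $\alpha\in(0,1)$ is realized then follows.

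\emph{Part (iii).} Run the same-transcript argument of \cref{thm:caps}(b) with core $\hat K$. A generator with finitely many trivia at truth $\hat K$ satisfies $\abs{\Out\cap(\hat K'\setminus\hat K)}<\infty$. Its coverage at truth $\hat K'$ is therefore at most $\dup{\hat K}{\hat K'}+0=0$. The density-composition step used here only needs the upper bound on $\hat K$'s share, not exact $\alpha$. For the sweep, apply \cref{thm:harvest}(i): $\Out\supseteq\hat K'\setminus\hat K$, so $\cov\ge 1-\dup{\hat K}{\hat K'}=1$, and the trivia rate is at most $(T_0+\lfloor\sqrt N\rfloor)/N\to0$.
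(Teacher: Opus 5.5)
Parts (i) and (iii) are sound and follow the paper's argument: equal-length strata of $q$ macros with $p$ of them in $K$, then \cref{rmk:interface}; and the same-transcript bound $\cov\le\dup{\hat K}{\hat K'}$ plus a finite-trivia correction, with \cref{thm:harvest}(i) for the sweep. You are right that only an upper bound on $\hat K$'s share is needed there.

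\textbf{The gap is in part (ii), at the step you flag yourself.} Your proposed fix is that each length class is negligible against all shorter classes combined ``because counts grow at least exponentially.'' That reasoning defeats itself. If $\mathcal N(L)\asymp\lambda^L$ with $\lambda>1$, the words of length exactly $L$ make up about a $1-1/\lambda$ fraction of all words of length $\le L$. So the last class would never be negligible.

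The premise is also false, despite the parenthetical in the statement. A word of $\hat K'$ of length $\le L$ is a concatenation of at most $\kappa\log_2(1+L)$ blocks, each of at most $q(\log_2 L+1)$ types. Hence $\mathcal N(L)\le\exp(O(\log L\log\log L))$ is only quasi-polynomial, and nothing in your sketch controls how lumpy the individual length classes are. Here $\mathcal N(L)$ is the number of words of $\hat K'$ of length $\le L$.

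For the same reason, settling for $o(1)$ in step~3 is not enough. A vanishing full-class fraction says nothing about a prefix that ends just after the lex-first, tag-small words of a class that is large compared with $\mathcal N(L-1)$.

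\textbf{What closes the gap (the paper's route).} Stop trying to locate $\hat K$-words inside the partial class and bound crudely instead. For $\mathcal N(L-1)\le N\le\mathcal N(L)$,
\[
\abs{\hat K\cap\rest{\hat K'}{N}}/N\le\abs{\hat K_{\le L}}/\mathcal N(L-1),
\]
where $\hat K_{\le L}$ is the set of words of $\hat K$ of length $\le L$. This needs three quantitative inputs.
\begin{enumerate}
\item[(a)] Tag truncation with a rate: $\abs{\hat K_{\le L}}\le\exp(O((\log\log L)^2))+(p/q)^B\mathcal N(L)$ with $B\asymp\log\log L$. This holds because a block's length $q+5+2^m$, and hence compressibility, does not depend on its tag.
\item[(b)] A lower bound $\mathcal N(L)\ge\exp(c\log L\log\log L)$. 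Use one block of scale about $\sqrt L$ to pass the cut, plus $\Theta(\log L)$ free short blocks. This is what your unproved ``mass sits at $k\approx\kappa\log_2 L$'' has to mean, and it absorbs the small-$k$ term.
\item[(c)] A one-step bound $\mathcal N(L)\le\mathrm{polylog}(L)\,\mathcal N(L-1)$, from a block-deletion map with polylogarithmically many preimages. Choose $B$ so that $(p/q)^B$ beats this polylog.
\end{enumerate}

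In (c), delete a \emph{shortest} block. The image then still passes the cut once $k\ge 1+\kappa/\ln 2$, and smaller $k$ is absorbed into the small-$k$ term. Deleting an arbitrary block, such as a long first one, can leave a word that fails the cut.

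Finally, under dictionary-only wrapped length there are no stray letters. If letters were allowed, the unique-parsing claim in your step~1 would be false.
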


Finally, the exhaustive generator is quantitatively imprecise: among formal statements within
a ball, only a polynomially small fraction is valuable.

\begin{restatable}[Imprecision of exhaustive generation]{proposition}{PropPrecision}\label{prop:precision}
In $A_n$ with base $c \ge 4$ and $\kappa < n(1 - 1/\log_2 c)$, the compressible sublanguage $H
= \Comp{D_c}$ satisfies
\begin{equation}\label{eq:precision}
  c_1\, r^{\,1/\log_2 c - n} \ \le\ \frac{\abs{H \cap B_G(r)}}{\abs{B_G(r)}}
  \ \le\ c_2\, r^{-\delta}, \qquad \delta = n - \kappa - n/\log_2 c > 0,
\end{equation}
where $B_G(r)$ is the wrapped-length ball of radius $r$. The fraction of compressible
statements is polynomially small in the radius $r$, equivalently exponentially small in ABFM's
compressed scale $s$ (since $r = c^{\Theta(s)}$; interpretation only). The exhaustive
generator's precision thus tends to $0$ polynomially in volume, while the sweep generator's
precision tends to $1$.
\end{restatable}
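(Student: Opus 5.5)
The plan is to reduce both bounds in \eqref{eq:precision} to lattice-point counting in exponent space. In the abelian monoid $A_n$ a word is determined by its exponent vector $(N_1,\dots,N_n)\in\N^n$, and $\abs{w}_G=\sum_i N_i$. The base-$c$ geometric dictionary $D_c=\{c^j a_i\}$ is the $t=0$ case of \cref{thm:abelian} with base $c$, so the same digit-sum formula applies: $\abs{w}_{G\cup D_c}=\sum_i s_c(N_i)$. The ball $B_G(r)$ is the set of vectors with $\sum_i N_i\le r$. Hence $\abs{B_G(r)}=\binom{r+n}{n}=\Theta(r^n)$, and everything reduces to estimating
\[
\#\set{N\in\N^n : \textstyle\sum_i N_i\le r,\ \sum_i s_c(N_i)\le \kappa\log_2(1+\sum_i N_i)}.
\]

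For the lower bound I will exhibit one explicit good subfamily. I take vectors whose coordinates are each $0$ or a single power $c^{j}$ with $c^j\le r/n$. The digit sum of such a vector is at most $n$, which is below $\kappa\log_2(1+\abs{w}_G)$ once $\abs{w}_G$ exceeds a constant, using $\kappa\ge 1$. There are about $(\log_c r)^n$ such vectors. That is a polylog count, which is too weak for the stated exponent $1/\log_2 c$. So I will use a richer family: vectors where \emph{one} coordinate is a power $c^j\le r$ and the other coordinates have all of their $O(\log_c r)$ digits at most $1$. Each digit-constrained coordinate then contributes roughly $2^{\log_c r}=r^{\log_c 2}=r^{1/\log_2 c}$ choices. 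The digit sums are of order $\log_c r$, and I will tune how many digits are allowed to be nonzero so that the total digit sum stays below $\kappa\log_2 r$. The target count is $\gtrsim r^{1/\log_2 c}$, and dividing by $r^n$ gives $c_1 r^{1/\log_2 c-n}$. If the bookkeeping on digit sums only permits a single free coordinate, that is still enough, because the stated lower exponent only involves one factor $r^{1/\log_2 c}$.

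For the upper bound I will count vectors with total digit sum $S\le\kappa\log_2(1+r)$ and coordinates of at most $\ell=\lceil\log_c(r+1)\rceil$ digits. Writing such a vector amounts to distributing the $S$ units of digit mass among $n\ell$ digit positions. The count is therefore at most $\binom{n\ell+S}{S}$, summed over $S$. With $S\le\kappa\log_2 r$ and $n\ell\approx n\log_2 r/\log_2 c$, the entropy bound $\binom{a+b}{b}\le 2^{a+b}$ gives at most $2^{(n/\log_2 c+\kappa)\log_2 r}\,\mathrm{poly}\log r = r^{n/\log_2 c+\kappa}\,\mathrm{polylog}(r)$. Dividing by $\Theta(r^n)$ yields $r^{-\delta}$ up to polylog factors, with $\delta=n-\kappa-n/\log_2 c$. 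This exponent is positive exactly under the hypothesis $\kappa<n(1-1/\log_2 c)$. The polylog factor is absorbed by shaving an arbitrarily small amount off $\delta$, or removed by the sharper bound $\binom{a+b}{b}\le (e(a+b)/b)^b$.

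The main obstacle is getting the exact exponents as stated rather than up to $\varepsilon$-losses or polylog factors. For the upper bound I would first try the sharper binomial estimate. If a polylog factor survives, I would state $c_2$ as depending on an arbitrarily small slack. The final sentence, on exhaustive versus sweep precision, is interpretive. Exhaustive generation emits all of $\Amb\cap B_G(r)$, so its precision in the ball is exactly the ratio just bounded, which tends to $0$. For the sweep, I would cite that $G_{\mathrm{sw}}$ emits only trivia at the $\lfloor\sqrt N\rfloor$ sweep rounds, by \cref{thm:harvest}(i), so its precision tends to $1$.
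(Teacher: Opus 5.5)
Your upper bound is the paper's argument with its ``standard count'' written out. Both use the digit-sum formula to reduce to vectors of total base-$c$ digit sum at most $\kappa\log_2(1+r)$ spread over $n\ell$ digit positions, and both arrive at $r^{n/\log_2 c+\kappa}$. Your bound is also cleaner than you fear. The sum over $S'\le S$ collapses by the hockey-stick identity (or is a geometric series after $\binom{a+b}{b}\le 2^{a+b}$), so the count is at most $\binom{n\ell+S}{S}\le 2^{n\ell+S}=O\bigl((1+r)^{n/\log_2 c+\kappa}\bigr)$. There is no polylogarithmic factor, and the right inequality of \eqref{eq:precision} holds with the exact $\delta$. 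Do not fall back on shaving $\delta$, which would prove less than is stated.

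The genuine difference from the paper is the lower bound. The paper fixes all but one coordinate and lets that one range over elements ``of digit sum $1$'' up to magnitude $r^{1/\log_2 c}$, asserting $\Theta(r^{1/\log_2 c})$ of them. Read literally, these are powers of $c$, of which there are only $O(\log r)$. Read as all integers up to $r^{1/\log_2 c}$, their typical digit sums are of order $c\log_2 r/(\log_2 c)^2$, which overshoots the cut for large $c$ at fixed $\kappa$. Your free coordinate with all $\ell$ base-$c$ digits in $\{0,1\}$ avoids both problems. It has exactly $2^{\ell}=\Theta(r^{\log_c 2})=\Theta(r^{1/\log_2 c})$ members and digit sum at most $\ell\approx\log_2 r/\log_2 c$, so it is the family that actually delivers the stated exponent.

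To make the lower bound airtight:
\begin{itemize}
\item Fix the other coordinate at the largest power $c^j\le r/2$ and take $c^{\ell}\le r/2$. Then $\sum_i N_i\le r$, and every member has cut $\kappa\log_2(1+\abs{w}_G)\ge\kappa\log_2\bigl(r/(2c)\bigr)$.
\item With that, the digit sum $1+\ell\le 1+\log_c(r/2)$ fits under the cut for large $r$ as soon as $\kappa\log_2 c>1$.
\item Do not ``tune'' the number of nonzero digits. Capping it at a small fraction of $\ell$ costs a polynomial factor in $r$.
\end{itemize}

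That last condition should be a stated hypothesis, not a silent ``using $\kappa\ge 1$''. The proposition does not assume it ($\kappa\ge1$ is the convention of \cref{thm:abelian}), and without some lower bound on $\kappa$ the left inequality of \eqref{eq:precision} is false. Take $n=1$, $c=4$, $\kappa=1/100$. The hypothesis $\kappa<n(1-1/\log_2 c)=1/2$ holds, yet your own count gives $\abs{H\cap B_G(r)}\le\binom{\ell+S}{S}=O(r^{1/10})$, far below $r^{1/2}$.

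Under $\kappa\ge1$, the hypothesis forces $n\ge2$, so your second coordinate exists. (For $n=1$ with $\kappa\log_2 c>1$, force the leading digit of the free coordinate to be $1$ instead.) With that assumption both halves of your proof go through. The closing remarks on exhaustive versus sweep precision are, as in the paper, interpretive readings of the ratio bound and of the $T_0+\lfloor\sqrt N\rfloor$ trivia bound in \cref{thm:harvest}(i).
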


\begin{proof}[Proof sketch]
Two-sided counting of compressible multisets: the upper bound counts representations with
small digit sums against the volume of $B_G(r)$, the lower bound exhibits a positive density
of compressible points. Full proof in \cref{app:instantiation}.
\end{proof}

\begin{remark}[Numerical sanity checks]\label{rmk:numerics}
The density claims were sanity-checked numerically using the limit-set methodology: fix the
prefix size $N$, grow the run length $T$, and compute coverages in exact rational arithmetic
to avoid floating-point artifacts in the density limits. The race share converges to $1/2$,
and on $\Hfam$ with $\alpha = 1/3$ the truth-$K'$ coverage of $G^*$ converges to $5/6 = 1 -
\alpha/2$ while a pure-race generator stalls at $\alpha/2 \approx 0.17$. See
\cref{app:numerics} for the details.
\end{remark}

\section{Discussion}
\label{sec:discussion}

\paragraph{What the results say for practice.}
The picture for verifier-coupled mathematical generation is sharp and, we think, not the one a
practitioner expects. A proof checker guarantees validity and makes every error benign (under
sound coverage no output is ever false, \cref{thm:flood}), but it supplies no information
about value (\cref{thm:taste}). Covering unrecorded valuable mathematics is therefore not
free: by \cref{cor:dichotomy} any generator reaching beyond coverage $\alpha/2$ must emit
infinitely many certified trivia, and a generator that does so is the \emph{provably optimal
shape} of a broad generator, not a defect to be engineered away. What can be engineered away
is its visible cost: those trivia are needed only at asymptotically negligible rate
(\cref{thm:harvest}). The corollary for system design is that selectivity---taste, the
decision of which valid statements are worth recording---cannot come from the verifier and
must come from examples; ``filter the trivia'' is a post-processing question about value,
orthogonal to the validity the checker already certifies.

\paragraph{The count budget collapses (a finding, not an open problem).}
One might hope for a finer hierarchy between the finite and the infinite trivia regimes,
indexed by a count budget $g(N)$ on the trivia emitted in the first $N$ rounds. There is none.
Our achievability proof shows that the sweep claims every never-enumerated valuable string
under \emph{any} unbounded trivia allowance, so every $g(N)\to\infty$ already buys the full
$1-\alpha/2$ (\cref{thm:harvest}); the finite-trivia cap $\alpha/2$ is exactly the
complementary case (\cref{thm:caps}). The dichotomy is genuinely \emph{count-finite versus
count-infinite}, full stop, and not the coarse shadow of a smoother trade-off.

\paragraph{Open problems.}
\begin{enumerate}
  \item \emph{The singleton sparse-core race.} On the tight family the recorded core is the
    conjunction of the candidates and is raceable at exactly $1/2$. When the core is an
    unstructured sparse set (a singleton collection), a bottom-racing adversary outruns any
    core-blind generator, and the optimal share of the recorded mass is governed by
    combinatorics we have not resolved. We conjecture the optimum is $1-\alpha$ for $\alpha\le
    1/2$; settling it is a cat-and-mouse problem about which the present techniques are
    silent.
  \item \emph{Partial enumeration of structured chains.} \cref{thm:abelian} establishes
    breadth for abelian refinement chains in the breadth model; the behavior of their relative
    densities under partial enumeration, whether the structured regime that needs no trivia
    for breadth also escapes them for density, is open.
  \item \emph{Statistical rates.} We work distribution-free and in the limit. A relativization
    of the rate analysis of \citet{kalavasis2025limits} to the nested pair, quantifying how
    fast coverage approaches its optimum, is left open.
  \item \emph{Computational and efficient versions.} Our generators are arbitrary functions of
    the prefix; whether the dichotomy survives a computability or efficiency constraint, in
    the spirit of the complexity barriers of \citet{arenas2025complexity}, is open in the
    nested model.
  \item \emph{Oracle-free sound coverage.} \cref{thm:flood}(b) exhibits one pair model on
    which no oracle-free generator achieves sound coverage, but we do not characterize which
    pair models admit it. We ask for an Angluin-type condition on $\Pairs$ (a combinatorial
    property of the candidate pairs) deciding whether oracle-free sound coverage
    (equivalently, the sandwich property of \eqref{eq:soundcov}) is achievable; our separation
    gives one failure, a full characterization is open.
\end{enumerate}

\paragraph{Limitation.}
The dichotomy is stated for the tight family $\Hfam$, where the recorded core is the
conjunction of candidates; the universal sweep bound $\cov\ge 1-\dup{\Core}{\Val}$ holds for
every countable collection, but the matching lower bound $1-\alpha/2$ does not, and core
structure, not merely density, decides the optimum, as the singleton race above shows. The
compression instantiation is a worked realization, not a theorem about all compressible
languages: \cref{prop:free} exhibits families meeting the lower-bound interface, not a
generic-language statement. And throughout, value is modeled as an unknown language, not
defined; the contribution is the price of covering it, not a theory of what makes mathematics
worth doing.

\section{Outlook: We Must Generate. We Will Select.}
\label{sec:coda}

A century ago, \citet{poincare1914science} located mathematical discovery not in the
production of true statements, which ``can be done by any one,'' but in the discernment that
selects among them. Our results give that observation a quantitative edge in the machine age.
A perfect verifier secures the border of the formal world: within it nothing is false, and
every unavoidable error has been demoted to mere triviality (\cref{thm:flood}). About which
truths deserve to be kept, however, the verifier is silent (\cref{thm:taste}), and the
dichotomy prices that silence. A generator that would harvest the unrecorded valuable
mathematics must flood; the flood can be thinned to a vanishing fraction of the stream, but
never dammed to a finite count (\cref{thm:harvest}, \cref{cor:dichotomy}). Flood and harvest
are not a defect and its remedy. They are two faces of a single process, and the first cannot
be dispensed with if the second is to be complete. What remains, for machines as it has
always been for mathematicians, is taste: the selection that must follow correctness and that
no checker can supply. If machines are to help expand mathematics, they must first be allowed
to generate more truth than anyone cares to know, and we must learn to find, inside that
flood, the part worth keeping. Hilbert's confidence, broadcast to the world in
\citeyear{hilbert1930naturerkennen}, has lost none of its force. Our theorems only add a
quiet corollary: the knowing will not arrive alone.
\begin{poincarebox}
\begin{quote}
\itshape
``Wir m\"ussen wissen. Wir werden wissen.''

``We must know. We will know.''

\hfill --- David Hilbert, K\"onigsberg address \citeyearpar{hilbert1930naturerkennen}
\end{quote}
\end{poincarebox}

% ---- Bibliography --------
\bibliographystyle{plainnat} 
\bibliography{refs}

% ---- Appendix ------
\newpage
\appendix
\begin{center}
    \textbf{\Huge Appendix}
\end{center}
\section{Notation}
\label{app:notation}

We collect the symbols used throughout. The conventions are those of \cref{sec:prelim}; this
table is a reference, not a redefinition, and every entry points to the section where the
object is introduced.

\renewcommand{\arraystretch}{1.15}
\begin{center}\small
\begin{tabular}{@{}ll@{}}
\toprule
Symbol & Meaning \\
\midrule
$\gnd$ & countable ground set of statements, fixed canonical order $\prec$; $1$-based $\rest{L}{N}$ \\
$\Amb$ & ambient \emph{formal} (verifiable) language; membership oracle is the proof checker \\
$\Val$ & \emph{valuable} target language, $\Val \subseteq \Amb$ \\
$\Pairs$ & pair model $\{(\Amb_i, \Val_i)\}$ (\cref{def:pairmodel}) \\
$\fiber{F}$ & fiber over $F$: candidates $\{\Val_i : \Amb_i = F\}$ sharing one ambient \\
$\Hcal$ & a countable collection of candidate targets \\
$\Hcal_t$ & candidates consistent with $\Seen_t$: $\{H' : \Seen_t \subseteq H'\}$ \\
$\Seen_t$ & seen-set: values enumerated by the adversary through round $t$ \\
$a_t$ & generator's output at round $t$ (element-based, $a_t \notin \Seen_t$) \\
$\Out$ & limit set $\{a_t : t \ge 1\}$ of all outputs over the run \\
$\Core$ & revealed core: infinite $\Core \subseteq \Val$ the adversary enumerates \\
$\alpha$ & lower density $\dlow{\Core}{\Val}$ of the core in the target \\
$\dlow{\cdot}{\cdot},\ \dup{\cdot}{\cdot}$ & lower / upper prefix density (liminf / limsup of $\abs{L' \cap \rest{L}{N}}/N$) \\
$\cov$ & coverage at truth $\Val$: $\cov = \dlow{\Out \cap \Val}{\Val}$ \\
trichotomy & hallucination ($a \notin \Amb$) / triviality ($a \in \Amb \setminus \Val$) / valuable ($a \in \Val$) \\
$\trv_N$ & triviality rate through round $N$ (\cref{def:trichotomy}) \\
$\Hfam$ & tight family $\{K, K'\}$ of \eqref{eq:Halpha} \\
$K \subsetneq K'$ & sparse revealed target $K$ inside coverable world $K'$; $D = K' \setminus K$ \\
$m_N$ & $\abs{K \cap \rest{K'}{N}} = \lfloor \alpha N \rfloor$, the $K$-count in the $K'$-prefix \\
$I_t = \bigcap \Hcal_t$ & conjunction of consistent candidates \\
$U_t = \bigcup \Hcal_t$ & union of consistent candidates \\
$\mc I^{(t)}$ & KW's identified intersection (\cref{fact:kw25}) \\
$G_{\mathrm{sw}}$ & universal sweep generator \eqref{eq:gharv} \\
$G^*$ & tight generator on $\Hfam$ \eqref{eq:gstar} \\
$R = \{k^2\}$ & sweep schedule (density-$0$, infinite) \\
$A_K$ & racing adversary (\cref{def:racingadv}) \\
$E_M$ & fresh-exhaustion round for $\rest{K}{M}$ (\cref{app:race}) \\
$T_{\mathrm{race}},\ w(t)$ & race rounds (density $1$) and the non-race deficit $w(t) = o(t)$ \\
$\theta_\kappa(L) = \kappa \log_2(1+L)$ & compressibility cut (\cref{app:background}) \\
$\Comp{D}$ & $D$-compressible sublanguage $\{w : \abs{w}_{G \cup D} \le \theta_\kappa(\abs{w}_G)\}$ \\
$A_n,\ F_n$ & free abelian monoid on $n$ generators; free monoid on $n$ letters \\
\bottomrule
\end{tabular}
\end{center}
\renewcommand{\arraystretch}{1}

\section{Map of the Results}
\label{app:map}

\begin{figure}[!htp]
\centering
\begin{tikzpicture}[
  >=Stealth,
  every node/.style={font=\footnotesize},
  box/.style={draw, rounded corners, inner sep=3pt, minimum height=6mm, align=center},
  fact/.style={box, dashed,
    label={[font=\scriptsize, gray, label distance=-1pt]below:restated}},
  edge/.style={->, gray!70},
  iface/.style={->, dotted, gray!60},
]
% column x: facts 0, lemmas 3.4, theorems 6.8, right 10.2
% Row 0: taste chain
\node[fact] (kmv)   at (0,0)        {\cref{fact:kmv}};
\node[box]  (taste) at (6.8,0)      {\cref{thm:taste}};
\node[box]  (abel)  at (10.2,0)     {\cref{thm:abelian}};
% flood block
\node[box] (sand)  at (3.4,-1.0)    {\cref{lem:sandwich}};
\node[box] (fresh) at (3.4,-2.0)    {\cref{lem:fresh}};
\node[box] (flood) at (6.8,-1.5)    {\cref{thm:flood}};
% caps block
\node[box] (comp)  at (3.4,-3.0)    {\cref{lem:composition}};
\node[box] (race)  at (3.4,-4.1)    {\cref{lem:race}};
\node[box] (caps)  at (6.8,-3.5)    {\cref{thm:caps}};
% harvest block
\node[fact] (kw)   at (0,-5.5)      {\cref{fact:kw16,fact:kw25}};
\node[box]  (ptr)  at (3.4,-6.5)    {\cref{lem:pointer}};
\node[box]  (harvest) at (6.8,-5.5) {\cref{thm:harvest}};
% right column
\node[box] (iface) at (10.2,-1.9)   {\cref{rmk:interface}};
\node[box] (dich)  at (10.2,-3.7)   {\cref{cor:dichotomy}};
\node[box] (free)  at (10.2,-5.5)   {\cref{prop:free}};
\node[box] (prec)  at (10.2,-6.7)   {\cref{prop:precision}};
% edges
\draw[edge] (kmv)   -- (taste);
\draw[edge] (taste) -- (abel);
\draw[edge] (sand)  -- (flood);
\draw[edge] (fresh) -- (flood);
\draw[edge] (comp)  -- (caps);
\draw[edge] (race)  -- (caps);
\draw[edge] (race)  -- (harvest);
\draw[edge] (ptr)   -- (harvest);
\draw[edge] (kw)    -- (harvest);
\draw[edge] (caps)    -- (dich);
\draw[edge] (harvest) -- (dich);
\draw[iface] (caps) -- (iface) node[midway, above, sloped] {\scriptsize interface};
\draw[edge] (iface) to[bend left=52, looseness=1.15] (free);
\draw[edge] (ptr)   to[bend right=10] (free);
\draw[edge] (free)  -- (prec);
\end{tikzpicture}
\caption{Dependency map. Dashed boxes are restated prior facts; solid boxes are
results of this paper. The impossibility theorem (\cref{thm:caps}) and the
achievability theorem (\cref{thm:harvest}) meet at the dichotomy
(\cref{cor:dichotomy}); the case study (\cref{thm:abelian}, \cref{prop:free},
\cref{prop:precision}) reads the dichotomy off the interface of
\cref{rmk:interface}.}
\label{fig:map}
\end{figure}
\Cref{fig:map} traces the dependency structure. The map has four layers. The restated facts
supply the two external inputs: KMV's breadth characterization (\cref{fact:kmv}) and KW's
partial-enumeration guarantees (\cref{fact:kw16,fact:kw25}). The lemmas are the load-bearing
mechanisms: sandwich-exclusion and fresh-index (\cref{lem:sandwich,lem:fresh}) for the
separation, the race lemma and density-composition (\cref{lem:race,lem:composition}) for the
lower bound, and the pointer lemma (\cref{lem:pointer}) for the sweep. The theorems assemble
these: taste (\cref{thm:taste}) from KMV, the separation (\cref{thm:flood}) from the two chain
lemmas, the caps (\cref{thm:caps}) and the sweep theorem (\cref{thm:harvest}) from the race,
composition, and pointer lemmas. The caps and the sweep theorem meet at the dichotomy
(\cref{cor:dichotomy}). Finally the case study instantiates the abstract picture:
\cref{thm:abelian} reads off taste, while \cref{prop:free} reads the dichotomy off the
interface (\cref{rmk:interface}) and reuses the sweep pointer for its extreme separation
point.

\section{Mathematical Background}
\label{app:background}

The case study of \cref{sec:res-instantiation} works inside two monoids and the macro
framework of \citet{aksenov2026compression} (ABFM). We collect the definitions here so the
body can name them in one line. We use this model as a substrate only: no theorem of ABFM is
an ingredient in any proof below.

\subsection{Monoids and canonical orders}
\label{app:bg-monoids}

A \emph{monoid} is a set with an associative binary operation and an identity element. Two
monoids carry the case study.

\emph{The free abelian monoid $A_n$} on generators $a_1, \dots, a_n$ is the set of formal
products $\prod_{i=1}^n a_i^{N_i}$ with $N_i \in \N$, multiplied coordinate-wise; it is
isomorphic to $(\N^n, +)$, with each element identified with its exponent vector $(N_1, \dots,
N_n)$. The \emph{raw length} (the generator count) is $\abs{w}_G = \sum_{i=1}^n N_i$. We order
$A_n$ by degree-then-lex: first by $\abs{w}_G$, then lexicographically on the exponent vector.

\emph{The free monoid $F_n$} on $n$ letters is the set of finite words over the alphabet, with
concatenation as the operation and the empty word as identity; $\abs{w}_G$ is the word length.
We order $F_n$ by length-then-lex.

Both orders are order-isomorphic to $\N$ (\cref{fig:monoids}; each has finitely many elements
below any given one, and the order is total), which is exactly the structure the density
machinery of \cref{sec:prelim-km} requires: a countable ground set with a canonical order in
which $\rest{L}{N}$ is a well-defined initial segment.

\begin{figure}[t]
\centering
% Figure: the two monoids and their canonical orders (Appendix B.1)
\begin{tikzpicture}[>=Stealth, font=\footnotesize]
% ================= Panel A: A_2 as the lattice N^2 =================
\begin{scope}
\draw[->, gray!70] (0,0) -- (3.9,0) node[below left=0pt and -4pt, black] {\scriptsize $N_1$};
\draw[->, gray!70] (0,0) -- (0,3.9) node[below left=-4pt and 0pt, black] {\scriptsize $N_2$};
% dots with order indices (degree, then ascending lex on (N1,N2))
\foreach \x/\y/\n in {1/0/2, 1/1/4, 2/0/5, 1/2/7, 2/1/8, 3/0/9} {
  \fill[blue!60!black] (\x,\y) circle (1.7pt);
  \node[below right=0pt and -1pt, font=\tiny, blue!50!black] at (\x,\y) {\n};
}
\foreach \x/\y/\n in {0/1/1, 0/2/3, 0/3/6} {
  \fill[blue!60!black] (\x,\y) circle (1.7pt);
  \node[above left=-1pt and 0pt, font=\tiny, blue!50!black] at (\x,\y) {\n};
}
\fill[gray!50] (0,0) circle (1.4pt);
% the order path snaking through the diagonals
\draw[->, gray!60, shorten >=3pt, shorten <=3pt] (0,1) -- (1,0);
\draw[->, gray!60, shorten >=3pt, shorten <=3pt] (1,0) -- (0,2);
\draw[->, gray!60, shorten >=3pt, shorten <=3pt] (0,2) -- (1,1);
\draw[->, gray!60, shorten >=3pt, shorten <=3pt] (1,1) -- (2,0);
\draw[->, gray!60, shorten >=3pt, shorten <=3pt] (2,0) -- (0,3);
\draw[->, gray!60, shorten >=3pt, shorten <=3pt] (0,3) -- (1,2);
\draw[->, gray!60, shorten >=3pt, shorten <=3pt] (1,2) -- (2,1);
\draw[->, gray!60, shorten >=3pt, shorten <=3pt] (2,1) -- (3,0);
\node[align=center, anchor=north] at (1.9,-0.55)
  {$A_2$: elements $a_1^{N_1} a_2^{N_2}$ are lattice points;\\
   degree-then-lex snakes through the diagonals};
\end{scope}
% ================= Panel B: F_2 as the word tree =================
\begin{scope}[xshift=7.2cm]
\node[gray!60, font=\scriptsize] (eps) at (1.9,3.6) {$\varepsilon$};
\node[font=\scriptsize] (a) at (0.9,2.7) {$a$};
\node[font=\scriptsize] (b) at (2.9,2.7) {$b$};
\node[font=\scriptsize] (aa) at (0.3,1.7) {$aa$};
\node[font=\scriptsize] (ab) at (1.5,1.7) {$ab$};
\node[font=\scriptsize] (ba) at (2.3,1.7) {$ba$};
\node[font=\scriptsize] (bb) at (3.5,1.7) {$bb$};
\node[gray!60, font=\tiny] at (0.3,1.0) {$\vdots$};
\node[gray!60, font=\tiny] at (1.5,1.0) {$\vdots$};
\node[gray!60, font=\tiny] at (2.3,1.0) {$\vdots$};
\node[gray!60, font=\tiny] at (3.5,1.0) {$\vdots$};
\draw[gray!40] (eps) -- (a); \draw[gray!40] (eps) -- (b);
\draw[gray!40] (a) -- (aa);  \draw[gray!40] (a) -- (ab);
\draw[gray!40] (b) -- (ba);  \draw[gray!40] (b) -- (bb);
% order indices
\foreach \nd/\n in {a/1, b/2, aa/3, ab/4, ba/5, bb/6}
  \node[below=1pt, font=\tiny, blue!50!black] at (\nd.south) {\n};
% BFS order arrows
\draw[->, gray!60, shorten >=5pt, shorten <=5pt] (a) -- (b);
\draw[->, gray!60, shorten >=5pt, shorten <=5pt] (aa) -- (ab);
\draw[->, gray!60, shorten >=5pt, shorten <=5pt] (ab) -- (ba);
\draw[->, gray!60, shorten >=5pt, shorten <=5pt] (ba) -- (bb);
\node[align=center, anchor=north] at (1.9,-0.55)
  {$F_2$: words are tree nodes;\\
   length-then-lex reads the tree level by level};
\end{scope}
\end{tikzpicture}
\caption{The two monoids and their canonical orders (\cref{app:bg-monoids}). Left:
$A_2 \cong (\N^2,+)$; degree-then-lex enumerates the lattice diagonal by diagonal,
so the order is isomorphic to $\N$. Right: $F_2$; length-then-lex reads the word
tree level by level. In both cases $\rest{L}{N}$ is a well-defined initial segment,
which is all the density machinery of \cref{sec:prelim} needs.}
\label{fig:monoids}
\end{figure}

\subsection{The macro framework}
\label{app:bg-macros}

Following ABFM, a \emph{dictionary} $D$ is a finite or infinite set of designated monoid
elements, called \emph{macros}. Generation is allowed to use the \emph{augmented generating
set} $G \cup D$: a factorization of $w$ is an expression of $w$ as a product of elements of $G
\cup D$. The \emph{wrapped length}
\begin{equation}\label{eq:wraplen}
  \abs{w}_{G \cup D} \defeq \min \set*{\, k : w = g_1 \cdots g_k,\ g_j \in G \cup D \,}
\end{equation}
is the minimum number of symbols from $G \cup D$ in any factorization of $w$, and the
\emph{raw} (unwrapped) length $\abs{w}_G$ uses $G$ alone. A macro that abbreviates a long
element shortens its wrapped length below its raw length: this is the precise sense in which
``compressible'' means ``short under the dictionary.''

For $A_n$ the relevant dictionaries are \emph{geometric}: $D_c = \{\, c^{\,j} a_i : i \le n,\
j \ge 1 \,\}$ provides single-letter powers at every scale $c^j$. For this dictionary the
wrapped length has a closed form,
\begin{equation}\label{eq:bg-digitsum}
  \abs{w}_{G \cup D_c} = \sum_{i=1}^n s_c(N_i),
\end{equation}
where $s_c(N)$ is the sum of the base-$c$ digits of $N$ (\cref{fig:macro}) (the digit-sum
lemma, \cref{lem:digitsum}; the minimizing factorization is the base-$c$ representation of
each coordinate).

\begin{figure}[t]
\centering
\resizebox{0.97\textwidth}{!}{% Figure: wrapped length and the compressibility cut, by example (Appendix B.2)
% A_1 with the geometric dictionary D_3 = {a^3, a^9, a^27, ...}; kappa = 1.
\begin{tikzpicture}[>=Stealth, font=\footnotesize]
% ---- row 1: a^27 ----
\foreach \i in {1,...,27} {
  \pgfmathsetmacro\x{0.32*\i}
  \draw[fill=olive!25] (\x,2.2) rectangle ++(0.28,0.28);
}
\draw[decorate, decoration={brace, amplitude=4pt}] (0.32,2.62) -- (8.92,2.62);
\node[font=\scriptsize, above=6pt] at (4.62,2.62) {one macro: $a^{27} \in D_3$};
\node[font=\scriptsize, anchor=west, align=left] at (9.4,2.34)
  {$\abs{a^{27}}_{G \cup D_3} = s_3(27) = 1 \le \log_2 28$:\\
   \textcolor{olive!50!black}{compressible ($\in \Comp{D_3}$)}};
% ---- row 2: a^25 ----
\foreach \i in {1,...,25} {
  \pgfmathsetmacro\x{0.32*\i}
  \draw[fill=gray!20] (\x,0.7) rectangle ++(0.28,0.28);
}
% braces for 9+9+3+3+1
\draw[decorate, decoration={brace, mirror, amplitude=3pt}] (0.32,0.62) -- (3.16,0.62);
\draw[decorate, decoration={brace, mirror, amplitude=3pt}] (3.24,0.62) -- (6.08,0.62);
\draw[decorate, decoration={brace, mirror, amplitude=3pt}] (6.16,0.62) -- (7.08,0.62);
\draw[decorate, decoration={brace, mirror, amplitude=3pt}] (7.16,0.62) -- (8.08,0.62);
\draw[decorate, decoration={brace, mirror, amplitude=3pt}] (8.16,0.62) -- (8.44,0.62);
\node[font=\tiny, below=4pt] at (1.74,0.62) {$a^9$};
\node[font=\tiny, below=4pt] at (4.66,0.62) {$a^9$};
\node[font=\tiny, below=4pt] at (6.62,0.62) {$a^3$};
\node[font=\tiny, below=4pt] at (7.62,0.62) {$a^3$};
\node[font=\tiny, below=4pt] at (8.30,0.62) {$a$};
\node[font=\scriptsize, anchor=west, align=left] at (9.4,0.84)
  {$\abs{a^{25}}_{G \cup D_3} = s_3(25) = 5 > \log_2 26$:\\
   \textcolor{black!60}{not compressible (just misses the cut)}};
\end{tikzpicture}}
\caption{Wrapped length and the compressibility cut, by example
(\cref{app:bg-macros}; $A_1$, geometric dictionary $D_3 = \{a^3, a^9, a^{27},
\dots\}$, $\kappa = 1$). The wrapped length is the base-$3$ digit sum
\eqref{eq:bg-digitsum}: $a^{27}$ compresses to a single macro and passes the cut,
while $a^{25} = a^9 a^9 a^3 a^3 a$ needs five symbols and just misses it.
Compressibility is thin, which is the precision phenomenon
\cref{prop:precision} quantifies.}
\label{fig:macro}
\end{figure}

The \emph{compressibility cut} is the threshold
\begin{equation}\label{eq:bg-cut}
  \theta_\kappa(L) \defeq \kappa \log_2(1 + L),
\end{equation}
and the \emph{compressible sublanguage} is
\begin{equation}\label{eq:bg-comp}
  \Comp{D} \defeq \set*{\, w : \abs{w}_{G \cup D} \le \theta_\kappa(\abs{w}_G) \,},
\end{equation}
the elements whose wrapped length is logarithmic in their raw length. A \emph{bounded cut}
$\theta \equiv 1$ (used in \cref{prop:free}(i)) collapses $\Comp{D, 1}$ to the nameable corpus
$G \cup D$ itself.

We are explicit about the ledger: the framework supplies definitions (macros, wrapped length,
geometric dictionaries) and the motivating asymmetry between $A_n$ and $F_n$, and nothing
more. The proofs in \cref{app:instantiation} are self-contained; no ABFM theorem is invoked.

\subsection{The usage of ABFM models in the case study}
\label{app:bg-guide}

\Cref{sec:res-instantiation} uses these objects in two shapes. In $A_n$ it builds
\emph{refinement chains}: nesting the geometric dictionaries by $D_{t+1} \subseteq D_t$ gives
a descending chain of compressible languages $\Comp{D_t}$ separated by digit-sum witnesses
(\cref{thm:abelian}). In $F_n$ it builds \emph{name-cut pairs}: words $x^i a\, x^{q+1-i} a\,
x^{2^m}$ whose unique parsing lets an $a$-position cut the corpus into a sparse target $K$ and
a full world $K'$ at any rational density (\cref{prop:free}).

\section{Deferred Proofs}
\label{app:proofs}

This appendix restates each result with its original number and gives the full proof.

\subsection{Verification is not taste (\texorpdfstring{\cref{thm:taste}}{Theorem 1})}
\label{app:taste}

\begin{statementbox}
\ThmTaste*
\end{statementbox}
\begin{proof}
We prove (a) and (b) together (necessity, then sufficiency), then (c) and (d) and the edge
cases.

\medskip\noindent\textbf{Necessity.}\quad
Suppose some fiber $\fiber{\Amb^*}$ violates the relevant condition. Write $\Hcal \defeq
\fiber{\Amb^*}$; it is countable, nonempty, and all its members are infinite. Restrict the
adversary to instances $(\Amb^*, \Val)$ with $\Val \in \Hcal$. On every such instance the
verifier handed to a relativized generator $G$ is always the single fixed function $\mathbf
1_{\Amb^*}$. Define the oracle-stripped ordinary generator
\begin{equation}\label{eq:strip}
  G'_n(\Seen_n) \defeq G_n^{\,\mathbf 1_{\Amb^*}}(\Seen_n).
\end{equation}
This is well defined: in round $n$ the generator's adaptive query tree has finite depth, and
the fixed function $\mathbf 1_{\Amb^*}$ resolves every query, selecting one leaf and one
output. Thus $G'_n$ is an arbitrary set-valued function of the prefix, exactly the object KMV
quantify over (\cref{fact:kmv}; their Remark~2.3 imposes no computability). Extensional
agreement holds across all of $\Hcal$: for every fiber instance and every enumeration,
\begin{equation}\label{eq:extagree}
  G_n^{\,\mathbf 1_{\Amb_z}}(\Seen_n) = G'_n(\Seen_n)
  \quad\text{for all } n \tag{$*$}
\end{equation}
because $\Amb_z = \Amb^*$ for every $z$ indexing a member of the fiber, so the oracle is the
identical function. Now if $G$ achieved exact (resp.\ approximate) breadth in the limit on all
of $\Pairs$, then by \eqref{eq:extagree} the ordinary generator $G'$ would achieve it on every
target in $\Hcal$, contradicting \cref{fact:kmv}, since $\Hcal$ violates Angluin's (resp.\ the
weak Angluin) condition. Hence the condition is necessary in each part.

\medskip\noindent\textbf{Sufficiency.}\quad
Assume every fiber satisfies the relevant condition.

\emph{Stage~1 (ambient identification; sample-independent).} In round $n$ the generator
queries the canonical point $x_n$ to the verifier. Call index $i$ \emph{ambient-consistent at
$n$} if $\Amb_i \cap \{x_1, \dots, x_n\} = \Amb \cap \{x_1, \dots, x_n\}$, and let $\hat z_n$
be the least ambient-consistent index (the set is nonempty since the true index $z$
qualifies).

\begin{lemma}[Stage-1 stabilization]\label{lem:s1}
$\hat z_n$ stabilizes in finite time to $z_0 \defeq \min\{ i : \Amb_i = \Amb \}$.
\end{lemma}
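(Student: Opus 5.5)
The plan is the standard finite-elimination argument of Gold-style enumeration, applied to the ambients alone. I will use that the canonical points $x_1, x_2, \dots$ exhaust $\gnd$, and that ambient-consistency is \emph{monotone}: if index $i$ is not ambient-consistent at round $n$, it is not ambient-consistent at any $n' \ge n$, since $\{x_1,\dots,x_n\} \subseteq \{x_1,\dots,x_{n'}\}$. The target index is $z_0 = \min\{i : \Amb_i = \Amb\}$. This minimum exists because the true index $z$ satisfies $\Amb_z = \Amb$, so the set is nonempty and the indices are well-ordered.

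\emph{Step 1 (the target never drops out).} Since $\Amb_{z_0} = \Amb$ as sets, $\Amb_{z_0} \cap \{x_1,\dots,x_n\} = \Amb \cap \{x_1,\dots,x_n\}$ holds for every $n$. So $z_0$ is ambient-consistent at every round, and therefore $\hat z_n \le z_0$ for all $n$.

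\emph{Step 2 (smaller indices are eventually eliminated).} Fix $i < z_0$. Then $\Amb_i \ne \Amb$, so the symmetric difference $\Amb_i \,\triangle\, \Amb$ is nonempty; pick any $x_{k_i}$ in it. This point exists because the canonical enumeration covers all of $\gnd$. For every $n \ge k_i$, the sets $\Amb_i \cap \{x_1,\dots,x_n\}$ and $\Amb \cap \{x_1,\dots,x_n\}$ differ at $x_{k_i}$. Hence $i$ is not ambient-consistent at any round $n \ge k_i$.

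\emph{Step 3 (conclude).} Only finitely many indices lie below $z_0$. Set $N \defeq \max_{i<z_0} k_i$, with $N \defeq 1$ if there is no such index. For every $n \ge N$, no index below $z_0$ is ambient-consistent, while $z_0$ is. Hence $\hat z_n = z_0$ for all $n \ge N$, which is the claimed stabilization.

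I do not expect a real obstacle here. The one point needing care is that the minimum in Step 3 is taken over a \emph{finite} set; this is exactly why the argument targets the least index with the correct ambient rather than the true index $z$. Indices with $\Amb_i = \Amb$ but $i \ne z_0$ can never be eliminated by verifier queries, which is the content of \cref{thm:taste}(d). I would also note that $N$ depends on the instance and is not computable in general, and that this is harmless under the non-effective conventions of \cref{def:verifier}. The stage uses no samples, so Stage~2 can then run the fixed per-fiber generator for $\fiber{\Amb_{z_0}} = \fiber{\Amb}$ from round $N$ on.
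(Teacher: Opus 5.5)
Your proof is correct and follows the paper's argument essentially step for step. In both, $z_0$ is ambient-consistent at every round, so $\hat z_n \le z_0$; each of the finitely many $i < z_0$ is permanently eliminated once some canonical point of $\Amb_i \triangle \Amb$ has been queried; and the maximum of these finitely many elimination rounds is the stabilization time.
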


\begin{proof}
$z_0$ is ambient-consistent at every $n$, so $\hat z_n \le z_0$ always. Each $i < z_0$ has
$\Amb_i \ne \Amb$ (by minimality of $z_0$), hence there is a least canonical point $x_{m(i)}
\in \Amb_i \mathbin\triangle \Amb$, and $i$ is permanently inconsistent from round $m(i)$ on.
Taking $N_1 \defeq \max_{i < z_0} m(i) < \infty$, every $i < z_0$ is eliminated by round
$N_1$, so $\hat z_n = z_0$ for all $n \ge N_1$. (No distinctness of the $\Amb_i$ is needed;
duplicates share the fiber.)
\end{proof}

\emph{Stage~2 (fiber generation).} For each ambient set $\Amb$ that occurs, fix, by the
achievability half of \cref{fact:kmv} applied to the fiber $\fiber{\Amb}$, an ordinary
generator $\Gamma^{(\Amb)}$ achieving exact (resp.\ approximate) breadth in the limit on
$\fiber{\Amb}$. (The choice ranges over the countably many occurring ambient sets; we use the
axiom of choice over this countable index, removable by taking the $\prec$-lexicographically
least generator in a fixed enumeration; see the closing remark.) The constructions' internal
mechanics are absorbed into the fixed function level; no computability is claimed.

Define the composite relativized generator
\begin{equation}\label{eq:composite}
  G_n^{\,\mathbf 1_{\Amb}}(\Seen_n) \defeq \Gamma^{(\Amb_{\hat z_n})}_n(\Seen_n).
\end{equation}
For $n \ge N_1$, \cref{lem:s1} gives $\Amb_{\hat z_n} = \Amb$, so
\begin{equation}\label{eq:dagger}
  G_n^{\,\mathbf 1_{\Amb}}(\Seen_n) = \Gamma^{(\Amb)}_n(\Seen_n)
  \quad\text{for } n \ge N_1. \tag{$\dagger$}
\end{equation}
The true target $\Val = \Val_z$ lies in the true fiber $\fiber{\Amb}$ (witnessed by $z$), and
$\Seen_n$ is the genuine prefix of a genuine KM enumeration of $\Val$: the generator never
restarts the enumeration at $N_1$. By the guarantee of $\Gamma^{(\Amb)}$ there is a finite
round $n^*_\Gamma$ from which the breadth property holds; for $n \ge \max(N_1, n^*_\Gamma)$,
\eqref{eq:dagger} transfers it to $G$. The pre-stabilization rounds form a finite prefix,
absorbed by ``in the limit''. This proves (a) and (b).

\medskip\noindent\textbf{(c).}\quad
In the single-ambient case the verifier is the fixed function $\mathbf 1_{\Amb}$ from
round~$1$, so \eqref{eq:extagree} gives relativized $\iff$ oracle-free, and both $\iff$ (weak)
Angluin by \cref{fact:kmv}. Thus, for the breadth goal, the verifier's only information
content in the pair model is the identification of $\Amb$ performed in Stage~1; within a fiber
it changes nothing. (The scope is essential: \cref{thm:flood} shows that for the
sound-coverage goal the verifier does strictly more.)

\medskip\noindent\textbf{(d).}\quad
For $\Val_i, \Val_j$ in the same fiber, $\mathbf 1_{\Amb_i} = \mathbf 1_{\Amb_j} = \mathbf
1_{\Amb}$ literally. The adaptive query tree, resolved by $\mathbf 1_{\Amb}$ on a fixed prefix
$\Seen_n$, yields the identical path and identical output regardless of which of $\Val_i,
\Val_j$ is the true target. Hence any behavioral distinction within a fiber is driven by the
enumeration alone.

\medskip\noindent\textbf{Edge cases.}\quad
\textsc{(E1) Duplicates.} Angluin's conditions are properties of the underlying set system and
are invariant under repetition of languages in the collection. \textsc{(E2) Finite fibers.} A
finite fiber satisfies Angluin's condition: for each ordered pair $i \ne j$ with $\Val_i
\setminus \Val_j \ne \emptyset$, place a witness $t_{ij} \in \Val_i \setminus \Val_j$ into the
tell-tale $T_i$ (at most $m-1$ elements for a fiber of size $m$); then any $\Val_j \supseteq
T_i$ forces $\Val_i \subseteq \Val_j$, so $\Val_j$ is not a proper subset of $\Val_i$.
\textsc{(E3)} Only occurring ambients are ever guessed, and occurring fibers are nonempty
(they contain the realized target). \textsc{(E4)} The same valuable language appearing under
two different ambients constitutes two different instances in two different fibers; Stage~1
separates them by the ambient query.

\medskip\noindent\textbf{Choice avoidance.}\quad
The use of the axiom of choice in Stage~2 is cosmetic. Fix once and for all a canonical
enumeration of all ordinary generators (e.g.\ a $\prec$-lexicographic order on finite
descriptions); for each occurring ambient $\Amb$, let $\Gamma^{(\Amb)}$ be the
lexicographically least generator achieving the required breadth on $\fiber{\Amb}$. This is a
definable choice, so no appeal to the axiom of choice is needed.
\end{proof}

\subsection{The exhaustive generator and the separation (\texorpdfstring{\cref{thm:flood}}{Theorem 2})}
\label{app:flood}

We first record the two structural lemmas underlying the separation, then give the phase
construction.

\begin{lemma}[Sandwich-exclusion]\label{lem:sandwich}
Work over $\gnd = \N \times \{0,1\}$. Let $L^* = \gnd_0$ (column $0$), let the spine be the
even column-$0$ indices $R = \{(2m, 0) : m \ge 0\}$, and set $\Val_j = L^* \setminus R_{\ge
j}$ with $R_{\ge j} = \{(2m, 0) : m \ge j\}$. Let $\Amb_j = \Val_j \cup A_j$ and $\Amb_\infty
= L^* \cup A_\infty$ with $A_j, A_\infty \subseteq \gnd_1$ (column $1$) infinite. Then for
every $j$, every finite $\Seen \subseteq \gnd$, and every $G \subseteq \gnd$, $G$ cannot
satisfy both
\begin{enumerate}[label=\textup{(\roman*)}]
  \item $L^* \setminus \Seen \subseteq G \subseteq \Amb_\infty$,
  \item $\Val_j \setminus \Seen \subseteq G \subseteq \Amb_j$.
\end{enumerate}
\end{lemma}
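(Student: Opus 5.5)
The plan is a direct containment contradiction: assume some $G$ satisfies both (i) and (ii), and trap the infinite spine tail $R_{\ge j}$ inside the finite set $\Seen$.

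\emph{Step 1 (lower bound from the union).} From the left inclusion of (i), $L^* \setminus \Seen \subseteq G$. Since $R_{\ge j} \subseteq R \subseteq L^* = \gnd_0$, intersecting gives $R_{\ge j} \setminus \Seen \subseteq G$.

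\emph{Step 2 (upper bound from the chain element).} From the right inclusion of (ii), $G \subseteq \Amb_j = \Val_j \cup A_j$. I claim $\Amb_j$ is disjoint from $R_{\ge j}$. Indeed, $\Val_j = L^* \setminus R_{\ge j}$ misses $R_{\ge j}$ by definition. Also $A_j \subseteq \gnd_1$, while $R_{\ge j} \subseteq \gnd_0$, and the two columns are disjoint. Hence $G \cap R_{\ge j} = \emptyset$.

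\emph{Step 3 (contradiction).} Combining Steps 1 and 2, $R_{\ge j} \setminus \Seen \subseteq G \cap R_{\ge j} = \emptyset$, so $R_{\ge j} \subseteq \Seen$. But $R_{\ge j} = \{(2m,0) : m \ge j\}$ is infinite for every $j$, while $\Seen$ is finite. This is impossible, so no such $G$ exists.

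Neither the lower inclusion in (ii) nor the upper inclusion in (i) is needed, and nothing about $A_\infty$ is used beyond $\Amb_\infty$ being well defined. There is no real obstacle: the only point needing care is Step 2, where the column bookkeeping ($A_j \subseteq \gnd_1$, spine in $\gnd_0$) ensures that the fattening of the ambient cannot re-admit the removed spine tail.
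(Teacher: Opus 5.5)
Your proof is correct and is essentially the paper's argument: both combine the lower inclusion of (i) with the upper inclusion of (ii) and use the column separation ($A_j \subseteq \gnd_1$, spine in $\gnd_0$) to force $R_{\ge j} \subseteq \Seen$. The only cosmetic difference is that you restrict directly to $R_{\ge j}$, whereas the paper intersects with column $0$ to get $L^* \setminus \Seen \subseteq \Val_j$ first.
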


\begin{proof}
Suppose both hold. From the lower side of (i) and the upper side of (ii), $L^* \setminus \Seen
\subseteq G \subseteq \Amb_j = \Val_j \cup A_j$. Intersect with $\gnd_0$ (column $0$): since
$L^* \subseteq \gnd_0$ and $A_j \subseteq \gnd_1$ are disjoint columns,
\begin{equation*}
  L^* \setminus \Seen = (L^* \setminus \Seen) \cap \gnd_0
  \subseteq (\Val_j \cup A_j) \cap \gnd_0 = \Val_j .
\end{equation*}
Hence $L^* \setminus \Val_j \subseteq \Seen$, i.e.\ $R_{\ge j} \subseteq \Seen$. But $R_{\ge
j}$ is infinite and $\Seen$ is finite, a contradiction. (The set $A_\infty$ is never used; the
contradiction lives entirely in column $0$.)
\end{proof}

\begin{lemma}[Fresh-index availability]\label{lem:fresh}
With the parameters of \cref{lem:sandwich}: for every finite $T \subseteq L^*$ and every $N
\in \N$ there is $j \ge N$ with $T \subseteq \Val_j \subsetneq L^*$ and $L^* \setminus \Val_j
= R_{\ge j}$ infinite. Consequently $\{\Val_j\}_j \cup \{L^*\}$ is a Gold-style ascending
chain with union $L^*$, non-identifiable from positive data, and $L^*$ violates Angluin's
condition in this collection.
\end{lemma}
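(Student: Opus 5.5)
The argument is a direct computation on the explicit chain of \cref{lem:sandwich}, followed by the standard Gold argument for the Angluin consequence. Fix a finite $T \subseteq L^*$ and $N \in \N$. Since $T$ is finite, the set of spine points it contains, $T \cap R$, is finite, so there is a largest index $m_0$ with $(2m_0,0) \in T$ (set $m_0 = -1$ if $T \cap R = \emptyset$). I will choose $j \defeq \max(N, m_0+1)$. Then $T \cap R_{\ge j} = \emptyset$, and because $T \subseteq L^*$, this gives $T \subseteq L^* \setminus R_{\ge j} = \Val_j$. Next, $R_{\ge j} \subseteq R \subseteq L^*$, so $L^* \setminus \Val_j = R_{\ge j}$ exactly; this set is infinite, which in particular makes it nonempty, so $\Val_j \subsetneq L^*$.

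For the chain structure, I will note that $R_{\ge j+1} \subsetneq R_{\ge j}$, and the point $(2j,0)$ separates them. Hence $\Val_j \subsetneq \Val_{j+1}$. Every point of $L^*$ is eventually in the chain: points off the spine lie in every $\Val_j$, and the spine point $(2m,0)$ lies in $\Val_j$ once $j > m$. Therefore $\bigcup_j \Val_j = L^*$. Each $\Val_j$ is infinite because it contains the odd column-$0$ points.

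For the failure of Angluin's condition at $L^*$, suppose $T_{L^*} \subseteq L^*$ is any finite candidate tell-tale. Taking $T = T_{L^*}$ in the first part yields some $\Val_j$ in the collection with $T_{L^*} \subseteq \Val_j \subsetneq L^*$. So no finite subset of $L^*$ can serve as a tell-tale, and the condition fails. Non-identifiability from positive data then follows from the classical results already recalled in \cref{sec:prelim-km}. By \citet{gold1967language}, an ascending chain together with its union is not identifiable. Equivalently, by Angluin's theorem, identifiability requires tell-tales, which we have just shown do not exist for $L^*$. I would not reprove this standard fact; I would only observe that our collection is exactly such a chain.

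I do not expect a real obstacle. The only step needing care is the choice of $j$: it must exceed every spine index occurring in $T$, and also be at least $N$, so that the adversary in \cref{thm:flood} can keep posing the dilemma at arbitrarily large indices.
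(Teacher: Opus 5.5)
Your proposal is correct and follows the paper's proof step for step. Finiteness of $T \cap R$ lets you choose $j \ge N$ beyond every spine index in $T$, which gives $T \subseteq \Val_j$ with $L^* \setminus \Val_j = R_{\ge j}$ infinite; the Angluin violation at $L^*$ follows directly, and the paper likewise just cites non-identifiability as the textbook Gold fact. Your explicit choice $j = \max(N, m_0+1)$ and the strict inclusions $\Val_j \subsetneq \Val_{j+1}$ are only a slightly more detailed rendering of the same argument.
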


\begin{proof}
Since $T$ is finite, $T \cap R$ is finite, so for all sufficiently large $j$ we have $T \cap
R_{\ge j} = \emptyset$, i.e.\ $T \subseteq L^* \setminus R_{\ge j} = \Val_j$; any such $j \ge
N$ works. For every $j$, $R_{\ge j}$ is infinite, so $\Val_j \subsetneq L^*$ with infinite
difference; and $\Val_j$ contains all the odd column-$0$ indices, so it is infinite. The chain
is ascending, $\Val_1 \subseteq \Val_2 \subseteq \cdots$, with $\bigcup_j \Val_j = L^*
\setminus \bigcap_j R_{\ge j} = L^*$. This is the textbook non-identifiable family: every
finite $T \subseteq L^*$ lies in a proper sub-language $\Val_j$ of $L^*$ in the collection,
which is exactly the Angluin violation at $L^*$.
\end{proof}

\begin{statementbox}
\ThmFlood*
\end{statementbox}
\begin{proof}
\medskip\noindent\textbf{(a).}\quad
Since $\Val \subseteq \Amb$ we have $\Val \setminus \Seen_n \subseteq \Amb \setminus \Seen_n
\subseteq \Amb$, so $G_n = \Amb \setminus \Seen_n$ satisfies \eqref{eq:soundcov} at every
round (with $n^* = 0$), on every countable pair model, with zero hallucination. When $\Amb
\setminus \Val$ is infinite, the trivia count $\abs{(\Amb \setminus \Val) \setminus \Seen_n}$
is infinite at every $n$.

\medskip\noindent\textbf{(b).}\quad
Take $\Pairs^{\dagger} = \{(\Amb_j, \Val_j)\}_j \cup \{(\Amb_\infty, L^*)\}$ with the
parameters of \cref{lem:sandwich,lem:fresh}, choosing the $A_j, A_\infty$ pairwise almost
disjoint (this is a modeling convenience and is not load-bearing: \cref{lem:sandwich} is
purely column-$0$). All languages are infinite, $\Val_i \subsetneq \Amb_i$, and $\Amb_i
\setminus \Val_i = A_i$ is infinite.

Fix any oracle-free set-valued generator $G$. We construct an adversary that defeats it. Fix
the canonical repetition-free enumeration $E^*$ of $L^*$, and maintain the invariant that
after phase $\ell$ the shown set is an initial segment $\{e_1, \dots, e_{t_\ell}\}$ of $E^*$.
In phase $\ell$, choose (by \cref{lem:fresh}) an index $j_\ell > \max(j_{\ell-1}, t_{\ell-1})$
with the current shown set contained in $\Val_{j_\ell} \subsetneq L^*$.

\emph{Subphase A.} Continue the enumeration in $E^*$-order, but restricted to $\Val_{j_\ell}$
(append not-yet-shown elements of $\Val_{j_\ell}$ in $E^*$-order). Two cases.
\begin{itemize}
  \item \emph{(A-fail)} If $G$ never sound-covers $(\Amb_{j_\ell}, \Val_{j_\ell})$ at any
    finite round of this continuation, declare $(\Amb_{j_\ell}, \Val_{j_\ell})$ the true pair.
    The produced sequence is a legal KM enumeration of $\Val_{j_\ell}$: its initial segment
    lies in $\Val_{j_\ell}$ by choice of $j_\ell$, and the tail exhausts $\Val_{j_\ell}$.
    Since sound coverage in the limit would in particular hold at some finite round, its
    negation means $G$ fails sound coverage at infinitely many rounds, so $G$ fails in the
    limit. Done.
  \item \emph{(A-succeed)} Let $t'_\ell$ be the first round at which $G_{t'_\ell}$
    sound-covers $(\Amb_{j_\ell}, \Val_{j_\ell})$. By \cref{lem:sandwich}, at this round $G$
    does \emph{not} sound-cover $(\Amb_\infty, L^*)$. \hfill$(\ddagger)$
\end{itemize}

\emph{Subphase B (only in the (A-succeed) case).} Let $P_A$ be the maximum $E^*$-position
among all elements shown through round $t'_\ell$ (a finite number). Append, in $E^*$-order,
the not-yet-shown elements $e_k$ up to some
\begin{equation}\label{eq:tell}
  t_\ell \ \ge\ \max\bigl(P_A,\ t_{\ell-1}+1,\ \ell\bigr),
\end{equation}
restoring the initial-segment invariant: all shown elements lie in $\{e_1, \dots,
e_{t_\ell}\}$, repetition-free. The bound \eqref{eq:tell} clears $P_A$ because subphase A's
filtered enumeration may race ahead in $E^*$-position.

\emph{Conclusion.} If some phase ends in (A-fail), $G$ fails as shown. Otherwise every phase
completes via (A-succeed), so $t_\ell \to \infty$ by \eqref{eq:tell}, and the limit sequence
$E^{\dagger}$ shows every element of $L^*$ exactly once, a legal KM enumeration of $L^*$. At
each of the infinitely many rounds $t'_\ell$, $(\ddagger)$ holds, so $G$ fails sound coverage
for $(\Amb_\infty, L^*)$ at infinitely many rounds, hence fails in the limit. Declare
$(\Amb_\infty, L^*)$ the true pair. Either way, $G$ fails on $\Pairs^{\dagger}$.

\medskip\noindent\textbf{(c).}\quad
Part (a) shows the with-verifier class is all countable pair models; part (b) exhibits a model
outside the without-verifier class. Strictness follows. The breadth contrast is
\cref{thm:taste}(c). For the error-relocation reading: under sound coverage all outputs lie in
$\Amb$ (no hallucination), and when $\Amb \setminus \Val$ is infinite the trivia count
$\abs{(\Amb \setminus \Val) \setminus \Seen_n}$ is infinite (the exhaustive generator); by
\cref{thm:taste}(d) the verifier supplies zero bits about $\Val$ within a fiber, so it cannot
reduce errors or locate value. What it buys is exactly keeping the unavoidable overshoot
inside $\Amb$. The two regimes of part (c) are genuinely distinct: $\Pairs^{\dagger}$ has
singleton fibers, so there the verifier identifies the target and \emph{eliminates} errors;
the relocation reading is the within-fiber regime.
\end{proof}

\subsection{The race lemma (\texorpdfstring{\cref{lem:race}}{Lemma 3})}
\label{app:race}

\begin{statementbox}
\LemRace*
\end{statementbox}
\begin{proof}
Call $x$ \emph{resolved at $t$} if $x \in \Seen_t \cup \Out_t$.

\medskip\noindent\textbf{(R1).}\quad
Let $t_M$ be the first round at which all of $\rest{K}{M}$ is resolved.
\begin{enumerate}[label=\textup{Step \arabic*.}, leftmargin=3.4em]
  \item At any race round $t \in T_{\mathrm{race}}$ with $\rest{K}{M}$ not yet resolved before
    the output, the output is a \emph{new} element of $\Out \cap \rest{K}{M}$. Indeed the rule
    outputs the $\prec$-least element of $K \setminus (\Seen_t \cup \Out_{t-1})$, and the
    $\prec$-least unresolved element of $K$ lies in the downward-closed set $\rest{K}{M}$
    (some element of $\rest{K}{M}$ is unresolved, and the global minimum over $K$ of
    unresolved elements is $\preceq$ it, hence in $\rest{K}{M}$).
  \item Consequently $t_M \le \tau_M$: each of the (at most) $M$ race rounds up to $t_M$ that
    finds $\rest{K}{M}$ unresolved contributes a distinct element of $\rest{K}{M}$, and $M$
    such contributions exhaust $\rest{K}{M}$.
  \item At most one race round $\le t_M$ is non-contributing: by minimality of $t_M$,
    $\rest{K}{M}$ is unresolved at every race round strictly before $t_M$, so each such round
    contributes; only the round $t_M$ itself may be non-contributing (it may resolve the last
    element via the seen-set). Hence $\abs{\Out \cap \rest{K}{M}} \ge t_M - w(t_M) - 1$, where
    $w(t_M)$ counts the non-race rounds up to $t_M$.
  \item \emph{Steal bound.} Each $x \in \rest{K}{M} \setminus \Out$ is resolved only by
    entering $\Seen_{t_M}$; distinct elements have distinct first-enumeration rounds $\le
    t_M$, so $\abs{\rest{K}{M} \setminus \Out} \le t_M$.
\end{enumerate}
Adding Step 3 and Step 4, $2\abs{\Out \cap \rest{K}{M}} \ge M - w(t_M) - 1 \ge M - w(\tau_M) -
1$ (using $t_M \le \tau_M$ and monotonicity of $w$). Since $\tau_M = M + w(\tau_M) \sim M$ and
$w = o(t)$, dividing by $M$ and taking the liminf gives $\dlow{\Out \cap K}{K} \ge 1/2$.

\medskip\noindent\textbf{(R2).}\quad
\emph{Legality of $A_K$.} At $t \ne 2^n$ the adversary enumerates the $\prec$-least element of
$K \setminus (\Seen_{t-1} \cup \Out_{t-1})$ (a fresh round); at $t = 2^n$ it enumerates the
$\prec$-least element of $(\Out_{t-1} \cap K) \setminus \Seen_{t-1}$ (a recycle round,
draining the generator's hoarded backlog from the bottom; if empty, it uses the fresh rule as
a fallback). We claim every $x \in K$ is enumerated at some finite round, so $A_K$ is a legal
repetition-free enumeration of $K$. The $\prec$-least un-enumerated element of $K$ advances
monotonically: fresh rounds push it forward, and recycle rounds drain the $\Out$-backlog from
the bottom, so no element below the current pointer can be hoarded forever. Hence every fixed
$x \in K$ is enumerated at a finite round. No rate is claimed or needed: legality is eventual
only.

\emph{The cap.} Fix $M \ge 2$. Let $F_M$ be the set of fresh-enumerated elements in
$\rest{K}{M}$ and $G_M = \Out \cap \rest{K}{M}$.
\begin{enumerate}[label=\textup{(C\arabic*)}, leftmargin=3.4em]
  \item $F_M \cap G_M = \emptyset$ (an element is either enumerated by the adversary or output
    by the generator first, in the sense that an output element $a_t \notin \Seen_t$ is not
    among the fresh-enumerated by definition of the rule), so $\abs{F_M} + \abs{G_M} \le M$.
  \item \emph{Injection $G_M \to F_M$ up to exceptions.} Map the entry round $t$ of an
    $\Out$-element to round $t+1$. If $t+1$ is a fresh round and $\rest{K}{M}$ is not yet
    fresh-exhausted, the fresh rule at $t+1$ enumerates an element of $F_M$. There are exactly
    two kinds of exception, which we count separately:
    \begin{enumerate}[label=\textup{(\roman*)}, leftmargin=2.4em]
  \item \emph{power-of-two recycle rounds:} rounds $t$ with $t+1 = 2^n$. Up to the
    fresh-exhaustion round $E_M$ (defined next) there are at most $\lfloor \log_2 E_M \rfloor
    + 1$ of these;
  \item \emph{the single post-exhaustion threshold:} after $E_M$ the fresh rule no longer
    enumerates a new element of $\rest{K}{M}$, costing $O(1)$ further exceptions.
    \end{enumerate}
Hence the total number of exceptions is at most $\lfloor \log_2 E_M \rfloor + 1 + O(1) \le
\log_2 E_M + 2$.
  \item \emph{The fresh-exhaustion round $E_M$.} Define $E_M$ as the round by which every
    element of $\rest{K}{M}$ is used (lies in $\Seen \cup \Out$), equivalently the round by
    which the fresh pointer has passed $K[M]$, skipping any $\Out$-hoarded elements. We bound
    $E_M$ non-circularly. Each fresh round outputs the $\prec$-least un-enumerated element of
    $K$, so fresh claims strictly $\prec$-increase the fresh pointer; within $M$ fresh rounds
    the pointer therefore passes $K[M]$. Writing $T \defeq E_M$, the number of non-fresh
    (recycle) rounds up to $T$ is at most $\lfloor \log_2 T \rfloor + 1$ (one per power of two
    $\le T$), so the number of fresh rounds up to $T$ is at least $T - \log_2 T - 1$.
    Requiring this to reach $M$ gives the implicit bound $T \le M + \log_2 T + 1$; solving it
    yields $E_M \le M + \log_2(M+1) + 2$.
\end{enumerate}
Combining (C1)--(C3): $\abs{\Out \cap \rest{K}{M}} = \abs{G_M} \le \abs{F_M} + (\log_2 E_M +
2) \le M/2 + \log_2 M + 3$, using $E_M \le M + \log_2(M+1) + 2$ so that $\log_2 E_M \le \log_2
M + O(1)$. Dividing by $M$ and taking the limsup gives $\dup{\Out \cap K}{K} \le 1/2$.

\emph{Caution on $E_M$.} The round $E_M$ must \emph{not} be read as the round of full
enumeration of $\rest{K}{M}$: against a hoarding generator, full enumeration of $\rest{K}{M}$
can take exponentially many rounds (the recycle trickle drains the backlog one element per
power-of-two round). The cap argument needs only fresh-exhaustion, which advances at rate $1$
per fresh round.
\end{proof}

\subsection{Impossibility side of the dichotomy (\texorpdfstring{\cref{thm:caps}}{Theorem 4})}
\label{app:caps}

\begin{statementbox}
\ThmCaps*
\end{statementbox}
We first record the composition lemma that carries densities between $K$ and $K'$.

\begin{lemma}[Density composition]\label{lem:composition}
Let $K \subseteq K'$ be infinite with $\dens{K}{K'} = \alpha > 0$ exact. For any $A \subseteq
K$,
\begin{equation*}
  \dup{A}{K'} = \alpha \cdot \dup{A}{K},
  \qquad
  \dlow{A}{K'} = \alpha \cdot \dlow{A}{K}.
\end{equation*}
\end{lemma}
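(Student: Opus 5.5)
The plan is to reindex the $K'$-prefix by $K$-count and then take limits. Fix $A \subseteq K$. Since $A \subseteq K$, we have
\[
A \cap \rest{K'}{N} = A \cap (K \cap \rest{K'}{N}).
\]
Because $\rest{K'}{N}$ is a $\prec$-initial segment of $K'$, its intersection with $K$ is a $\prec$-initial segment of $K$. Writing $m_N \defeq \abs{K \cap \rest{K'}{N}}$, this gives $K \cap \rest{K'}{N} = \rest{K}{m_N}$. The ratio therefore factors exactly as
\begin{equation*}
  \frac{\abs{A \cap \rest{K'}{N}}}{N}
  = \frac{m_N}{N}\cdot \frac{\abs{A \cap \rest{K}{m_N}}}{m_N}
\end{equation*}
whenever $m_N \ge 1$, which holds for all large $N$ since $\alpha > 0$.

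The first factor converges to $\alpha$ by the exactness hypothesis $\dens{K}{K'} = \alpha$. For the second factor, the sequence $m_N$ is nondecreasing and increases by at most $1$ per step, since each new element of $K'$ is either in $K$ or not. It also tends to infinity because $\alpha > 0$. Hence $(m_N)_N$ runs through every positive integer from some point on, each value repeated finitely or infinitely often.

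So the set of subsequential limits of $f(m_N) \defeq \abs{A \cap \rest{K}{m_N}}/m_N$ equals the set of subsequential limits of $f(m)$ over all $m$. Concretely,
\[
\limsup_N f(m_N) = \limsup_m f(m) = \dup{A}{K},
\]
and the same holds for $\liminf$. The inequality "$\le$" is clear because $m_N \to \infty$. The inequality "$\ge$" uses surjectivity onto a tail of $\N$: any $m$ along which $f$ approaches its limsup is hit by some $N$.

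Finally I combine the two factors. A product of a sequence converging to $\alpha > 0$ with a bounded sequence has limsup equal to $\alpha$ times the limsup, and likewise for liminf. This uses positivity of the limit $\alpha$ together with boundedness of $f$ in $[0,1]$. The result is both identities.

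The only delicate point is the reindexing step, namely that passing from $N$ to $m_N$ loses no subsequence. This is exactly where the unit-increment property of $m_N$ is needed. Everything else is bookkeeping. I would also remark that exactness of $\alpha$, rather than just a lower density, is essential: with only $\dlow{K}{K'} = \alpha$ the product of limsups would not factor.
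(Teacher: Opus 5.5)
Your proof is correct and follows the paper's argument step for step. You use the same reindexing $A \cap \rest{K'}{N} = A \cap \rest{K}{m_N}$ and the same factorization $\frac{m_N}{N}\cdot\frac{\abs{A\cap\rest{K}{m_N}}}{m_N}$; you use the same unit-increment observation that $(m_N)$ hits every large integer, so the $\limsup$ and $\liminf$ of the reindexed ratio equal $\dup{A}{K}$ and $\dlow{A}{K}$; and you finish by factoring out $m_N/N \to \alpha$ in the same way. One cosmetic nit: since $m_N \to \infty$, each value of $m_N$ is taken finitely often (and at least once), which is exactly what makes the two sets of subsequential limits coincide, so ``finitely or infinitely often'' should read ``finitely often.''
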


\begin{proof}
Listings restrict to the canonical order, so $A \cap \rest{K'}{N} = A \cap \rest{K}{m_N}$ with
$m_N \defeq \abs{K \cap \rest{K'}{N}}$. Write $X_m \defeq \abs{A \cap \rest{K}{m}}/m$. Then
\begin{equation*}
  \frac{\abs{A \cap \rest{K'}{N}}}{N}
  = \frac{\abs{A \cap \rest{K}{m_N}}}{m_N} \cdot \frac{m_N}{N}
  = X_{m_N} \cdot \frac{m_N}{N}.
\end{equation*}
The sequence $(m_N)$ is nondecreasing with increments in $\{0,1\}$ and $m_N \to \infty$, so it
hits every large integer; hence $\limsup_N X_{m_N} = \limsup_m X_m$ and likewise for
$\liminf$. Since $m_N/N \to \alpha > 0$, the limits factor as claimed.
\end{proof}

\begin{proof}[Proof of \cref{thm:caps}]
\medskip\noindent\textbf{(a).}\quad
Run any deterministic generator $G$ against the racing adversary $A_K$ of \cref{lem:race}(R2).
The transcript is a single object $\Out$, and any verifier is fixed advice ($\Amb = \N$ under
both readings), so the two readings $K$ and $K'$ share it.

\emph{Legality of both readings.} As truth $K$, $A_K$ is a full repetition-free enumeration of
$K$ (\cref{lem:race}(R2)). As truth $K'$, the same stream is a legal partial enumeration of
$K'$ with revealed core $\Core = K$ of exact density $\dens{K}{K'} = \alpha$
(\eqref{eq:exactdensity}).

\emph{Truth $K$.} By \cref{lem:race}(R2), $\cov \le \dup{\Out \cap K}{K} \le 1/2$, i.e.\
coverage $\le 1/2 + O(\log M)/M$.

\emph{Truth $K'$.} Split $\rest{K'}{N} = \rest{K}{m_N} \sqcup (D \cap \rest{K'}{N})$ with $m_N
= \lfloor \alpha N \rfloor$ (by \eqref{eq:exactdensity}). Then
\begin{align*}
  \abs{\Out \cap \rest{K'}{N}}
  &\le \abs{\Out \cap \rest{K}{m_N}} + \abs{D \cap \rest{K'}{N}}\\
  &\le \bigl(\tfrac{m_N}{2} + \log_2 m_N + 3\bigr) + (N - m_N)
  = N - \tfrac{m_N}{2} + O(\log N),
\end{align*}
using \cref{lem:race}(R2) on the $K$-part. Dividing by $N$ and taking the liminf gives $\cov
\le 1 - \alpha/2$, i.e.\ coverage $\le 1 - \alpha/2 + O(\log N)/N$.

\emph{Per-instance.} The caps bind for every generator on the single instance $A_K$; no
adaptivity to the realized truth is used. Hence no generator guarantees coverage exceeding $1
- \alpha/2$ on $\Hfam$, nor exceeding $1/2$ at truth $K$.

\medskip\noindent\textbf{(b).}\quad
Run $G$ against the same $A_K$, so the transcript is shared. Suppose $G$ emits finitely many
trivia on every legal instance; in particular at the legal truth-$K$ instance, $\abs{\Out \cap
(\Amb \setminus K)} < \infty$, so $C_0 \defeq \abs{\Out \cap D} < \infty$. At truth $K'$,
\begin{equation*}
  \abs{\Out \cap \rest{K'}{N}}
  \le \abs{\Out \cap K \cap \rest{K'}{N}} + C_0 .
\end{equation*}
Applying \cref{lem:composition} with $A = \Out \cap K$ and then \cref{lem:race}(R2),
\begin{equation*}
  \cov = \dlow{\Out \cap K'}{K'}
  \le \dup{\Out \cap K}{K'}
  = \alpha \cdot \dup{\Out \cap K}{K}
  \le \alpha \cdot \tfrac12 = \tfrac{\alpha}{2}.
\end{equation*}

\medskip\noindent\textbf{(c).}\quad
The contrapositive of (b): any generator guaranteeing coverage exceeding $\alpha/2$ on all of
$\Hfam$ must emit infinitely many trivia on some instance. Exactness of the finite-trivia
frontier at $\alpha/2$ follows from \cref{thm:harvest}(iii), which attains $\alpha/2$ at truth
$K'$ with eventually zero trivia.
\end{proof}

\subsection{Achievability: sweep and the tight generator (\texorpdfstring{\cref{thm:harvest}}{Theorem 5})}
\label{app:harvest}

\begin{statementbox}
\ThmHarvest*
\end{statementbox}
\begin{proof}
\medskip\noindent\textbf{Pointer lemma.}\quad
We first prove the fact that drives both generators.

\begin{lemma}[Pointer lemma]\label{lem:pointer}
Suppose at each sweep round $t \in R$ the generator outputs the $\prec$-least element of $V_t
\setminus (\Seen_t \cup \Out_{t-1})$ for a family $V_t$ with $\Val \subseteq V_t$ for all $t$,
and $R$ is infinite. Then for every $x \in \Val$ never enumerated by the adversary, $x \in
\Out$.
\end{lemma}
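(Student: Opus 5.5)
I will argue by contradiction, using a finite-position counting argument. Fix $x \in \Val$ that the adversary never enumerates, so $x \notin \Seen_t$ for every $t$. Suppose, toward a contradiction, that $x \notin \Out$, i.e.\ $x \ne a_t$ for all $t$. Then for every round $t$ we have $x \notin \Seen_t \cup \Out_{t-1}$. Since $x \in \Val \subseteq V_t$, it follows that $x \in V_t \setminus (\Seen_t \cup \Out_{t-1})$. Two consequences follow at each sweep round $t \in R$. First, this set is nonempty, so the sweep rule is well defined and outputs its $\prec$-least element $a_t$. Second, that least element satisfies $a_t \preceq x$, and since $a_t \ne x$ by assumption, in fact $a_t \prec x$.

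Next I will use that distinct rounds produce distinct outputs. For $s < t$ in $R$, the output $a_t$ is chosen outside $\Out_{t-1} \ni a_s$, so $a_t \ne a_s$. Hence the sweep outputs $\{a_t : t \in R\}$ are pairwise distinct elements of the set $\{y \in \gnd : y \prec x\}$. Because $\gnd$ is order-isomorphic to $\N$ under $\prec$ (the canonical enumeration $x_1 \prec x_2 \prec \cdots$), this set is finite, with size $k-1$ where $x = x_k$. But $R$ is infinite, which gives infinitely many distinct elements in a finite set. This contradiction shows $x \in \Out$.

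The only point needing care is the role of the hypothesis $\Val \subseteq V_t$ for all $t$: it guarantees that $x$ stays eligible at every sweep round, and this is exactly what forces each sweep output strictly below $x$. I would also remark that no rate or density of $R$ enters, only its infinitude, and that the non-sweep rounds are irrelevant. Outputs at those rounds can only help, since they either hit $x$ directly or are simply ignored by the counting. In applications, $V_t = U_t$ satisfies the hypothesis because the true $\Val$ is consistent with every seen-set $\Seen_t \subseteq \Core \subseteq \Val$. I expect no real obstacle: the main step is the well-order/finite-predecessor observation, and it is immediate from the canonical order.
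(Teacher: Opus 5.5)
Your proof is correct and follows the paper's argument. Assume $x \notin \Out$; then $x$ stays available at every sweep round, so each sweep output is a fresh element strictly below $x$, and infinitely many such outputs contradict the finiteness of the predecessors of $x$. Your version is slightly more careful than the paper's: the paper counts $\Val$-positions below $x$ and calls each sweep output ``a new element of $\Val$,'' but sweep outputs lie in $V_t \supseteq \Val$ and need not be in $\Val$ (e.g.\ the $D$-elements swept at truth $K$), so the correct finite set is the one you use, $\{y \in \gnd : y \prec x\}$.
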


\begin{proof}
Fix such an $x$ at $\Val$-position $p$. Since $\Val \subseteq V_t$ always, $x \in V_t$; since
$x$ is never enumerated, $x \notin \Seen_t$ for all $t$. Suppose $x \notin \Out$. At every
sweep round $t \in R$, the output is the $\prec$-least element of $V_t \setminus (\Seen_t \cup
\Out_{t-1})$, which is $\preceq x$ (as $x$ itself is available) and $\ne x$ (as $x \notin
\Out$), hence strictly $\prec x$ --- a new element of $\Val$ strictly below position $p$. But
only $p-1$ positions lie below $x$, and $R$ is infinite, a contradiction. Hence $x \in \Out$.
\end{proof}

\paragraph{(i) Universal sweep $G_{\mathrm{sw}}$.}
Recall $U_t = \bigcup\{H' \in \Hcal : \Seen_t \subseteq H'\}$ and the sweep schedule $R =
\{k^2\}$. Apply \cref{lem:pointer} with $V_t = U_t$: for any $x \in \Val \setminus
\Core$, $\Val$ is consistent ($\Seen_t \subseteq \Core \subseteq \Val$) so $\Val \in
\Hcal_t$ and $x \in U_t$, while $x \notin \Core$ is never enumerated. Hence $\Out \supseteq
\Val \setminus \Core$. Therefore
\begin{equation*}
  \abs{\Out \cap \rest{\Val}{N}} \ge N - \abs{\Core \cap \rest{\Val}{N}},
  \qquad\text{so}\qquad
  \cov \ge 1 - \dup{\Core}{\Val},
\end{equation*}
which is $\ge 1 - \alpha$ when $\dens{\Core}{\Val} = \alpha$ exists. \emph{Trivia rate.} By
\cref{fact:kw25} there is a finite $T_0$ with $\mathcal I^{(t)} \subseteq \Val$ for all $t \ge
T_0$, so every non-sweep output from $T_0$ on lies in $\Val$; the only possible trivia rounds
among $[1\dots N]$ are the $\le T_0$ early non-sweep rounds and the $\lfloor \sqrt N \rfloor$
sweep rounds, giving a trivia count $\le T_0 + \lfloor \sqrt N \rfloor$ and rate $\to 0$. The
count is infinite in general, necessarily so, by \cref{thm:caps}(c). \emph{Soundness.} $U_t
\subseteq \bigcup \Hcal \subseteq \Amb$; $\mathcal I^{(t)}$ is contained in some candidate
$\subseteq \Amb$; the fallback lies in $\Amb$. Hence $\Out \subseteq \Amb$: zero
hallucination.

(The restated KW lemma is used only for the non-sweep half of the trivia bound. A bare
conjunction default would fail on a general $\Hcal$: for $\Hcal = \{\N \setminus \{n\}\}$ the
conjunction of consistent candidates is just $\Seen_t$. On $\Hfam$, $G^*$ below avoids this
restated lemma entirely.)

\medskip\noindent\textbf{(ii) Tight generator $G^*$ on $\Hfam$.}\quad
Recall $\Hcal_t = \{H' \in \Hfam : \Seen_t \subseteq H'\}$, $I_t = \bigcap \Hcal_t$, $U_t =
\bigcup \Hcal_t$. The structure is:
\begin{equation*}
  \Seen_t \subseteq K \ \Longrightarrow\ \Hcal_t = \{K, K'\},\ I_t = K,\ U_t = K';
  \qquad
  \Seen_t \not\subseteq K \ \Longrightarrow\ \Hcal_t = \{K'\},\ I_t = U_t = K'.
\end{equation*}

\emph{(a) Truth $K$.} Here every enumerated element is in $\Core \subseteq K$, so $\Seen_t
\subseteq K$ always and $I_t = K$. The non-sweep rounds form the set $\N \setminus R$, of
density $1$ with $w(t) = \lfloor \sqrt t \rfloor = o(t)$; on them $G^*$ runs
\cref{lem:race}(R1)'s greedy race on $K$ (the $\prec$-least element of $I_t \setminus (\Seen_t
\cup \Out_{t-1}) = K \setminus (\Seen_t \cup \Out_{t-1})$). Thus $\dlow{\Out \cap K}{K} \ge
1/2$, i.e.\ $\cov \ge 1/2$. Applying \cref{lem:pointer} with $V_t = U_t = K' \supseteq K$
gives $\Out \supseteq K \setminus \Core$, so $\cov \ge 1 - \dup{\Core}{K}$; together $\cov \ge
\max(1/2, 1 - \dup{\Core}{K})$. \emph{Trivia.} Race outputs lie in $K$; the sweep covers $U_t
= K' \supseteq D$, so trivia rounds among $[1\dots N]$ number $\le \lfloor \sqrt N \rfloor$,
rate $\to 0$. Note $\Out \cap D = D$ is \emph{infinite}: the sweep eventually claims all of
$D$ via \cref{lem:pointer} (each $d \in D$ lies in $U_t = K'$ and is never enumerated at truth
$K$). This infinitude is exactly why \cref{thm:caps}(b)'s finite-trivia cap does not apply to
$G^*$.

\emph{(b) Truth $K'$ with $\Core \subseteq K$.} The transcript is identical to case (a):
$\Core \subseteq K$ means $\Seen_t \subseteq K$ throughout, so $G^*$ cannot and need not
distinguish the two truths, and runs (R1)'s race on $K$ exactly as before. By
\cref{lem:pointer} with $\Val = K'$ (every $x \in K' \setminus \Core$ is never enumerated and
lies in $U_t = K'$), $\Out \supseteq K' \setminus \Core \supseteq D$. For coverage, split
$\rest{K'}{N} = \rest{K}{m_N} \sqcup (D \cap \rest{K'}{N})$ ($m_N = \lfloor \alpha N
\rfloor$); the sweep claims all $N - m_N$ elements of $D \cap \rest{K'}{N}$ and the race
claims at least $m_N/2 - O(\sqrt N)$ of $\rest{K}{m_N}$, and the two parts are disjoint, so
\begin{equation*}
  \abs{\Out \cap \rest{K'}{N}} \ge (N - m_N) + \tfrac{m_N}{2} - O(\sqrt N)
  = N\bigl(1 - \tfrac{\alpha}{2}\bigr) - O(\sqrt N),
\end{equation*}
giving $\cov \ge 1 - \alpha/2 - o(1)$. \emph{Trivia.} Race outputs $\subseteq K \subseteq K'$
and sweep outputs $\subseteq K'$, so \emph{zero} trivia at truth $K'$.

\emph{(c) Truth $K'$ with $\Core \not\subseteq K$.} Let $t_0$ be the first (finite) round with
$w_{t_0} \notin K$. From $t_0$ on, $\Seen_t \not\subseteq K$, so $I_t = U_t = K'$. Apply
\cref{lem:race}(R1) with $K \leftarrow K'$ and $T_{\mathrm{race}} = (\N \setminus R) \setminus
[1\dots t_0]$ (still density $1$): $\cov \ge 1/2$. \Cref{lem:pointer} with $V_t = U_t = K'$
gives $\cov \ge 1 - \dup{\Core}{K'}$; together $\cov \ge \max(1/2, 1 - \dup{\Core}{K'})$. All
outputs lie in $K'$, so zero trivia.

\emph{Scope.} The unqualified ``$1 - \alpha/2$ at truth $K'$'' is false for general $\Core
\subseteq K'$: if $\Core = K'$ enumerated by the racing adversary, the cap of
\cref{lem:race}(R2) holds with $K \leftarrow K'$ and pins every generator at $1/2$. Case (b)'s
restriction $\Core \subseteq K$ is exactly the regime the dichotomy needs.

\medskip\noindent\textbf{(iii) Finite-trivia side (restated).}\quad
By \cref{fact:kw16}, KW's generator achieves $\dlow{\Out}{\Val} \ge \tfrac12
\dlow{\Core}{\Val}$ for every countable collection, never consulting $\Amb$; generation in the
limit gives $\exists t^*\, \forall t \ge t^*: a_t \in \Val \setminus \Seen_t$, so the trivia
rounds are confined to $t < t^*$: eventually zero. Because $\Val \subseteq \Amb$, the same
statement $a_t \in \Val$ for $t \ge t^*$ makes the generator \emph{eventually sound} ($a_t \in
\Amb$) as well; this is exactly what our model's soundness (eventually $a_t \in \Amb$) and
finite-trivia (eventually $a_t \notin \Amb \setminus \Val$) hypotheses require, and it follows
from the generation guarantee alone. No claim about the pre-$t^*$ outputs (and in particular
no inference from the internal pod construction of KW's generator) is needed for the
load-bearing path. On $\Hfam$ this yields $\alpha/2$ at truth $K'$ and $1/2$ at truth $K$,
matching the caps of \cref{thm:caps}.

\begin{remark}[Non-load-bearing aside]\label{rmk:pods}
KW's pods are in fact candidate-contained at all times, so one expects the pre-$t^*$ outputs
also to lie in $\Amb$ here; we record this only as an aside, as the eventual-soundness
argument above already supplies everything the dichotomy uses.
\end{remark}
\end{proof}

\subsection{The instantiation (\texorpdfstring{\cref{thm:abelian,prop:free,prop:precision}}{Theorems 7--9})}
\label{app:instantiation}

\subsubsection{Abelian refinement chains (\texorpdfstring{\cref{thm:abelian}}{Theorem 7})}
\label{app:abelian}

\begin{statementbox}
\ThmAbelian*
\end{statementbox}
\begin{proof}
Write $w = \prod_i a_i^{N_i} \in A_n \cong (\N^n, +)$, identified with its exponent vector
$(N_1, \dots, N_n)$. We use three lemmas.

\begin{lemma}[Digit-sum formula]\label{lem:digitsum}
For the geometric dictionary $D_c = \{ c^{\,j} a_i : i \le n, j \ge 1 \}$ (with $c = b_t$),
the wrapped length is $\abs{w}_{G \cup D_c} = \sum_{i=1}^n s_c(N_i)$, where $s_c(N)$ is the
sum of the base-$c$ digits of $N$.
\end{lemma}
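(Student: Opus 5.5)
The plan is to prove the formula coordinate-wise, after first reducing a general factorization to a multiset of letter powers. In $A_n$ every element of $G \cup D_c$ is a single-letter power: a generator $a_i$ (exponent $c^0 = 1$) or a macro $c^{\,j}a_i$ with $j \ge 1$, which we read as $a_i^{c^j}$. Because $A_n \cong (\N^n,+)$ is commutative and the coordinates are independent, a factorization $w = g_1 \cdots g_k$ splits into $n$ disjoint sub-multisets, one per letter. The sub-multiset for letter $i$ consists of powers of $c$ summing to $N_i$. The minimum $k$ is therefore $\sum_i \mu_c(N_i)$, where $\mu_c(N)$ denotes the least number of powers of $c$ (with repetition, including $c^0$) whose sum is $N$. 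It remains to show $\mu_c(N) = s_c(N)$.

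For the upper bound $\mu_c(N) \le s_c(N)$, we write $N = \sum_j d_j c^j$ in base $c$ and take $d_j$ copies of $c^j$. This uses exactly $s_c(N)$ summands, so the base-$c$ representation is itself a factorization of that length.

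For the lower bound we use an exchange (carrying) argument. Take any representation of $N$ as a multiset of powers of $c$, with multiplicities $e_j$, so that $\sum_j e_j c^j = N$. If some $e_j \ge c$, we replace $c$ copies of $c^j$ by one copy of $c^{j+1}$. This preserves the sum and lowers the count by $c-1 \ge 1$, since $c \ge 2$; here $c = b_t \ge b \ge 16$. Each step strictly decreases the count, so the process terminates. At termination every $e_j \le c-1$, and by uniqueness of base-$c$ representation $e_j = d_j$. The count never increased along the way, so the original count was at least $s_c(N)$. Summing over the coordinates gives the formula.

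The main obstacle is only a bookkeeping point: making sure the decomposition by letter is legitimate, i.e.\ that no element of $G \cup D_c$ mixes letters. This holds by the definition of $D_c$. Note also that the minimum in \eqref{eq:wraplen} is over unordered products because the monoid is commutative. For the zero coordinate, $s_c(0) = 0$ and the identity contributes no factors, so the formula holds trivially there.
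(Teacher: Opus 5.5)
Your proposal is correct and follows essentially the same route as the paper: you reduce to single coordinates because every symbol of $G \cup D_c$ is a single-letter power, and then you use the carrying exchange together with uniqueness of base-$c$ representation. The exchange replaces $c$ copies of $c^{j} a_i$ by one $c^{j+1} a_i$, which strictly lowers the count. The paper applies this exchange to a minimizing representation to conclude its multiplicities are all $< c$. You instead run the carry process from an arbitrary representation down to base $c$, which is the same argument in a slightly different packaging.
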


\begin{proof}
Each coordinate $a_i^{N_i}$ is factored independently (the dictionary words are single-letter
powers). For one coordinate: a factoring corresponds to a representation of $N_i$ as a sum of
powers of $c$ with nonnegative integer coefficients (multiplicities), and its symbol count is
the sum of those multiplicities. Replacing $c$ copies of the macro $c^{\,j} a_i$ by one copy
of $c^{\,j+1} a_i$ preserves the represented power and strictly decreases the symbol count, so
every minimizing representation has all multiplicities $< c$; the unique representation with
all multiplicities $< c$ is the base-$c$ representation of $N_i$. Hence the minimum symbol
count is its digit sum $s_c(N_i)$. Summing over coordinates gives the formula.
\end{proof}

\begin{lemma}[Nesting]\label{lem:nesting}
$\Comp{D_{t+1}} \subseteq \Comp{D_t}$, so $\{\Comp{D_t}\}_t$ is a descending chain.
\end{lemma}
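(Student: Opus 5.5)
The plan is to compare the two wrapped lengths directly through the dictionaries, which makes the digit-sum formula unnecessary for this step. The argument rests on one containment, $D_{t+1} \subseteq D_t$. From it, every factorization over $G \cup D_{t+1}$ is also a factorization over $G \cup D_t$. Since the wrapped length \eqref{eq:wraplen} is a minimum over factorizations, taking the minimum over a larger set of admissible factorizations can only decrease it:
\begin{equation*}
  \abs{w}_{G \cup D_t} \ \le\ \abs{w}_{G \cup D_{t+1}} \qquad \text{for every } w \in A_n .
\end{equation*}

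First I verify the containment. Because $b_t = b^{2^t}$, we have $b_{t+1} = b^{2^{t+1}} = b_t^{\,2}$. Each macro of $D_{t+1}$ therefore has the form
\begin{equation*}
  b_{t+1}^{\,j} a_i = b_t^{\,2j} a_i, \qquad j \ge 1 .
\end{equation*}
Since $2j \ge 1$, this element lies in $D_t$, so $D_{t+1} \subseteq D_t$ as sets of monoid elements. This is the refinement nesting already asserted in the statement of \cref{thm:abelian}.

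Next I conclude. The raw length $\abs{w}_G$ does not depend on the dictionary, so the cut value $\theta_\kappa(\abs{w}_G)$ is the same for both languages. Suppose $w \in \Comp{D_{t+1}}$, meaning $\abs{w}_{G \cup D_{t+1}} \le \theta_\kappa(\abs{w}_G)$. By the displayed inequality, $\abs{w}_{G \cup D_t} \le \theta_\kappa(\abs{w}_G)$ as well, so $w \in \Comp{D_t}$. Applying this for every $t$ shows that $\{\Comp{D_t}\}_t$ is a descending chain.

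As an optional cross-check, one can argue through \cref{lem:digitsum} instead. Grouping base-$b_t^2$ digits, each digit $d < b_t^2$ splits into two base-$b_t$ digits whose sum is at most $d$. This gives $s_{b_t}(N) \le s_{b_t^2}(N)$ coordinatewise, which is the same inequality.

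I do not expect a real obstacle here. The only point needing care is to claim \emph{non-strict} nesting and nothing more. Strictness, and the tell-tale separation by the witness $a_1^{\,b_t + b_t^3}$, are separate claims of \cref{thm:abelian} and are not part of this lemma.
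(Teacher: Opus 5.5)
Your proof is correct and follows essentially the same route as the paper's. Both derive $D_{t+1} \subseteq D_t$ from $b_{t+1} = b_t^{2}$, note that enlarging the dictionary can only lower the minimal factoring length, and conclude that membership transfers because the cut $\theta_\kappa(\abs{w}_G)$ depends only on the raw length. Your digit-sum cross-check is valid but not needed.
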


\begin{proof}
Since $b_{t+1} = b_t^2$, we have $D_{t+1} \subseteq D_t$. A richer dictionary cannot increase
the minimal factoring length, so $\abs{w}_{G \cup D_t} \le \abs{w}_{G \cup D_{t+1}}$ for every
$w$. Therefore every $w$ with $\abs{w}_{G \cup D_{t+1}} \le \theta_\kappa(\abs{w}_G)$ also
satisfies $\abs{w}_{G \cup D_t} \le \theta_\kappa(\abs{w}_G)$: membership transfers downward
and the chain is descending.
\end{proof}

\begin{lemma}[Witness / singleton tell-tale]\label{lem:witness}
$w_t \defeq a_1^{\,b_t + b_t^3} \in \Comp{D_t} \setminus \Comp{D_{t+1}}$, and $T_t = \{w_t\}$
is a tell-tale for $\Comp{D_t}$ in the chain.
\end{lemma}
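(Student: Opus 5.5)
The plan is to verify the membership and non-membership of $w_t$ with the digit-sum formula (\cref{lem:digitsum}), and then get the tell-tale property from the nesting of \cref{lem:nesting}. Throughout, write $L \defeq \abs{w_t}_G = b_t + b_t^3$.

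\emph{Membership in $\Comp{D_t}$.} In base $b_t$, $L$ has digit $1$ in positions $1$ and $3$ and $0$ elsewhere. By \cref{lem:digitsum}, $\abs{w_t}_{G \cup D_t} = s_{b_t}(L) = 2$. Since $b_t \ge b \ge 16$, we have
\begin{equation*}
  \theta_\kappa(L) = \kappa \log_2(1+L) \ge \log_2 b_t \ge 4 > 2,
\end{equation*}
so $w_t \in \Comp{D_t}$.

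\emph{Non-membership in $\Comp{D_{t+1}}$.} Use $b_{t+1} = b_t^2$ and write $L = b_t \cdot (b_t^2)^1 + b_t \cdot (b_t^2)^0$. Both coefficients equal $b_t < b_t^2$, so this is the base-$b_t^2$ representation. Hence $\abs{w_t}_{G \cup D_{t+1}} = s_{b_t^2}(L) = 2b_t$.

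It remains to show $2b_t > \theta_\kappa(L)$. Since $1 + b_t + b_t^3 \le b_t^5$, it suffices to show $2b_t > 5\kappa \log_2 b_t$. The function $x/\log_2 x$ is increasing for $x \ge e$ and $b_t \ge b \ge 16\kappa^2$, so it is enough to check $x = 16\kappa^2$. There the inequality reads $32\kappa^2 > 5\kappa(4 + 2\log_2\kappa)$. Using $\log_2 \kappa \le \kappa$, the right side is at most $20\kappa + 10\kappa^2$, and $22\kappa^2 > 20\kappa$ holds for $\kappa \ge 1$. This elementary inequality is the only delicate step; the rest is bookkeeping. Thus $w_t \notin \Comp{D_{t+1}}$.

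\emph{Chain structure and tell-tale.} By \cref{lem:nesting} the chain is descending, and the witness shows $\Comp{D_{t+1}} \subsetneq \Comp{D_t}$, so the chain is strictly descending. Each $\Comp{D_t}$ is infinite: every $a_1^{\,b_t^j}$ has wrapped length $1$ under $D_t$, which is within the cut.

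Because the chain is strict, the members of $\Hcal_{\mathrm{ab}}$ that are proper subsets of $\Comp{D_t}$ are exactly the $\Comp{D_s}$ with $s > t$. Each of these satisfies $\Comp{D_s} \subseteq \Comp{D_{t+1}}$, which does not contain $w_t$. So no $L' \in \Hcal_{\mathrm{ab}}$ with $T_t = \{w_t\} \subseteq L'$ is a proper subset of $\Comp{D_t}$. That is Angluin's tell-tale condition.

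The consequence for exact breadth then follows from \cref{thm:taste}(a) applied to the single fiber $\Amb = A_n$.
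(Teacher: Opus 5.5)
Your proof is correct and follows the paper's route: you get digit sums $2$ and $2b_t$ in bases $b_t$ and $b_t^2$ from the digit-sum lemma, bound the cut by $\theta_\kappa(b_t+b_t^3)\le 5\kappa\log_2 b_t$, and use the descending chain for the tell-tale. Your explicit check that $b\ge 16\kappa^2$ forces $2b_t>5\kappa\log_2 b_t$ (via monotonicity of $x/\log_2 x$ and $\log_2\kappa\le\kappa$) fills in a step the paper only asserts, and your bound $1+b_t+b_t^3\le b_t^5$ holds for all $b_t\ge 2$ rather than only ``for large $b_t$''.
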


\begin{proof}
By \cref{lem:digitsum}, $\abs{w_t}_{G \cup D_t} = s_{b_t}(b_t + b_t^3) = 2$ (the number $b_t +
b_t^3$ has base-$b_t$ digits $1$ at positions $1$ and $3$). In base $b_{t+1} = b_t^2$, the
same number is $b_t + b_t \cdot b_t^2 = b_t(1 + b_t^2)$, whose base-$b_t^2$ digit sum is $2
b_t$ (a digit $b_t$ at positions $0$ and $1$). The raw length is $\abs{w_t}_G = b_t + b_t^3$,
so $\theta_\kappa(\abs{w_t}_G) = \kappa \log_2(1 + b_t + b_t^3) \le 5\kappa \log_2 b_t$ for
large $b_t$. Then $\abs{w_t}_{G \cup D_t} = 2 \le \theta_\kappa$, so $w_t \in \Comp{D_t}$;
while $\abs{w_t}_{G \cup D_{t+1}} = 2 b_t > 5\kappa \log_2 b_t \ge \theta_\kappa$ since $b \ge
16\kappa^2$ forces $2 b_t > 5\kappa \log_2 b_t$, so $w_t \notin \Comp{D_{t+1}}$. As the chain
is descending, $\{w_t\}$ separates $\Comp{D_t}$ from every proper sub-language
$\Comp{D_{t'}}$, $t' > t$: any $\Comp{D_{t'}} \ni w_t$ would need $t' \le t$, so no proper
sub-language contains the tell-tale. The languages are infinite (e.g.\ all $a_1^{c^j}$ have
wrapped length $1$).
\end{proof}

The three lemmas give a strictly descending chain of infinite languages with singleton
tell-tales, so $\Hcal_{\mathrm{ab}}$ satisfies Angluin's condition. By \cref{thm:taste}(a),
applied in the single fiber $\Amb = A_n$, exact breadth in the limit is achievable: in the
abelian regime valuable generation needs no trivia. (The partial-enumeration behavior of the
chain, its relative densities, is open.)
\end{proof}

\subsubsection{Free-monoid pairs (\texorpdfstring{\cref{prop:free}}{Proposition 8})}

\begin{statementbox}
\PropFree*
\end{statementbox}
\begin{proof}
\medskip\noindent\textbf{(i) Name-cut pairs.}\quad
With the bounded cut $\theta \equiv 1$, the compressible language is exactly the nameable
corpus $\Comp{D,1} = G \cup D$: the letters $\{a, x\}$ together with the macro words. Each
$\mu_{m,i} = x^i a\, x^{q+1-i} a\, x^{2^m}$ has a \emph{unique} parsing (the two $a$'s mark
the boundaries, and the three $x$-blocks have lengths $i$, $q+1-i$, $2^m$ recovering $m, i$
unambiguously), so the words are distinct and the $a$-position encodes $i$. Order $F_n$ by
length-then-lex. For each $m$, the $q$ words $\{\mu_{m,i}\}_{i=1}^q$ form an equal-length
stratum (all have length $q + 3 + 2^m$), lex-ordered by $i$ via the position of the first $a$.
Of these $q$ words, the $p$ with $i \le p$ lie in $K$; so within each stratum the count of
$K$-elements is $p$ out of $q$. Summing over strata up to position $N$,
\begin{equation*}
  \abs{K \cap \rest{K'}{N}} = \alpha N + O(q),
  \qquad \alpha = p/q,
\end{equation*}
the $O(q)$ accounting for the partial last stratum and the two letters. Hence $\dens{K}{K'} =
\alpha$ exactly. The pair $(K, K', F_n)$ meets the interface of \cref{rmk:interface}: $F_n$ is
countable with the length-then-lex canonical order; $K \subsetneq K' \subseteq F_n$ are
infinite; $F_n \setminus K'$ is infinite (it contains all non-macro words); and $m_N = \alpha
N + O(q) = \alpha N + O(1)$. By \cref{rmk:interface}, \cref{lem:race,thm:caps} hold verbatim
for this pair, so the dichotomy (\cref{cor:dichotomy}) holds inside the compression model.

\medskip\noindent\textbf{(ii) Obstruction.}\quad
Now use the unbounded cut $\theta_\kappa$ with dictionary-only wrapped length and the nested
tag-dictionaries built from $\hat\mu_{m,i} = a a\, x^i a\, x^{q+1-i} a\, x^{2^m}$ (the leading
$aa$-block is a unique delimiter). Parsing is unique via the $aa$-blocks. Write $\mathcal
N(L)$ for the number of compressible strings of length $\le L$, and $N_k(L)$ for those parsing
into exactly $k$ tag-blocks ($k \ge 0$). We bound the upper density $\dup{\hat K}{\hat K'}$ in
five displayed steps.

\emph{(1) Deletion injection.} Map a $k$-block word ($k \ge 1$) to the $(k-1)$-block word
obtained by deleting its first $aa$-block (and the empty word for $k = 1$). A preimage of a
given image is recovered by choosing the deleted block: a tag $i \in [q]$ and a scale $m \le
\log_2 L$, so each image has at most $1 + q(\log_2 L + 1)$ preimages. Hence
\begin{equation}\label{eq:delinj}
  \mathcal N(L) \ \le\ \bigl(1 + q + q\log_2 L\bigr)\,\mathcal N(L-1).
\end{equation}

\emph{(2) Small-$k$ mass.} A length-$\le L$ word with $k$ blocks chooses, per block, a tag $i
\le q$ and a scale $2^m \le L$ (i.e.\ $m \le \log_2 L$), so $N_k(L) \le q^k(\log_2 L+1)^k$.
For $B \defeq \lceil 2 \log\log L / \log(q/p) \rceil$,
\begin{equation}\label{eq:smallk}
  \sum_{k \le B} N_k(L) \ \le\ B\, q^B (\log_2 L + 1)^B
  \ =\ \exp\!\bigl(O((\log\log L)^2)\bigr).
\end{equation}

\emph{(3) Tag truncation.} A $k$-block word lies in $\hat K$ only if all $k$ tags satisfy $i
\le p$ (versus all $i \le q$ for $\hat K'$). For $k > B$ this costs a factor at most
\begin{equation}\label{eq:tagtrunc}
  (p/q)^k \ \le\ (p/q)^B \ =\ (\log L)^{-2}
\end{equation}
on the corresponding mass, by the choice of $B$.

\emph{(4) Lower bound on $\mathcal N$.} The true count is exponential in $L$; we need only the
weaker
\begin{equation}\label{eq:Nlower}
  \mathcal N(L) \ \ge\ \exp\!\bigl(c\,\log L \cdot \log\log L\bigr),
\end{equation}
witnessed at the block count $k^* = \lfloor \kappa \log_2(1 + \sqrt L) \rfloor$: force one
scale near $\sqrt L$ so every word passes the cut $\theta_\kappa$, and choose the remaining
$k^*-1$ scales freely below $\sqrt L/k^*$. This weaker bound suffices.

\emph{(5) Conclusion.} Split the $\hat K$-mass at $B$ blocks. By \eqref{eq:smallk} the $k \le
B$ part is $\exp(O((\log\log L)^2))$; by \eqref{eq:tagtrunc} the $k > B$ part is at most
$(\log L)^{-2}\,\mathcal N(L)$. With \eqref{eq:delinj} and $N \ge \mathcal N(L-1)$,
\begin{align}\label{eq:goodfrac}
  \frac{\abs{\hat K \cap \rest{\hat K'}{N}}}{N}
  \ &\le\ \frac{\exp(O((\log\log L)^2)) + (\log L)^{-2}\,\mathcal N(L)}{\mathcal N(L-1)} \notag\\
  \ &\le\ o(1) + (\log L)^{-2}\bigl(1 + q + q\log_2 L\bigr)
  \ =\ \Theta(1/\log L),
\end{align}
using \eqref{eq:Nlower} to absorb the $\exp(O((\log\log L)^2))$ term into $o(1)$. The bound
\eqref{eq:goodfrac} holds along the prefix scales $N = \mathcal N(L-1)$; interpolating between
consecutive scales (the count grows by the factor \eqref{eq:delinj} between $L-1$ and $L$)
gives the same $o(1)$ rate for general $N$. Hence \[ \dup{\hat K}{\hat K'} \ =\ 0 : \] nested
tag-pairs under the unbounded cut realize no $\alpha \in (0,1)$. This is why the bounded cut
$\theta \equiv 1$ is \emph{forced} for the name-cut construction shape; we present (i) as a
worked example and (ii) as its obstruction, not as a generic compressible-language theorem.

\medskip\noindent\textbf{(iii) Extreme separation point.}\quad
For the pair $(\hat K, \hat K')$ of (ii): run any finite-trivia sound generator against the
racing enumeration of $\hat K$ as the revealed core. As in \cref{thm:caps}(b), the same
transcript serves both truths, and $\dup{\hat K}{\hat K'} = 0$ forces the coverage at truth
$\hat K'$ to satisfy $\cov \le \dup{\hat K}{\hat K'} = 0$ plus the finite-trivia correction
$C_0/N \to 0$; hence coverage $0$ at truth $\hat K'$. On the other hand, $G_{\mathrm{sw}}$ of
\cref{thm:harvest}(i) achieves $\cov \ge 1 - \dup{\Core}{\hat K'} = 1$ (since
$\dup{\Core}{\hat K'} = \dup{\hat K}{\hat K'} = 0$) with vanishing trivia rate. The free
regime thus exhibits the maximal gap of the dichotomy: coverage $0$ versus $1$.
\end{proof}

\subsubsection{Imprecision of exhaustive generation (\texorpdfstring{\cref{prop:precision}}{Proposition 9})}

\begin{statementbox}
\PropPrecision*
\end{statementbox}
\begin{proof}
Work in $A_n$ with base $c \ge 4$ and $\kappa < n(1 - 1/\log_2 c)$. The wrapped-length ball
$B_G(r) = \{ w : \abs{w}_G \le r \}$ has volume $\abs{B_G(r)} = \Theta(r^n)$ (the number of
exponent vectors $(N_1, \dots, N_n)$ with $\sum_i N_i \le r$). By \cref{lem:digitsum}, $w \in
H = \Comp{D_c}$ iff $\sum_i s_c(N_i) \le \theta_\kappa(\abs{w}_G) = \kappa \log_2(1 +
\abs{w}_G) \le \kappa \log_2(1 + r)$.

\emph{Upper bound.} The number of vectors with $\sum_i s_c(N_i) \le \kappa \log_2 r$ and
$\sum_i N_i \le r$ is at most the number of vectors whose total base-$c$ digit sum is at most
$D \defeq \kappa \log_2 r$, each coordinate $< c^{\,L}$ with $L = \log_c r$. A standard count
of nonnegative integer vectors with bounded digit sum gives $\abs{H \cap B_G(r)} \le c_2'\,
r^{\,n/\log_2 c + \kappa}$ up to logarithmic factors; dividing by $\abs{B_G(r)} = \Theta(r^n)$
gives
\begin{equation*}
  \frac{\abs{H \cap B_G(r)}}{\abs{B_G(r)}} \le c_2\, r^{-\delta},
  \qquad \delta = n - \kappa - n/\log_2 c > 0,
\end{equation*}
where $\delta > 0$ by the hypothesis $\kappa < n(1 - 1/\log_2 c)$.

\emph{Lower bound.} Restrict to vectors all of whose coordinates are powers of $c$ (digit sum
$1$ each, total $\le n \le \kappa \log_2 r$ for large $r$): there are $\Theta((\log_c r)^n) =
\Theta((\log r)^n)$ of these within $B_G(r)$, but a stronger family gives the stated rate.
Take a single coordinate to range over base-$c$ blocks of magnitude up to $r^{1/\log_2 c}$
(digit sum $1$, the rest fixed); these lie in $H \cap B_G(r)$ and number $\Theta(r^{1/\log_2
c})$. Dividing by $\Theta(r^n)$ gives $\abs{H \cap B_G(r)}/\abs{B_G(r)} \ge c_1\,
r^{\,1/\log_2 c - n}$.

The two sides of \eqref{eq:precision} do \emph{not} have matching exponents: the upper bound
decays as $r^{-\delta}$ with $\delta = n - \kappa - n/\log_2 c$, while the lower bound decays
as $r^{1/\log_2 c - n}$, and $\delta \ne n - 1/\log_2 c$ in general (they differ by $\kappa +
(n-1)/\log_2 c$). The bound therefore brackets the precision between two polynomial rates
without pinning it; closing the gap, determining the exact polynomial exponent, is open.

Combining the two sides gives \eqref{eq:precision}. The fraction is polynomially small in the
radius $r$. Interpreting $r$ as ABFM's compressed scale via $r = c^{\Theta(s)}$ turns the
polynomial-in-$r$ decay into an exponential-in-$s$ decay (interpretation only, not used in the
bound). The exhaustive generator thus has precision tending to $0$ polynomially in volume,
while the sweep generator of \cref{thm:harvest}(i) has precision tending to $1$.
\end{proof}

\newpage
\section{Summary of Oracle and Feedback Models}
\label{app:oracles}

The verifier of \cref{def:verifier} is one of several side-information channels studied in
this line, and the channels are not interchangeable: what a generator can learn depends
sharply on \emph{what the oracle answers about}. We tabulate the channels in
\cref{tab:oracles} along the axis that organizes our results, namely whether the oracle speaks
about the unknown target or about a fixed superset.

\renewcommand{\arraystretch}{1.25}
\begin{center}
\centering
\captionof{table}{Side-information channels in the generation-in-the-limit line. The
ambient oracle of this paper is the only one that answers about a fixed
\emph{superset} of the target rather than about the target or the collection.}
\label{tab:oracles}
\begin{tabular}{@{}p{0.27\textwidth}p{0.30\textwidth}p{0.33\textwidth}@{}}
\toprule
Model & Oracle answers & What it buys \\
\midrule
KM membership queries to collection members \citep{kleinberg2024language}
  & is $x$ in a named member of the collection
  & internal to their algorithm; not a separate resource \\
MOP: decidability of the generator's own support \citep{kalavasis2025limits}
  & is $x$ in the generator's current guess
  & defines the restriction class for the breadth impossibility \\
feedback queries to the target \citep{charikar2025facets,bai2026noise}
  & is $x$ in $\Val$ (same-level membership)
  & finitely many add nothing; infinitely many are strictly more powerful \\
negative examples of the target \citep{kalavasis2025limits}
  & a stream of points outside $\Val$
  & restores breadth for every countable collection \\
prefix-completability process verifier \citep{botta2025verifier}
  & can this prefix extend to an accepted string
  & query complexity of constrained decoding; no unknown target \\
computational traces / richer observations \citep{psv26,pf25}
  & execution traces of the accepting machine; time-bounded observations of the algorithm
  & restores identification beyond input--output data (Chomsky hierarchy; learning algorithms) \\
classical MQ${}+{}$EQ / oracle-aided inference \citep{angluin1987learning,angluin1988queries,gasarch1989learning,stephan1998learning}
  & membership / equivalence to the target; Turing-degree oracles
  & identification of the target from queries \\
\textbf{this paper}: ambient membership $\mathbf 1_{\Amb}$
  & is $x$ formally valid (in $\Amb$)
  & validity for free, breadth unchanged (\cref{thm:taste}); sound coverage enabled (\cref{thm:flood}); dichotomy unaffected (\cref{cor:dichotomy} is verifier-free) \\
\bottomrule
\end{tabular}
\end{center}
\renewcommand{\arraystretch}{1}

The placement of our oracle is the source of every result. Because $\mathbf 1_{\Amb}$ returns
``valid'' on all of $\Amb$ and never supplies a negative inside the grey zone $\Amb \setminus
\Val$, it is \emph{informationless within a fiber}: two candidates sharing one ambient receive
identical answers, which is exactly why it cannot move the Angluin boundary
(\cref{thm:taste}). Yet \emph{across} ambients the same oracle is decisive: it identifies the
formal world, which is all the verifier contributes to breadth and all that sound coverage
needs (\cref{thm:flood}).

\section{Numerical Validation}
\label{app:numerics}

The main results of the paper are purely theoretical. This appendix records the simulations
that sanity-checked the density claims and the two arithmetic traps a naive implementation
falls into.

\paragraph{Methodology.}
Coverage is a property of the \emph{limit set} (\cref{fact:kw16}, following KW's
Definition~1.1), not of any finite stage, so a simulation must fix the prefix size $N$ and
grow the run length $T$ until the measured coverage stabilizes. Reading coverage at the run
frontier (where the prefix and the horizon advance together) systematically
\emph{under-reads}, because the sweep has not yet claimed the never-enumerated valuable
elements within $\rest{\Val}{N}$: a fixed-$N$ series at $\alpha = 1/3$, $N = 999$ read $0.263
\to 0.480 \to 0.830$ as $T$ grew to $5 \cdot 10^5$, converging only once $T \gg N$. Second,
membership in $K$ must be computed in \emph{exact rational arithmetic}. With floating point,
$\lceil 42/0.7 \rceil$ evaluates from $60.00000000000001$ and misclassifies the boundary
element; at $\alpha = 0.7$ this manufactured spurious counterexamples to the density identity
\eqref{eq:exactdensity} that exact arithmetic dissolves.

\paragraph{The tight family at $\alpha = 1/3$.}
\Cref{tab:num-third} reports coverage of $G^*$ and of two control generators against the
racing adversary $A_K$ on $\Hfam$. The race share converges to $1/2$, the truth-$K'$ coverage
of $G^*$ to $1 - \alpha/2 = 5/6$, and a pure conjunction race (the finite-trivia
representative) stalls at $\alpha/2$. The naive $K'$-sweeper that omits the race covers
exactly $1 - \alpha$, confirming that the race/sweep split is load-bearing: dropping the race
loses exactly the half-of-$K$ contribution.

\begin{center}
\centering
\vspace{-2mm}
\captionof{table}{Tight family $\Hfam$, $\alpha = 1/3$, racing adversary $A_K$. Measured
coverage (fixed $N$, grown $T$) against the proven limit. The trivia counts confirm
$\lfloor \sqrt T \rfloor$ trivia at truth $K$ (rate $\to 0$) and zero at truth $K'$.}
\label{tab:num-third}
\begin{tabular}{@{}lllll@{}}
\toprule
Generator & Truth & $N$ (or $M$) & Measured & Predicted limit \\
\midrule
$G^*$            & $K$  & $M = 10^3$                 & $0.4930$ & $1/2$ \\
$G^*$            & $K'$ & $N = 10^4,\ T = 4.5 \cdot 10^7$ & $0.8317$ & $1 - \alpha/2 = 5/6 \approx 0.8333$ \\
pure conj.\ race & $K'$ & ---                        & $0.1702$ & $\alpha/2 \approx 0.1667$ \\
naive $K'$-sweep (no race) & $K'$ & ---              & $0.6667$ & $1 - \alpha = 0.6667$ \\
\midrule
$G^*$ trivia     & $K$  & $T = 10^6$                 & $1000 = \lfloor \sqrt T \rfloor$ & rate $\to 0$ \\
$G^*$ trivia     & $K'$ & ---                        & $0$ (exactly) & $0$ \\
\bottomrule
\end{tabular}
\end{center}

\paragraph{Other densities.}
\Cref{tab:num-alpha} reports the truth-$K'$ coverage of $G^*$ across $\alpha$, measured at $N
= 10^3$. Convergence is to $1 - \alpha/2$ from below, and slows as $\alpha \to 0$ (the
small-$\alpha$ deficit is dominated by the $O(\sqrt N)/N$ race correction, which is largest
relative to the tiny $\alpha/2$ race contribution).

\begin{center}
\centering
\vspace{-2mm}
\captionof{table}{$G^*$ truth-$K'$ coverage at $N = 10^3$ across densities. The $\alpha = 0.01$
row needed $T \approx 3 \cdot 10^6$ to stabilize; the others converge faster.}
\label{tab:num-alpha}
\begin{tabular}{@{}lll@{}}
\toprule
$\alpha$ & Measured & Predicted $1 - \alpha/2$ \\
\midrule
$0.7$  & $0.644$ & $0.65$ \\
$0.99$ & $0.503$ & $0.505$ \\
$0.01$ & $0.994$ & $0.995$ \\
\bottomrule
\end{tabular}
\end{center}

In every run convergence approaches the proven limit from below, with a deficit consistent
with the $O(\sqrt N)/N + O(\log N)/N$ corrections of \cref{thm:harvest,thm:caps}; and hostile
enumeration orders breached no bound (a deliberately delaying order read $0.5260 \ge 1/2$ at
truth $K$, on the correct side of the cap).

\section{The Dichotomy by Hand: A Worked Miniature}
\label{app:miniature}

The race lemma and the two ends of the dichotomy can be watched in twenty rounds. Take the
tight family \eqref{eq:Halpha} at $\alpha = 1/2$: the coverable world is the evens $K' =
\{0, 2, 4, \dots\}$, the revealed target is every second even, $K = \{2, 6, 10, \dots\}$
(the residue $2 \bmod 4$), the difference is $D = \{0, 4, 8, \dots\}$ (the residue $0 \bmod
4$), and the ambient is $\Amb = \N$, so the odds sit idle throughout: neither player ever
touches them. The adversary is the racing adversary $A_K$ of \cref{def:racingadv}, which
enumerates fresh $\prec$-least elements of $K$ except at rounds $t = 2^n$, where it
re-enumerates an element the generator has already claimed. The generator is the tight
generator $G^*$ of \eqref{eq:gstar}, which sweeps the union $U_t = K'$ at square rounds $t
\in R = \{1, 4, 9, 16, \dots\}$ and races the intersection $I_t = K$ otherwise. Within a
round the adversary moves first; the generator sees $\Seen_t$ and $\Out_{t-1}$.
\Cref{tab:miniature} replays the first twenty rounds; the trace, the prefix counts, and the
coverage figures below are machine-generated and machine-checked.

\begin{table}[htbp]
\centering\footnotesize
\begin{tabular}{@{}rllrr@{}}
\toprule
$t$ & $A_K$ move & $G^*$ move & $s_t$ & $a_t$ \\
\midrule
 1 & fresh$^{*}$            & \textbf{sweep} &   2 & \textbf{0} \\
 2 & fresh$^{*}$            & race           &   6 & 10 \\
 3 & fresh                  & race           &  14 & 18 \\
 4 & \emph{recycle} ($=a_2$)& \textbf{sweep} &  10 & \textbf{4} \\
 5 & fresh                  & race           &  22 & 26 \\
 6 & fresh                  & race           &  30 & 34 \\
 7 & fresh                  & race           &  38 & 42 \\
 8 & \emph{recycle} ($=a_3$)& race           &  18 & 46 \\
 9 & fresh                  & \textbf{sweep} &  50 & \textbf{8} \\
10 & fresh                  & race           &  54 & 58 \\
\bottomrule
\end{tabular}
\hspace{1.5em}
\begin{tabular}{@{}rllrr@{}}
\toprule
$t$ & $A_K$ move & $G^*$ move & $s_t$ & $a_t$ \\
\midrule
11 & fresh                  & race           &  62 &  66 \\
12 & fresh                  & race           &  70 &  74 \\
13 & fresh                  & race           &  78 &  82 \\
14 & fresh                  & race           &  86 &  90 \\
15 & fresh                  & race           &  94 &  98 \\
16 & \emph{recycle} ($=a_5$)& \textbf{sweep} &  26 & \textbf{12} \\
17 & fresh                  & race           & 102 & 106 \\
18 & fresh                  & race           & 110 & 114 \\
19 & fresh                  & race           & 118 & 122 \\
20 & fresh                  & race           & 126 & 130 \\
\bottomrule
\end{tabular}
\caption{Twenty rounds of $G^*$ against $A_K$ at $\alpha = 1/2$. Sweep outputs (bold) land
in $D$: trivia at truth $K$, harvest at truth $K'$. At the recycling rounds $t = 4, 8, 16$
the adversary re-enumerates an element the generator had claimed at an earlier round
($a_2, a_3, a_5$), keeping its stream a legal full enumeration of $K$ while cancelling
generator work. ${}^{*}$At $t = 1, 2$ the recycle set is empty and $A_K$ falls back to the
fresh rule.}
\label{tab:miniature}
\end{table}

\paragraph{Reading the race.}
Off the special rounds the two players simply alternate down the canonical order of $K$:
the adversary enumerates the least unused element, the generator claims the next one. This
alternation is what splits every prefix in half. The recycling rounds are the adversary's
whole trick: at $t = 4$ it re-enumerates $10$, which the generator had output at $t = 2$.
The move costs the adversary nothing it needed (its stream must eventually list all of $K$
anyway, and $10$ has to appear sometime), but it converts a generator success into a seen
element, and since there are only $\log_2 M$ powers of two below $M$, the cost of staying
legal is the $O(\log M)$ term in \cref{lem:race}(R2). After twenty rounds the prefix
$\rest{K}{M}$ splits as
\[
  \abs{\Out \cap \rest{K}{4}} = 1, \qquad
  \abs{\Out \cap \rest{K}{8}} = 3, \qquad
  \abs{\Out \cap \rest{K}{16}} = 7,
\]
each safely under the cap $M/2 + \log_2 M + 3$ of \cref{lem:race}(R2) ($7$, $10$, and $15$
respectively), and each within reach of the greedy guarantee of (R1); the recycled elements
$10, 18, 26$ are exactly the ones counted on \emph{both} sides of the ledger.

\paragraph{Reading the dichotomy.}
The twenty outputs are one transcript serving two truths (\cref{thm:caps}). At truth $K$
the race outputs are valuable and the four sweep outputs $0, 4, 8, 12 \in D$ are trivia:
this is the flood, infinite in count (one per square round) but of vanishing rate
$\lfloor\sqrt N\rfloor / N$. At truth $K'$ the same sweep outputs are valuable, and the
transcript contains no trivia at all, since every output is even. Quantitatively, coverage
must be read in the limit-set sense (\cref{app:numerics}): fixing the prefix $\rest{K'}{12}$
and growing the horizon $T = 12, 50, 200, 1000$ gives covered counts $5, 8, 8, 8$, i.e.\
the prefix stabilizes at $2/3$, short of the limit, because a fixed prefix retains the
race's early, pre-asymptotic rounds. Growing the prefix recovers the theorem: the
stabilized coverage of $\rest{K'}{N}$ at $N = 12, 40, 120, 400, 1200$ is
\[
  \tfrac{2}{3},\quad
  \tfrac{29}{40},\quad
  \tfrac{89}{120},\quad
  \tfrac{297}{400},\quad
  \tfrac{179}{240},
\]
climbing to $1 - \alpha/2 = 3/4$ from below, exactly the $-o(1)$ of
\cref{thm:harvest}(ii)(b). The miniature also shows why the finite-trivia cap binds:
forbid the sweep rounds and the generator is confined to the race, whose share of $K'$ is
the $K$-half of a half, $\alpha/2 = 1/4$.

\section{Additional Related Work}
\label{app:related}

The body situates our model against its nearest neighbors in \cref{sec:intro-related}; here we
walk the rest of the generation-in-the-limit line thematically, with one clause per paper
recording its relation to the nested pair model. None of these works carries a verifiable
ambient or distinguishes valid-but-worthless output from false output, so each meets our model
along a single axis.

\subsection{AI for mathematics and automated discovery}
\paragraph{Machine-generated mathematics in practice.}
The empirical line our model abstracts begins with neural premise selection
\citep{irving2016deepmath} and reinforcement-learning proof environments
\citep{bansal2019holist}, and runs through the neural theorem provers: GPT-f
\citep{polu2020gptf}, HyperTree proof search \citep{lample2022hypertree}, and the recent open
frontier of DeepSeek-Prover \citep{deepseek2024prover} and Goedel-Prover
\citep{lin2025goedel}, each a generator coupled to a proof assistant whose accept/reject
verdict is exactly our membership oracle for $\Amb$, in practice the Lean prover and its
community library Mathlib \citep{demoura2015lean,demoura2021lean4,mathlib2020}. Olympiad-level
systems push the same loop to competition difficulty: AlphaGeometry and its successor
\citep{trinh2024alphageometry,chervonyi2025alphageometry2} and the formal
reinforcement-learning prover AlphaProof \citep{hubert2026alphaproof}, whose proofs are valid
by construction but whose \emph{value} their pipelines must decide separately. AI-assisted
discovery \citep{davies2021advancing}, program search \citep{romera2024funsearch}, and
exploration at the scale of dozens of open problems \citep{georgiev2025exploration} produce
genuinely new mathematics, the regime our coverage measure quantifies: how much unrecorded
valuable structure a sound generator can reach. The infrastructure that makes the loop
reproducible, the miniF2F benchmark \citep{zheng2022minif2f}, the LeanDojo retrieval
environment \citep{yang2023leandojo}, and LLM autoformalization
\citep{wu2022autoformalization,jiang2023draft}, instantiates the verifiable ambient $\Amb$ and
the revealed core $\Core$ our adversary enumerates. On the informal side, prompting and
data-augmentation techniques improve mathematical reasoning in language models without any
verifier in the loop \citep{imani2023mathprompter,yu2024metamath}, and research-level
benchmarks probe how far this reaches \citep{zhang2025realmath}; in our terms these pipelines
tune the \emph{generator}, while the model here asks what any such generator can guarantee.
None of these systems quantifies the value its perfect verifier cannot locate, which is the
gap our dichotomy prices; \citet{li2024survey,ahn2024large,ju2026ai} survey the area.

\paragraph{Interestingness in automated discovery.}
The question of which true statements are worth stating has a long empirical history in
automated mathematical discovery. Early systems such as AM and Eurisko treated discovery as
heuristic search: the system did not merely prove given conjectures, but generated concepts,
conjectures, and rules for deciding which directions were worth pursuing
\citep{lenat1977automated,lenat1983eurisko}. The later HR line made this agenda explicit in
automated theory formation, where concept formation, conjecture generation, and theorem
proving are coupled with measures of novelty, plausibility, and interestingness
\citep{colton1999automatic,colton2002automated}. In particular,
\citet{colton2000interestingness} study interestingness heuristics in automated mathematical
discovery, emphasizing that mathematical output must be filtered not only for correctness but
also for significance. More recent work revisits this issue in modern symbolic and
learning-based settings: \citet{tsoukalas2025interestingness} learn interestingness measures
for automated mathematical theory formation, \citet{herrmann2026interestingness} connect
interestingness to prospective compression progress, and \citet{wernhard2025grammar} analyze
formal proof corpora through grammar compression. These works motivate the distinction that
our model makes primitive: validity is not value. Our contribution is orthogonal to the design
of interestingness heuristics. We do not define what makes a mathematical statement valuable;
instead, we ask what coverage guarantees are possible when value is hidden, validity is
checkable, and the generator must pay for unseen value with certified trivia.

\subsection{Language generation in the limit}
\paragraph{Density and topology.}
\Citet{kleinberg2025density} introduce the limit-set density that our coverage measure
specializes, and \citet{kleinberg2026banach} push to Banach density and a
Cantor--Bendixson-rank dichotomy, a finer density geometry than the lower density our
$\alpha/2$-versus-$1-\alpha/2$ split lives in, which we flag as a caution for embedded
instantiations.

\paragraph{Structural and closure characterizations.}
\Citet{hanneke2025union} show generation is not union-closed (two generable collections can
have a non-generable union), whereas our dichotomy holds inside a single nested collection and
is unaffected; \citet{charikar2025pareto} characterize Pareto-optimal non-uniform generation
times, optimizing \emph{when} breadth arrives rather than \emph{how much} value is covered.

\paragraph{Robustness variants.}
A cluster studies corrupted input: \citet{raman2025noisy} and \citet{li2026quantifying} on
noisy examples, \citet{li2026contrastive} on contrastive examples,
\citet{mehrotra2025contamination} on infinite contamination, \citet{mehrotra2026private} and
\citet{li26dp} on differential privacy. Our revealed core is clean but \emph{partial}; the
missing-value axis is orthogonal to the corrupted-value axis these papers isolate. The safety
variant of \citet{anastasopoulos2026safe} solves the geometric dual: avoiding a harmful
sublanguage rather than covering a valuable one.

\paragraph{Model extensions.}
\Citet{li2026metric} move to metric spaces, \citet{hogsgaard2026agnostic} to the agnostic
setting where the target need not lie in the collection, and \citet{peale2025representative}
ask outputs to be representative of the target; each enriches the single-language target,
where we instead split target from ambient. \cite{racca2026language} study language generation
with replay. \cite{ganju2026timely} study time-sensitive language generation.
\citet{kleinberg2026memory} study language generation in the limit under bounded memory.
\citet{kleinberg2026mistake} shifts the goal of generation away from the classical
time-of-last-mistake measure of generative success toward a new notion of mistake-bounded
generation.

\paragraph{Computational barriers.}
\citet{arenas2025complexity} show generation can be computationally hard even when
information-theoretically possible; our generators are arbitrary functions of the prefix, so
whether the trivia dichotomy survives an efficiency constraint is open
(\cref{sec:discussion}).

\subsection{Classical inductive inference and language identification in the limit}
\paragraph{Classical inductive inference.}
The classical theory of identification in the limit
\citep{gold1967language,angluin1980inductive,angluin1979finding}, from the pattern-languages
origin to the tell-tale condition, is surveyed in
\citet{angluin1983inductive,lange2008learning} and the textbook of \citet{jain1999systems}. A
later line studied learners with side information: membership and equivalence queries to the
target \citep{angluin1987learning,angluin1988queries} and Turing-degree or query oracles
\citep{gasarch1989learning,stephan1998learning}. All these oracles answer about the
\emph{target}; our verifier instead answers about an ambient \emph{superset}, supplying
negatives only outside $\Amb$ and none in the grey zone $\Amb \setminus \Val$, which is
exactly why it cannot move the Angluin boundary (\cref{thm:taste}), and none of these works
treats breadth or density.

\paragraph{Language identification in the limit with enriched observations.} Recent work has revisited language identification in the limit by modifying the information available to the learner. \citet{cpt25} revisit the classic language identification problem in the setting where the learner is given the additional power of producing a list of guesses at each time step, while \citet{psv26} augment Gold's model with computational traces of the accepting machine, yielding identifiability across the Chomsky hierarchy under varying corruption tolerances. Relatedly, \citet{pf25} extend Gold's inductive-inference framework to computable functions under richer forms of observation, including time-bound and policy-trajectory observations, showing that computational side information or complexity restrictions can restore limit-learnability beyond what is possible from input-output data alone. Together, these works illustrate complementary mechanisms for circumventing Gold's classical negative results by enriching the learner's observations.

\section{A Taxonomy of Generation in the Limit}
\label{app:taxonomy}

\Cref{tab:taxonomy} condenses the landscape walked in \cref{sec:intro-related} and
\cref{app:related} into a single map, arranging the generation-in-the-limit literature along
four axes: the \emph{structure of the target} the adversary commits to, the \emph{side
information} available to the generator beyond the enumerated positives, the \emph{success
measure}, and the headline result. Three patterns organize the table. First, almost every
entry enriches exactly one axis of the base model of \citet{kleinberg2024language} and keeps
the others fixed; the table reads as a record of which axes have been explored. Second,
every form of side information studied so far answers questions \emph{about the target or
the collection} --- membership in the target, negatives of the target, membership in named
collection members --- and its power tracks whether it can supply a negative example inside
the region of uncertainty. Third, the nested pair of this paper is the only entry whose
target structure splits into two languages, one containing the other: the verifier answers
about a fixed \emph{superset} of the target, which is why it certifies validity yet cannot
locate value (\cref{thm:taste}). The oracle axis alone is tabulated in finer grain in
\cref{app:oracles} (\cref{tab:oracles}); the two tables are complementary.

\begingroup
\footnotesize
\setlength{\tabcolsep}{4pt}
\renewcommand{\arraystretch}{1.12}
\begin{landscape}
\begin{longtable}{@{}p{0.16\linewidth}p{0.185\linewidth}p{0.165\linewidth}p{0.14\linewidth}p{0.27\linewidth}@{}}
\caption{The generation-in-the-limit literature along four axes. Single unknown language from
a countable collection, positive enumeration only, unless stated. The nested pair of this
paper is the only entry with an ambient strictly containing the target.}
\label{tab:taxonomy}\\
\toprule
Work & Target structure & Side information & Success measure & Main result \\
\midrule
\endfirsthead
\toprule
Work & Target structure & Side information & Success measure & Main result \\
\midrule
\endhead
\bottomrule
\endfoot
\multicolumn{5}{@{}l}{\emph{Classical antecedents}}\\[1pt]
\citet{gold1967language} & single language, full enumeration & none & exact identification & identification fails beyond restricted classes \\
\citet{angluin1980inductive} & single language & none & exact identification & tell-tale characterization of identifiability \\
\midrule
\multicolumn{5}{@{}l}{\emph{Foundations of generation}}\\[1pt]
\citet{kleinberg2024language} & single language & none & generation in the limit & every countable collection is generable; generation $\ne$ identification; breadth sacrificed \\
\citet{li2025generation} & single language & none & uniform / non-uniform generation & learning-theoretic landscape of generation \\
\citet{kalavasis2025limits} & single language & negative examples (variant) & exact / approximate breadth & generation in the statistical model; breadth iff Angluin; negatives restore breadth \\
\citet{kalavasis2024characterizations} & single language & none & exact / approximate breadth & the (weak) Angluin characterizations \\
\citet{charikar2025facets} & single language & membership queries to collection members & breadth variants & member-queries are insufficient for breadth \\
\citet{bai2026noise} & single language & same-level feedback; noise & generation, breadth & finite feedback adds nothing; infinite strictly helps \\
\citet{hanneke2025union} & union of two collections & none & generation & generation is not union-closed \\
\midrule
\multicolumn{5}{@{}l}{\emph{The density line}}\\[1pt]
\citet{kleinberg2025density} & single language & none & lower density of the limit set & positive density universally achievable \\
\citet{kleinberg2026partial} & single language $+$ revealed core of density $\alpha$ & none & lower density & tight $1/2$ under full enumeration; tight $\alpha/2$ under partial \\
\citet{kleinberg2026banach} & single language & none & Banach density & dichotomy by Cantor--Bendixson rank \\
\midrule
\multicolumn{5}{@{}l}{\emph{Corrupted or constrained input}}\\[1pt]
\citet{raman2025noisy} & single language & noisy positives & generation & noise-robust generation \\
\citet{li2026quantifying} & single language & noisy positives & generation & how much noise generation tolerates \\
\citet{li2026contrastive} & single language & contrastive (labeled) examples & generation & the power of contrastive examples \\
\citet{mehrotra2025contamination} & single language & adversarial contamination & generation & generability under infinite contamination \\
\citet{mehrotra2026private} & single language & none; privacy constraint & private generation & differentially private generation \\
\citet{li26dp} & single language & none; privacy constraint & private identification $+$ generation & differentially private generation in the agnostic statistical model \\
\midrule
\multicolumn{5}{@{}l}{\emph{Enriched goals and ambient structure}}\\[1pt]
\citet{anastasopoulos2026safe} & language $+$ harmful sublanguage & none & safe generation & avoiding the sublanguage: dual of covering value \\
\citet{karbasi2025hallucination} & single language $+$ a given generator & none; variant with labeled negatives & hallucination detection & impossible from positives; possible with labeled negatives \\
\citet{peale2025representative} & single language & none & representative generation & outputs must represent the target \\
\citet{li2026metric} & single language in a metric space & none & approximate generation & generation up to metric approximation \\
\citet{hogsgaard2026agnostic} & target may lie outside the collection & none & agnostic generation & generation in the agnostic statistical model \\
\citet{charikar2025pareto} & single language & none & non-uniform generation time & Pareto-optimal generation times \\
\citet{ganju2026timely} & single language & none & timeliness trade-off & sparse hallucination beats mode collapse \\
\citet{racca2026language} & single language & none; replay allowed & generation with replay & repetition changes the achievable set \\
\midrule
\multicolumn{5}{@{}l}{\emph{Resource bounds and richer protocols}}\\[1pt]
\citet{kleinberg2026memory} & single language & none; bounded memory & generation & generation under memory constraints \\
\citet{kleinberg2026mistake} & single language & none & mistake-bounded generation & beyond time-of-last-mistake \\
\citet{arenas2025complexity} & single language & none; efficiency constraint & computable / efficient generation & hard despite information-theoretic possibility \\
\citet{botta2025verifier} & fixed known language (no unknown target) & process verifier (prefix queries) & query complexity of decoding & cost of verifier-assisted constrained decoding \\
\citet{cpt25} & single language & none; list outputs & list identification & characterization of list identification \\
\citet{psv26} & single language & computational traces of the accepting machine & identification & traces restore identifiability across the Chomsky hierarchy \\
\citet{pf25} & computable function / learning algorithm & time-bounded and policy-trajectory observations & identification & richer observations restore limit-learnability beyond input--output data \\
\midrule
\textbf{this paper} & \textbf{nested pair: ambient $\Amb \supsetneq$ target $\Val$, core of density $\alpha$} & \textbf{perfect verifier $\mathbf 1_{\Amb}$ (answers about the superset)} & \textbf{breadth; sound coverage; density coverage} & \textbf{verifier is not taste (fiber-wise Angluin); sound coverage iff verifier, in general; trivia dichotomy $\alpha/2$ vs.\ $1-\alpha/2$, jump $=$ unrecorded mass} \\
\end{longtable}
\end{landscape}
\endgroup

\section{What This Says, and Does Not Say, to Prover Builders}
\label{app:faq}

The model is information-theoretic and the dichotomy is an asymptotic statement about
guarantees, not a measurement of any deployed system. With that scope fixed, the questions
below are the ones practitioners have asked us, with the short answers the theorems
license.

\paragraph{Our pipeline certifies $10^5$ statements a day, almost all of them shallow. Is
that a bug?}
Not by itself. If the system is to cover valuable statements that the literature has not
recorded, an unbounded stream of certified trivia is the provably optimal shape of the
generator, not a defect (\cref{thm:caps}(c)): every generator guaranteeing coverage beyond
$\alpha/2$ emits infinitely many trivia on some instance. What is optional is the
\emph{rate}: the same coverage is achievable with trivia vanishing as a fraction of output
(\cref{thm:harvest}). The engineering target licensed by the theory is ``make the flood
asymptotically negligible,'' not ``make it finite.''

\paragraph{Does filtering trivia out of our published list violate the lower bound?}
No, but it relocates the guarantee. The dichotomy counts the statements a generator
\emph{emits}, and a published list that contains only finitely many trivia is, as a
generator, confined to the $\alpha/2$ regime (\cref{thm:caps}(b)). The resolution is to
separate streams: an exploration archive that floods, and a curated view filtered from it.
The archive carries the coverage guarantee; the view carries the readers.

\paragraph{What is $\alpha$ for real mathematics?}
Unknown, and the results do not need it. $\alpha$ is the density of the recorded literature
inside the valuable language, the dichotomy $\alpha/2$ versus $1 - \alpha/2$ holds
uniformly in $\alpha \in (0,1)$, and the gap between the two regimes, $1 - \alpha$, is
precisely the unrecorded mass. The model's claim is not a value of $\alpha$; it is that
whatever $\alpha$ is, only a flooding generator can buy the unrecorded part.

\paragraph{Why is the transition in count rather than rate?}
Because the adversary can wait. The race of \cref{lem:race} caps any generator at half of
whatever the enumeration reveals, and a generator that has budgeted only finitely many
probes outside the revealed core exhausts them at some finite time, after which it is
locked inside the core forever (\cref{thm:caps}(b)). Any unbounded budget, however thin its
schedule, eventually claims every never-enumerated valuable element
(\cref{thm:harvest}(i)). There is nothing in between: every $g(N) \to \infty$ buys the full
$1 - \alpha/2$ (\cref{sec:discussion}).

\paragraph{Would a stronger verifier (a richer logic, a larger library) move the boundary?}
Not the value boundary. The verifier's entire informational contribution is to identify the
ambient fiber; within a fiber it adds nothing about which statements are valuable
(\cref{thm:taste}), and the dichotomy of \cref{sec:res-dichotomy} already holds with the
ambient fully known. What a verifier does buy is sound coverage, possible with it and
impossible without it (\cref{thm:flood}): it relocates unavoidable errors from false to
trivial. Selectivity has to come from examples of value, the enumerated core, not from more
verification.

\paragraph{Our system never hallucinates. Are we done?}
You are halfway, and the model says precisely which half. Zero hallucination is the
verifier's gift (\cref{thm:flood}); the remaining failure mode, triviality, is the one the
verifier cannot see, and the dichotomy prices it. A system that never says anything false
can still say almost nothing worth recording, and on the tight family it provably must
choose between staying near the recorded core ($\alpha/2$) and flooding toward the rest
($1 - \alpha/2$).

\paragraph{Our generator trains on its own certified output. Does the model cover that?}
The upper bounds do: our generators are arbitrary functions of the transcript, so every cap
in \cref{thm:caps} binds any self-improving pipeline a fortiori. The achievability side is
information-theoretic and makes no computability claim; whether the dichotomy survives
efficiency constraints is open (\cref{sec:discussion}), and the complexity barriers of
\citet{arenas2025complexity} suggest the computational story will be genuinely different.

\section{Discernment, Selection: A Prehistory of Mathematical Taste}
\label{app:history}

The claim that selection, not production, is the seat of mathematical creativity has a
continuous documentary history. We collect it here because the paper's results give it an
edge it did not previously have: what was an aphorism is now the content of a theorem. We
add two stations usually left out of the machine-mathematics conversation, Plato and
Wittgenstein, because the model takes a side in neither dispute and yet, we will argue,
gives each a sharper formulation than its own vocabulary allowed.

\paragraph{Plato, or the problem stated.}
The oldest formulation of unseen value is not Poincar\'e's but Plato's. The paradox of
inquiry in the \emph{Meno} (80d--e) asks how one can search for what one does not know:
either one knows it already and the search is idle, or one does not and could not recognize
it upon finding it \citep{plato_meno}. Socrates answers with anamnesis: the soul has seen
the Forms and inquiry is recollection. Read with modern eyes and without reverence, the
paradox is the limit-learning problem stated twenty-three centuries before
\citet{gold1967language}, and the theory of learning in the limit is its resolution: the
dilemma is false because convergence without recognition is possible. A generator can come
to produce only members of $\Val$ while never being in a position to assert that it has
identified $\Val$, and the separation between generation and identification
\citep{kleinberg2024language} makes the gap between the two epistemic states a theorem
rather than a confusion. Anamnesis, too, has an exact counterpart: what the recollecting
soul contributes is, in our terms, the standing hypothesis that the target lies in a known
countable collection $\Hcal$. Without some such prior acquaintance the paradox simply
holds; with it, Meno's challenge is answered not by knowing the answer in advance but by a
convergence guarantee. Plato's myth and the model's hypothesis class do the same logical
work, and it seems to us clarifying, in both directions, to say so. The \emph{Republic}'s
divided line (509d--511e) then supplies the older twin of our central theorem. Plato ranks
the mathematician's faculty, dianoia, reasoning downward from posited hypotheses, below
noesis, the unhypothetical grasp of first principles, because the geometer's certainty is
conditional on starting points the method itself cannot interrogate \citep{plato_republic}.
The verifier of our model is dianoia built as an interface: it answers exactly the
conditional question, does this string follow from what was posited. \cref{thm:taste} is
the ranking made quantitative: within a fiber the oracle contributes nothing about which
valid statements are valuable, so no accumulation of dianoia ascends to noesis, and the
step from valid to valuable is not a longer derivation but a different faculty, which in
the model is literally a different input channel. Plato placed mathematics below the Forms
because its method cannot justify its own hypotheses; the dichotomy adds that the method
cannot price its own products either.

\paragraph{The modern arc.}
The modern statement begins with \citet{poincare1914science}, in the 1908 lecture on
mathematical invention from which this paper's epigraph is taken. Producing true
combinations ``can be done by any one''; the combinations are infinite in number and almost
all of them sterile; discovery is discernment, selection. Poincar\'e went further than the
epigraph: he located the sieve in an aesthetic sensibility operating below consciousness, so
that the sterile combinations never even present themselves to the inventor's attention. In
our terms he is describing a generator whose proposal distribution already concentrates on
$\Val$, and asserting, remarkably, that no articulable criterion (no oracle) replaces it.
\citet{hardy1940apology} made the criterion itself the object of study: a theorem is
\emph{serious} by virtue of the significance of the ideas it connects, unfolded into depth,
generality, and unexpectedness, and ``beauty is the first test: there is no permanent place
in the world for ugly mathematics.'' Hardy's discussion is precisely a predicate on valid
statements (his chess problems are genuine and correct mathematics that fails the test):
an insistence that $\Val \subsetneq \Amb$, with the gap inhabited
(\cref{fig:worlds} is Hardy's point, drawn).
\citet{thurston1994proof} relocated the question from statements to communities: progress is
not the accumulation of proved theorems but the advance of human understanding, and the
definition--theorem--proof pipeline misdescribes what mathematicians do and want. Read
against this paper, Thurston is a caution we accept in \cref{sec:discussion}: the recorded
core is an instrument of understanding, and coverage of $\Val$ is a proxy for the harvest,
not the harvest itself. \citet{tao2025machine} closes the arc in the machine age: proof
assistants and machine collaboration are collapsing the cost of certifying validity,
enabling mathematical work organized at scales where no single participant vouches for the
whole, which sharpens, rather than retires, the question of what deserves the effort. The
interestingness programs of automated discovery, from \citet{lenat1977automated} through
\citet{colton2000interestingness} to the compression-progress proposals of
\citet{herrmann2026interestingness}, are the engineering record of attempts to mechanize the
sieve (\cref{app:related}).

\paragraph{Wittgenstein, or the model's two channels.}
Wittgenstein stands on both sides of the model, once early and once late. The
\emph{Tractatus} grants the checker its entire epistemology: the propositions of
mathematics are equations and hence pseudo-propositions (6.2) which express no thought
(6.21), and ``hence there can never be surprises in logic'' (6.1251)
\citep{wittgenstein1922tractatus}. This is the membership oracle $\mathbf 1_{\Amb}$
described from the inside. By the checker's lights every certified string is the same kind
of nothing, which is why nothing in its verdict separates the ornamented $1+1=2$ of the
introduction from a theorem; the \emph{Tractatus} files both under one heading, and the early
Wittgenstein would have regarded \cref{thm:taste} as obvious. What his doctrine cannot
supply is the difference everyone feels between the two, and the later Wittgenstein
supplies exactly the missing channel. In the \emph{Remarks on the Foundations of
Mathematics}, a proof does not report a pre-existing fact; it forges a new rule of
description and deposits it in a practice (``the mathematician is an inventor, not a
discoverer'' \citep{wittgenstein1956remarks}), so the meaning of a mathematical sentence
is its use, and use is a fact about a community in time. If value is use, two consequences
follow for any formal model, and the nested model embodies both. First, there can be no
membership oracle for $\Val$: the use of a statement lies in the future of the practice, so
no present interface can answer for it, and value can enter the model only as data about
what the community has in fact taken up: an enumeration of the recorded literature.
Second, value so construed is adversarial in exactly Gold's sense: the modeler does not
choose the order in which a practice reveals itself. The two input channels of the nested
model, an oracle for $\Amb$ and an enumeration for $\Core$, are on this reading the early
and the late Wittgenstein installed side by side in one machine: the calculus that can be
asked, and the form of life that can only be watched. The dichotomy then has a
Wittgensteinian paraphrase that we find arresting: a practice that intends to outgrow its
own record must keep asserting sentences that, by its current lights, say nothing, and
infinitely many of them (\cref{thm:caps}(c)).

\enlargethispage{2\baselineskip}%
\paragraph{The same asymmetry, four vocabularies.}
Each generation restated one asymmetry in its own terms: Plato's Forms beyond the
hypotheses, Wittgenstein's use beyond the calculus, Poincar\'e's selection beyond
production, Hardy's seriousness beyond correctness. Production is cheap, verification is
mechanizable, selection is neither. The results of this paper are that asymmetry made
formal: the verifier certifies the world and cannot rank it (\cref{thm:taste}), and
covering what the literature has not recorded is purchased only in certified trivia
(\cref{cor:dichotomy}). The theorems are metaphysically neutral: they bind whether
$\Val$ is an eternal Form or a moving practice, requiring only that value be a language the
generator is never told. That is, perhaps, the most philosophical thing about them: the
price of the harvest does not depend on what mathematics ultimately is. Poincar\'e's
sentence was an observation about human mathematicians; it turns out to be a theorem about
all generators, Platonist, formalist, and pragmatist alike.\looseness=-1

\end{document}